\theoremstyle{plain}
\newtheorem{theorem}{Theorem}[section]
\newtheorem{proposition}[theorem]{Proposition}
\newtheorem{lemma}[theorem]{Lemma}
\newtheorem{corollary}[theorem]{Corollary}
\theoremstyle{definition}
\newtheorem{definition}[theorem]{Definition}
\theoremstyle{remark}
\newtheorem{remark}[theorem]{Remark}
\newcommand{\nbb}{\mathbb{N}}
\newcommand{\bz}{\mathbf{z}}
\newcommand{\bw}{\mathbf{w}}
\newcommand{\fcal}{\mathcal{F}}
\newcommand{\acal}{\mathcal{A}}
\newcommand{\ibb}{\mathbb{I}}
\newcommand{\xcal}{\mathcal{X}}
\newcommand{\ucal}{\mathcal{U}}
\newcommand{\vcal}{\mathcal{V}}
\newcommand{\ccal}{\mathcal{C}}
\newcommand{\hcal}{\mathcal{H}}
\newcommand{\bx}{\mathbf{x}}
\newcommand{\by}{\mathbf{y}}
\newcommand{\bu}{\mathbf{u}}
\newcommand{\zcal}{\mathcal{Z}}
\newcommand{\ncal}{\mathcal{N}}
\newcommand{\tcal}{\mathcal{T}}
\newcommand{\dcal}{\mathcal{D}}
\newcommand{\gcal}{\mathcal{G}}
\newcommand{\ebb}{\mathbb{E}}
\newcommand{\kk}{K}
\newcommand{\bv}{\mathbf{v}}
\newcommand{\fat}{\mathrm{fat}}
\newcommand{\rbb}{\mathbb{R}}
\numberwithin{equation}{section}
\title{Generalization Analysis for Contrastive Representation Learning}
\author{%
  Yunwen Lei$^{1}$\quad Tianbao Yang$^{2}$\quad Yiming Ying$^{3}$\footnote{Corresponding author}\quad Ding-Xuan Zhou$^{4}$\\[1.2pt]
  $^1$Department of Mathematics, Hong Kong Baptist University\\[1.2pt]
  $^2$Department of Computer Science and Engineering, Texas A\&M University\\[1.2pt]
  $^3$Department of Mathematics and Statistics, State University\\ of New York at Albany\\[1.2pt]
  $^4$School of Mathematics and Statistics, University of Sydney\\[1.2pt]
  \texttt{yunwen@hkbu.edu.hk} \quad \texttt{tianbao-yang@tamu.edu} \quad \texttt{yying@albany.edu} \\[1pt] \texttt{dingxuan.zhou@sydney.edu.au}
}
\begin{document}

\maketitle

\linespread{1.3}

\begin{abstract}

Recently, contrastive learning has found impressive success in advancing the state of the art in solving various machine learning tasks. However, the existing generalization analysis is very limited or even not meaningful. In particular, the existing generalization error bounds depend linearly on the number $k$ of negative examples while it was widely shown in practice that choosing a large $k$ is necessary to guarantee good generalization of contrastive learning in downstream tasks.  In this paper, we establish novel generalization bounds for contrastive learning which do not depend on  $k$, up to logarithmic terms. Our analysis uses structural results on empirical covering numbers and Rademacher complexities to exploit the Lipschitz continuity of loss functions. For self-bounding Lipschitz loss functions, we further improve our results by developing optimistic bounds which imply fast rates in a low noise condition. We apply our results to learning with both linear representation and nonlinear representation by deep neural networks, for both of which we derive Rademacher complexity bounds to get improved generalization bounds.
\end{abstract}

\section{Introduction}

The performance of machine learning (ML) models often depends largely on the representation of data, which motivates a resurgence of contrastive representation learning (CRL) to learn a representation function $f:\xcal\mapsto\rbb^d$ from unsupervised data~\citep{chen2020simple,khosla2020supervised,he2020momentum}.
The basic idea is to pull together similar pairs $(\bx,\bx^+)$ and push apart disimilar pairs $(\bx,\bx^-)$ in an embedding space, which can be formulated as minimizing the following objective~\citep{chen2020simple,oord2018representation}
\[
\!\ebb_{\bx,\bx^+,\{\bx_i^{-}\}_{i=1}^k}\!\log\!\Big(1+\sum_{i=1}^k\exp\Big(-f(\bx)^\top\big(f(\bx^+)-f(\bx_i^-)\big)\!\Big)\!\Big),
\]
where $k$ is the number of negative examples.
The hope is that the learned representation $f(\bx)$ would capture the latent structure and be beneficial to other downstream learning tasks~\citep{arora2019theoretical,tosh2021contrastive}. 
CRL has achieved impressive empirical performance in advancing the state-of-the-art performance in various domains such as computer vision~\citep{he2020momentum,caron2020unsupervised,chen2020simple,caron2020unsupervised} and natural language processing~\citep{brown2020language,gao2021simcse,radford2021learning}.

The empirical success of CRL motivates a natural question on theoretically understanding how the learned representation adapts to the downstream tasks, i.e.,

\emph{How would the generalization behavior of downstream ML models benefit from the representation function built from positive and negative pairs? Especially, how would the number of negative examples affect the learning performance?}

\citet{arora2019theoretical} provided an attempt to answer the above questions by developing a theoretical framework to study CRL. They first gave generalization bounds for a learned representation function in terms of Rademacher complexities. Then, they showed that this generalization behavior measured by an unsupervised loss guarantees the generalization behavior of a linear classifier in the downstream classification task.  However, the generalization bounds there enjoy a linear dependency on $k$, which would not be effective if $k$ is large. Moreover, this is not consistent with many studies which show a large number of negative examples~\citep{chen2020simple,tian2020contrastive,henaff2020data,khosla2020supervised} is necessary for good generalization performance. For example, the work~\citep{he2020momentum} used $65536$ negative examples in unsupervised visual representation learning, for which the existing analysis requires $n\geq (65536)^2d$ training examples to get non-vacuous bounds~\citep{arora2019theoretical}. Therefore, the existing analysis does not fully answer the question on the super performance of CRL to downstream tasks as already shown in many applications. \citet{sogclr} has demonstrated the benefits using all negative data for each anchor data for CRL and proposed an efficient algorithms for optimizing global contrastive loss. Therefore, the existing analysis does not fully answer the question on the super performance of CRL to downstream tasks as already shown in many applications.

\begin{table}[tbp]
\centering\renewcommand{\arraystretch}{1.3}
  \begin{tabular}{|c|c|c|}
    \hline
     Assumption & Arora et al.'19 & Ours\\ \hline
     & $O\Big({k \sqrt{d} \over \sqrt{n} }\Big)$ & $ \widetilde{O}\Big({\sqrt{d}\over \sqrt{n}}\Big)$ \\ \hline
     low noise & $O\Big({ k \sqrt{d} \over \sqrt{n} }\Big)$ & $ \widetilde{O}\Big({d\over {n}}\Big)$ \\ \hline
  \end{tabular}
  \caption{Comparison between our generalization bounds and those in \citet{arora2019theoretical} for the logistic loss. Here $d$ is the number of learned features. The notation $\widetilde{O}$ ignores log factors.}
\end{table}

In this paper, we aim to further deepen our understanding of CRL by fully exploiting the Lipschitz continuity of loss functions. Our contributions are listed as follows.

\noindent 1. We develop generalization error bounds for CRL. We consider three types of loss functions: $\ell_2$-Lipschitz loss, $\ell_\infty$-Lipschitz loss and self-bounding Lipschitz loss. For $\ell_2$-Lipschitz loss, we develop a generalization bound with a square-root dependency on $k$ by two applications of vector-contraction lemmas on Rademacher complexities, which improves the existing bound by a factor of $\sqrt{k}$~\citep{arora2019theoretical}. For $\ell_\infty$-Lipschitz loss, we develop generalization bounds which does not depend on $k$, up to some logarithmic terms, by approximating the arguments of loss functions via expanding the original dataset by a factor of $k$ to fully exploit the Lipschitz continuity.  For self-bounding Lipschitz loss, we develop optimistic bounds involving the training errors, which can imply fast rates under a low noise setting. All of our generalization bounds involve Rademacher complexities of feature classes, which preserve the coupling among different features.

\noindent 2. We then apply our general result to two unsupervised representation learning problems: learning with linear features and learning with nonlinear features via deep neural networks (DNNs). For learning with linear features, we consider two regularization schemes, i.e., $p$-norm regularizer and Schatten-norm regularizer. For learning with nonlinear features, we develop Rademacher complexity and generalization bounds with a square-root dependency on the depth of DNNs. To this aim, we adapt the technique in \citet{golowich2018size} by using a different moment generalization function to capture the coupling among different features.

\noindent 3. Finally, we apply our results on representation learning to the generalization analysis of downstream classification problems, which outperforms the existing results by a factor of $k$ (ignoring a log factor).

The remaining parts of the paper are organized as follows. Section \ref{sec:work} reviews the related work, and Section \ref{sec:problem} provides the problem formulation. We give generalization bounds for CRL in Section \ref{sec:gen-error} for three types of loss functions, which are then applied to learning with both linear and nonlinear features in Section \ref{sec:app}. Conclusions are given in Section \ref{sec:conclusion}.

\section{Related Work\label{sec:work}}

The most related work is the generalization analysis of CRL in \citet{arora2019theoretical}, where the authors developed generalization bounds for unsupervised errors in terms of Rademacher complexity of representation function classes. Based on this, they further studied the performance of linear classifiers on the learned features. In particular, they considered the mean classifier where the weight for a class label is the mean of the representation of corresponding inputs. A major result in \citet{arora2019theoretical} is to show that the classification errors of the mean classifier can be bounded by the unsupervised errors of learned representation functions. This shows that the downstream classification task can benefit from a learned representation function with a low unsupervised error.

The above work motivates several interesting theoretical study of CRL. \citet{nozawa2020pac} studied CRL in a PAC-Bayesian setting, which aims to learn a posterior distribution of representation functions. \citet{nozawa2020pac} derived PAC-Bayesian bounds for the posterior distribution and applied it to get PAC-Bayesian bounds for the mean-classifier, which relaxes the i.i.d. assumption. Negative examples in the framework \citep{arora2019theoretical} are typically taken to be randomly sampled datapoints, which may actually have the same label of the point of interest. This introduces a bias in the objective function of CRL, which leads to performance drops in practice. Motivated by this, \citet{chuang2020debiased} introduced a debiased CRL algorithm by building an approximation of unbiased error in CRL, and developed generalization guarantees for the downstream classification.

Several researchers studied CRL from other perspectives. \citet{lee2021predicting} proposed to learn a representation function $f$ to minimize $\ebb_{(X_1,X_2)}[\|X_2-f(X_1)\|_2^2]$, where $X_1,X_2$ are unlabeled input and pretext target. Under an approximate conditional independency assumption, the authors showed that a linear function based on the learned representation approximates the ground true predictor on downstream problems. In a generative modeling setup, \citet{tosh2021contrastiveb} proposed to learn representation functions by a landmark embedding procedure, which can reveal the underlying topic posterior information. \citet{tosh2021contrastive} studied CRL in a multi-view setting with two views available for each datum. Under an assumption on the redundancy between the two views, the authors showed that low-dimensional representation can achieve near optimal downstream performance with linear models.
\citet{haochen2021provable} studied self-supervised learning from the perspective of spectral clustering based on a population augmentation graph, and proposed a spectral contrastive loss. They further developed generalization bounds for both representation learning and the downstream classification. This result is improved in a recent work by developing a guarantee to incorporate the representation function class~\citep{pmlr-v162-saunshi22a}.  There are also recent work on theoretical analysis of representation learning via gradient-descent dynamics~\citep{lee2021predicting,tian2020understanding}, mutual information~\citep{tsai2020demystifying}, alignment of representations~\citep{wang2020understanding}, and causality~\citep{mitrovic2020representation}. CRL is related to metric learning, for which generalization bounds have been studied in the literature~\citep{cao2012generalization}.

\section{Problem Formulation\label{sec:problem}}

Let $\xcal$ denote the space of all possible datapoints.
In CRL, we are given several \emph{similar} data in the form of pairs $(\bx,\bx^+)$ drawn from a distribution $\dcal_{sim}$ and \emph{negative} data $\bx_1^-,\bx_2^-,\ldots,\bx_k^-$ drawn from a distribution $\dcal_{neg}$ unrelated to $\bx$. Our aim is to learn a feature map $f:\xcal\mapsto\rbb^d$ from a class of representation functions $\fcal=\big\{f:\|f(\cdot)\|_2\leq R\big\}$ for some $R>0$, where $\|\cdot\|_2$ denotes the Euclidean norm. Here $d\in\nbb$ denotes the number of features.

We follow the framework in \citet{arora2019theoretical} to define the distribution $\dcal_{sim}$ and the distribution $\dcal_{neg}$. Let $\ccal$ denote the set of all latent classes and for each class $c\in\ccal$ we assume there is a probability distribution $\dcal_c$ over $\xcal$, which quantifies the relevance of $\bx$ to the class $c$. We assume there is a probability distribution $\rho$ defined over $\ccal$. Then we define $\dcal_{sim}(\bx,\bx^+)$ and $\dcal_{neg}(\bx^-)$ as follows
\begin{gather*}
  \dcal_{sim}(\bx,\bx^+)=\ebb_{c\sim\rho}\big[\dcal_c(\bx)\dcal_c(\bx^+)\big], \\
  \dcal_{neg}(\bx^-)=\ebb_{c\sim\rho}\big[\dcal_c(\bx^-)\big].
\end{gather*}
Intuitively, $\dcal_{sim}(\bx,\bx^+)$ measures the probability of $\bx$ and $\bx^+$ being drawn from the same class $c\sim\rho$, while $\dcal_{neg}(\bx^-)$ measures the probability of drawing an un-relevant $\bx^-$.
Let $(\bx_j,\bx_j^+)\sim \dcal_{sim}$ and $(\bx_{j1}^-,\ldots,\bx_{jk}^-)\sim\dcal_{neg},j\in[n]:=\{1,\ldots,n\}$, where $k$ denotes the number of negative examples. We collect these training examples into a dataset
\begin{equation}\label{S}
S=\Big\{(\bx_1,\bx_1^+,\bx_{11}^-,\ldots,\bx_{1k}^-),
(\bx_2,\bx_2^+,\bx_{21}^-,\ldots,\bx_{2k}^-),
\ldots, (\bx_n,\bx_n^+,\bx_{n1}^-,\ldots,\bx_{nk}^-)
\Big\}.
\end{equation}

Given a representation function $f$, we can measure its performance by building a classifier based on this representation and computing the accuracy of the classifier. To this aim, we define a $(\kk+1)$-way supervised task $\tcal$ consisting of distinct classes $\{c_1,\ldots,c_{\kk+1}\}\subseteq \ccal$. The examples for this supervised task are drawn by the following process:

We first draw a label $c\in\tcal=\{c_1,\ldots,c_{\kk+1}\}$ from a distribution $\dcal_{\tcal}$ over $\tcal$, after which we draw an example $\bx$ from $\dcal_c$. This defines the following distribution over labeled pairs $(\bx,c)$:
$
\dcal_{\tcal}(\bx,c)=\dcal_c(\bx)\dcal_{\tcal}(c).
$
Since there is a label for each example, we can build a multi-class classifier $g:\xcal\mapsto\rbb^{\kk+1}$ for $\tcal$, where $g_c(\bx)$ measures the ``likelihood'' of assigning the class label $c$ to the example $\bx$. The loss of $g$ on a point $(\bx,y)\in\xcal\times\tcal$ can be measured by $\ell_s\big(\big\{g(\bx)_y-g(\bx)_{y'}\}_{y'\neq y}\big)$, where $\ell_s:\rbb^{\kk}\mapsto\rbb^+$.
We quantify the performance of a classifier $g$ on the task $\tcal$ by the supervised loss. By minimizing the supervised loss, we want to build a classifier whose component associated to the correct label is largest.
\begin{definition}[Supervised loss]
  Let $g:\xcal\mapsto\rbb^{\kk+1}$ be a multi-class classifier. The supervised loss of $g$ is defined as
  \[
  L_{sup}(\tcal,g):=\ebb_{(\bx,c)\sim\dcal_{\tcal}}\Big[\ell_s\big(\big\{g(\bx)_c-g(\bx)_{c'}\big\}_{c'\neq c}\big)\Big].
  \]
\end{definition}
For CRL, we often consider $g$ as a linear classifier based on the learned representation $f$, i.e., $g(\bx)=Wf(\bx)$, where $W\in\rbb^{(\kk+1)\times d}$. Then the performance of the representation function $f(\bx)$ can be quantified by the accuracy of the best linear classifier on the representation $f(\bx)$:
\[
L_{sup}(\tcal,f)=\min_{W\in\rbb^{(\kk+1)\times d}}L_{sup}(\tcal,Wf).
\]
%

To find a good representation $f$ based on unsupervised dataset $S$, we need to introduce the concept of unsupervised loss functions. Let $\ell:\rbb^k\mapsto\rbb_+$ be a loss function for which popular choices include the hinge loss
\begin{equation}\label{loss-a}
  \ell(\bv)=\max\big\{0,1+\max_{i\in[k]}\{-v_i\}\big\}
\end{equation}
and the logistic loss
\begin{equation}\label{loss-b}
  \ell(\bv)=\log\big(1+\sum_{i\in[k]}\exp(-v_i)\big).
\end{equation}
Let $f(\bx)^\top$ denote the transpose of $f(\bx)$.
\begin{definition}[Unsupervised error]
The population unsupervised error is defined as
\[
L_{un}(f):=\ebb\big[\ell\big(\big\{f(\bx)^\top(f(\bx^+)-f(\bx_i^-))\big\}_{i=1}^k\big)\big].
\]
The empirical unsupervised error with $S$ is defined as
\[
\hat{L}_{un}(f):=\frac{1}{n}\sum_{j=1}^{n}\ell\big(\big\{f(\bx_j)^\top(f(\bx_j^+)-f(\bx_{ji}^-))\big\}_{i=1}^k\big).
\]
\end{definition}

A natural algorithm is to find among $\fcal$ the function with the minimal empirical unsupervised loss, i.e., $\hat{f}:=\arg\min_{f\in\fcal}\hat{L}_{un}(f)$. This function can then be used for the downstream supervised learning task, e.g., to find a linear classifier $g(\bx)=Wf(\bx)$ indexed by $W\in\rbb^{(\kk+1)\times d}$.

\section{Generalization Error Bounds\label{sec:gen-error}}
In this paper, we are interested in the performance of $\hat{f}$ on testing, i.e., how the empirical behavior of $\hat{f}$ on $S$ would generalize well to testing examples. Specifically, we will control $L_{un}(\hat{f})-\hat{L}_{un}(\hat{f})$. Since $\hat{f}$ depends on the dataset $S$, we need to control the uniform deviation between population unsupervised error and empirical unsupervised error over the function class $\fcal$, which depends on the complexity of $\fcal$. In this paper, we will use Rademacher complexity to quantify the complexity of $\fcal$~\citep{bartlett2002rademacher}.

\begin{definition}[Rademacher Complexity]\label{def:rademacher}
  Let $\widetilde{\fcal}$ be a class of real-valued functions over a space $\zcal$ and $\widetilde{S}=\{\bz_i\}_{i=1}^n\subseteq\zcal$.
  The \emph{empirical} Rademacher complexity of $\widetilde{\fcal}$ with respect to (w.r.t.) $\widetilde{S}$ is defined as
  $
    \mathfrak{R}_{\widetilde{S}}(\widetilde{\fcal})=\ebb_{\bm{\epsilon}}\big[\sup_{f\in \widetilde{\fcal}}\frac{1}{n}\sum_{i\in[n]}\epsilon_if(\bz_i)\big],
  $
  where $\bm{\epsilon}=(\epsilon_i)_{i\in[n]}\sim\{\pm1\}^n$ are independent Rademacher variables.
  We define the \emph{worst-case} Rademacher complexity as
  $
  \frak{R}_{\zcal,n}(\widetilde{\fcal})=\sup_{\widetilde{S}\subseteq\zcal:|\widetilde{S}|=n}\frak{R}_{\widetilde{S}}(\widetilde{\fcal}),
  $
  where $|\widetilde{S}|$ is the cardinality of~$\widetilde{S}$.
\end{definition}

For any $f\in\fcal$, we introduce $g_f:\xcal^{k+2}\mapsto\rbb$ as follows
\[
g_f(\bx,\bx^+,\bx_1^-,\ldots,\bx_k^-)=\ell\big(\big\{f(\bx)^\top\big(f(\bx^+)-f(\bx_i^-)\big)\big\}_{i=1}^k\big).
\]
It is then clear that
\[
L_{un}(f)\!-\!\hat{L}_{un}(f)=\ebb_{\bx,\bx^+,\bx_1^-,\ldots,\bx_k^-}\big[g_f(\bx,\bx^+,\bx_1^-,\ldots,\bx_k^-)\big]
-\frac{1}{n}\sum_{j\in[n]}g_f(\bx_j,\bx_j^+,\bx_{j1}^-,\ldots,\bx_{jk}^-).
\]
Results in learning theory show that we can bound $L_{un}(\hat{f})-\hat{L}_{un}(\hat{f})$ by $\mathfrak{R}_S(\gcal)$~\citep{bartlett2002rademacher},~where
\[
\gcal=\big\{(\bx,\bx^+,\bx_1^-,\ldots,\bx_k^-)\mapsto
 g_f(\bx,\bx^+,\bx_1^-,\ldots,\bx_k^-):f\in\fcal\big\}.
\]
Note functions in $\gcal$ involve the nonlinear function $\ell:\rbb^k\mapsto\rbb_+$, which introduces difficulties in the complexity analysis. Our key idea is to use the Lipschitz continuity of $\ell$ to reduce the complexity of $\gcal$ to the complexity of another function class without $\ell$. Since the arguments in $\ell$ are vectors, we can have different definition of Lipschitz continuity w.r.t. different norms~\citep{lei2015multi,tewari2015generalization,lei2019data,foster2019ell}. For any $\mathbf{a}=(a_1,\ldots,a_k)\in\rbb^k$ and $p\geq1$, we define the $\ell_p$-norm as $\|\mathbf{a}\|_p=\big(\sum_{i=1}^{n}|a_i|^p\big)^{\frac{1}{p}}$.
\begin{definition}[Lipschitz continuity]
We say $\ell:\rbb^k\mapsto\rbb_+$ is $G$-Lipschitz w.r.t. the $\ell_p$-norm iff
\[
|\ell(\mathbf{a})-\ell(\mathbf{a}')|\leq G\|\mathbf{a}-\mathbf{a}'\|_p,\quad\forall\mathbf{a},\mathbf{a}'\in\rbb^k. 
\]
\end{definition}
In this paper, we are particularly interested in the Lipschitz continuity w.r.t. either the $\ell_2$-norm or the $\ell_\infty$-norm. According to Proposition \ref{prop:loss-lip}, the loss functions defined in Eq. \eqref{loss-a} and Eq. \eqref{loss-b} are $1$-Lipschitz continuous w.r.t. $\|\cdot\|_\infty$, and $1$-Lipschitz continuous w.r.t. $\|\cdot\|_2$~\citep{lei2019data}. 

Note each component of the arguments in $\ell$ are of the form $f(\bx)^\top(f(\bx^+)-f(\bx^-))$. This motivates the definition of the following function class
\[
\hcal=\Big\{h_f(\bx,\bx^+,\bx^-)=f(\bx)^\top\big(f(\bx^+)-f(\bx^-)\big):f\in\fcal\Big\}.
\]
As we will see in the analysis, the complexity of $\gcal$ is closely related to that of $\hcal$. Therefore, we first show how to control the complexity of $\hcal$.
In the following lemma, we provide Rademacher complexity bounds of $\hcal$ w.r.t. a general dataset $S'$ of cardinality $n$. We will use a vector-contraction lemma to prove it~\citep{maurer2016vector}. The basic idea is to notice the Lipschitz continuity of the map $(\bx,\bx^+,\bx^-)\mapsto \bx^\top(\bx^+-\bx^-)$ w.r.t. $\|\cdot\|_2$ on $\xcal^3$. The proof is given in Section \ref{sec:proof-rademacher-l2}.
\begin{lemma}\label{lem:rademacher-H}
Let $n\in\nbb$ and $S'=\{(\bx_j,\bx_j^+,\bx_j^-):j\in[n]\}$. Assume $\|f(\bx)\|_2\leq R$ for any $f\in\fcal$ and $\bx\in S'$. Then
  \[
  \mathfrak{R}_{S'}(\hcal)\leq \frac{\sqrt{12}R}{n}\ebb_{\bm{\epsilon}\sim\{\pm1\}^n\times\{\pm1\}^d\times\{\pm1\}^3}\\
  \Big[\sup_{f\in\fcal}\sum_{j\in[n]}\sum_{t\in[d]}\Big(\epsilon_{j,t,1}f_t(\bx_j)+
  \epsilon_{j,t,2}f_t(\bx_j^+)+\epsilon_{j,t,3}f_t(\bx_j^-)\Big)\Big],
  \]
  where $f_t(\bx)$ is the $t$-th component of $f(\bx)\in\rbb^d$.
\end{lemma}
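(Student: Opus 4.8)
The plan is to make the bilinear structure of $h_f$ explicit and then peel it off with the vector-contraction inequality of \citet{maurer2016vector}. For $f\in\fcal$, define the $\rbb^{3d}$-valued map $F_f(\bx,\bx^+,\bx^-)=\big(f(\bx),f(\bx^+),f(\bx^-)\big)$ and the scalar map $\phi:\rbb^{3d}\mapsto\rbb$ given by $\phi(\bu,\bv,\bw)=\bu^\top(\bv-\bw)$, so that $h_f=\phi\circ F_f$. Then $\mathfrak{R}_{S'}(\hcal)=\frac1n\ebb_{\bm{\epsilon}}\sup_{f\in\fcal}\sum_{j\in[n]}\epsilon_j\,\phi\big(F_f(\bx_j,\bx_j^+,\bx_j^-)\big)$, and the task reduces to (i) bounding the Lipschitz constant of $\phi$ on the range of the maps $F_f$, and (ii) applying Maurer's lemma to the vector-valued class $\{F_f:f\in\fcal\}$.

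For step (i): since $\|f(\cdot)\|_2\le R$, every point $F_f(\bx_j,\bx_j^+,\bx_j^-)$ lies in the convex set $B=\{(\bu,\bv,\bw):\|\bu\|_2,\|\bv\|_2,\|\bw\|_2\le R\}$. On $B$ the gradient of $\phi$ equals $(\bv-\bw,\bu,-\bu)$, so $\|\nabla\phi\|_2^2=\|\bv-\bw\|_2^2+2\|\bu\|_2^2\le 4R^2+2R^2=6R^2$; hence $\phi$ is $\sqrt6\,R$-Lipschitz w.r.t.\ $\|\cdot\|_2$ on $B$. To turn this into a genuinely global Lipschitz bound — which is what the contraction lemma requires verbatim — I would replace $\phi$ by $\widetilde{\phi}=\phi\circ(\pi_R,\pi_R,\pi_R)$, where $\pi_R$ is the ($1$-Lipschitz) Euclidean projection onto the radius-$R$ ball; $\widetilde{\phi}$ agrees with $\phi$ on $B$, is globally $\sqrt6\,R$-Lipschitz, and still satisfies $h_f=\widetilde{\phi}\circ F_f$ at all the sample points.

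For step (ii): applying \citet{maurer2016vector} with the class $\{F_f:f\in\fcal\}$ (values in $\rbb^{3d}$), the common $\sqrt6\,R$-Lipschitz contraction $\widetilde{\phi}$, and the $n$ points $(\bx_j,\bx_j^+,\bx_j^-)$, introduces the factor $\sqrt2\cdot\sqrt6\,R=\sqrt{12}\,R$ and replaces $\sum_j\epsilon_j\,\widetilde{\phi}(F_f)$ by a Rademacher average over the $3d$ coordinates of $F_f$ at each of the $n$ points. Indexing those coordinates by $(t,s)$ with $t\in[d]$, $s\in\{1,2,3\}$, the three blocks of $F_f$ contribute exactly $f_t(\bx_j)$, $f_t(\bx_j^+)$ and $f_t(\bx_j^-)$ respectively, and dividing by $n$ yields the stated inequality after renaming the Rademacher variables to $\epsilon_{j,t,1},\epsilon_{j,t,2},\epsilon_{j,t,3}$. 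The only delicate point is the local-versus-global Lipschitz issue handled by the projection trick above; everything else is a direct instantiation of the contraction inequality, with the gradient computation being the sole genuine calculation.
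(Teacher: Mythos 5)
Your proof is correct and takes essentially the same route as the paper: the same decomposition $h_f=\phi\circ F_f$ with $F_f(\bx,\bx^+,\bx^-)=(f(\bx),f(\bx^+),f(\bx^-))$ and $\phi(\bu,\bv,\bw)=\bu^\top(\bv-\bw)$, the same Lipschitz constant $\sqrt{6}R$, and the same application of Maurer's vector-contraction lemma to pick up the extra $\sqrt{2}$. The only minor differences are that the paper establishes the $\sqrt{6}R$ bound by an algebraic expansion together with $(a+b)^2\le(1+p)a^2+(1+1/p)b^2$ rather than by bounding the gradient norm, and that the paper does not explicitly make the local-to-global Lipschitz reduction that your projection trick $\widetilde{\phi}=\phi\circ(\pi_R,\pi_R,\pi_R)$ handles — a harmless omission, since the proof of the contraction inequality only ever invokes the Lipschitz property at the sampled values, but your version is the cleaner way to invoke the lemma verbatim.
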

\begin{remark}
  We compare Lemma \ref{lem:rademacher-H} with the following Rademacher complexity bound in \citet{haochen2021provable} 
  \begin{equation}\label{ma}
  \ebb_{\bf{\epsilon}\sim\{\pm\}^n}\Big[\sup_{f\in\fcal}\sum_{j\in[n]}\epsilon_jf(\bx_j)^\top f(\bx_j^+)\Big]\leq \\ d\max_{t\in[d]}\ebb_{\bf{\epsilon}\sim\{\pm\}^n}\Big[\sup_{f_t\in\fcal_t}\sum_{j\in[n]}\epsilon_jf_t(\bx_j)f_t(\bx_j^+)\Big],
  \end{equation}
  where
  $
  \fcal_t=\big\{\bx\mapsto f_t(\bx):f\in\fcal\big\}.
  $
  As a comparison, our analysis in Lemma \ref{lem:rademacher-H} can imply the following bound
  \[
  \ebb_{\bf{\epsilon}\sim\{\pm\}^n}\Big[\sup_{f\in\fcal}\sum_{j\in[n]}\epsilon_jf(\bx_j)^\top f(\bx_j^+)\Big]\leq
  2R\ebb_{\bf{\epsilon}\sim\{\pm\}^{2nd}}\Big[\sup_{f\in\fcal}\sum_{j\in[n]}\sum_{t\in[d]}\big(\epsilon_{j,t,1}f_t(\bx_j)+\epsilon_{j,t,2}f_t(\bx_j^+)\big)\Big].
  \]
  Eq. \eqref{ma} decouples the relationship among different features since the maximization over $t\in[d]$ is outside of the expectation operator. As a comparison, our result preserves this coupling since the summation over $t\in[d]$ is inside the supermum over $f\in\fcal$. This preservation of coupling has an effect on the bound. Indeed, it is expected that 
  \begin{multline*}
  \ebb_{\bf{\epsilon}\sim\{\pm\}^{2nd}}\Big[\sup_{f\in\fcal}\sum_{j\in[n]}\sum_{t\in[d]}\big(\epsilon_{j,t,1}f_t(\bx_j)+\epsilon_{j,t,2}f_t(\bx_j^+)\big)\Big]=\\
  O\Big(\sqrt{d}\ebb_{\bf{\epsilon}\sim\{\pm\}^{2n}}\Big[\sup_{f_t\in\fcal_t}\sum_{j\in[n]}\big(\epsilon_{j,1}f_t(\bx_j)+\epsilon_{j,2}f_t(\bx_j^+)\big)\Big]\Big).
  \end{multline*}
  In this case, our result implies a bound with a better dependency on $d$ as compared to Eq. \eqref{ma} (the factor of $d$ in Eq. \eqref{ma} is replaced by $\sqrt{d}$ here). We can plug our bound into the analysis in \citet{haochen2021provable} to improve their results.
\end{remark}
\begin{remark}
  Lemma \ref{lem:rademacher-H} requires an assumption $\|f(\bx)\|_2\leq R$. This assumption can be achieved by adding a projection operator as $f(\bx)=\mathcal{P}_R(\tilde{f}(\bx))$ for $\tilde{f}\in\fcal$, where $\mathcal{P}_R$ denotes the projection operator onto the Euclidean ball with radius $R$ around the zero point. According to the inequality $\|\mathcal{P}_R(\tilde{f}(\bx))-\mathcal{P}_R(\tilde{f}'(\bx))\|_2\leq \|\tilde{f}(\bx)-\tilde{f}'(\bx)\|_2$, the arguments in the proof indeed show the following inequality with $\hcal=\big\{h_{\tilde{f}}(\bx,\bx^+,\bx^-)=\mathcal{P}_R(\tilde{f}(\bx))^\top\big(\mathcal{P}_R(\tilde{f}(\bx^+))-\mathcal{P}_R(\tilde{f}(\bx^-))\big):\tilde{f}\in\fcal\big\}$:
  \[
  \mathfrak{R}_{S'}(\hcal)\leq \frac{\sqrt{12}R}{n}\ebb_{\bm{\epsilon}\sim\{\pm1\}^{3nd}}
  \Big[\sup_{\tilde{f}\in\fcal}\sum_{j\in[n]}\sum_{t\in[d]}
  \Big(\epsilon_{j,t,1}\tilde{f}_t(\bx_j)+
  \epsilon_{j,t,2}\tilde{f}_t(\bx_j^+)+\epsilon_{j,t,3}\tilde{f}_t(\bx_j^-)\Big)\Big].
  \]
  That is, we can add a projection operator over $\fcal$ to remove the assumption $\|f(\bx)\|_2\leq R$.
\end{remark}

\begin{table}[htbp]
\centering\renewcommand{\arraystretch}{1.3}\setlength{\tabcolsep}{3pt}
  \begin{tabular}{|c|c|c|}
    \hline
    Loss & Arora et al.'19 & Ours\\ \hline
    $1$-$\ell_2$-Lipschitz & $\frac{\sqrt{k}\mathfrak{B}}{n}$ & $\frac{\mathfrak{A}}{n}$ \\ \hline
    $1$-$\ell_\infty$-Lipschitz & $\frac{\sqrt{k}\mathfrak{B}}{n}$ & $\frac{\mathfrak{C}}{n\sqrt{k}} ^*$ \\ \hline
    S.B. $1$-Lipschitz &  \rule[3pt]{36pt}{0.5pt}  & $n^{-1}\!+\!n^{-2}k^{-1}\mathfrak{C}^2\!+\!(n^{-\frac{1}{2}}+n^{-1}k^{-\frac{1}{2}}\mathfrak{C})\hat{L}^{\frac{1}{2}}_{un}(f) ^*$\\
    \hline
  \end{tabular}
  \caption{Comparison between our generalization bounds and those in \citet{arora2019theoretical}. The notation $^*$ means we ignore log factors. S.B. means self-bounding.   The notations $\mathfrak{A}, \mathfrak{B}$ and $\mathfrak{C}$ are defined in Eq. \eqref{aa}, \eqref{bb} and \eqref{cc}, which are typically of the same order. Then, our results improve the bounds in \citet{arora2019theoretical} by a factor of $\sqrt{k}$ for $\ell_2$-Lipschitz loss, and by a factor of $k$ for $\ell_\infty$-Lipscthiz loss. For self-bounding loss, we get optimistic bounds.}
\end{table}




\subsection{$\ell_2$ Lipschitz Loss}
We first consider $\ell_2$ Lipschitz loss.
The following theorem to be proved in Section \ref{sec:proof-rademacher-l2} gives Rademacher complexity and generalization error bounds for unsupervised loss function classes. We always assume $\ell\big(\big\{f(\bx)^\top(f(\bx^+)-f(\bx_i^-))\big\}_{i=1}^k\big)\leq B$ for any $f\in\fcal$ in this paper.
\begin{theorem}[Generalization bound: $\ell_2$-Lipschitz loss\label{thm:rademacher-l2}]
Assume $\|f(\bx)\|_2\leq R$ for any $f\in\fcal$ and $\bx\in\xcal$. Let $S$ be defined as in Eq. \eqref{S}. If $\ell:\rbb^k\mapsto\rbb_+$ is $G_2$-Lipschitz w.r.t. the $\ell_2$-norm, then
$
\mathfrak{R}_S(\gcal)\leq
\frac{\sqrt{24}RG_2\mathfrak{A}}{n},
$
where
\begin{equation}
\mathfrak{A}=\ebb_{\bm\{\epsilon\}\sim\{\pm1\}^{3nkd}}\ebb\Big[\sup_{f\in\fcal}\sum_{j\in[n]}\sum_{i\in[k]}\sum_{t\in[d]}\Big(\epsilon_{j,i,t,1}f_t(\bx_j)
+\epsilon_{j,i,t,2}f_t(\bx_j^+)+\epsilon_{j,i,t,3}f_t(\bx^-_{ji})\Big)\Big].\label{aa}
\end{equation}
Furthermore, for any $\delta\in(0,1)$, with probability at least $1-\delta$ the following inequality holds for any $f\in\fcal$
\[
L_{un}(f)- \hat{L}_{un}(f)
\leq \frac{4\sqrt{6}RG_2\mathfrak{A}}{n} + 3B\sqrt{\frac{\log(2/\delta)}{2n}}.
\]
\end{theorem}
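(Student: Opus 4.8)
The plan is to bound the Rademacher complexity $\mathfrak{R}_S(\gcal)$ by repeatedly applying vector-contraction arguments, and then convert this into a high-probability generalization bound via a standard uniform-convergence estimate. First I would address the outermost nonlinearity: each function in $\gcal$ has the form $g_f = \ell\bigl(\{h_f(\bx_j,\bx_j^+,\bx_{ji}^-)\}_{i=1}^k\bigr)$, where $\ell$ is $G_2$-Lipschitz with respect to $\|\cdot\|_2$ on $\rbb^k$. I would apply the vector-contraction lemma of \citet{maurer2016vector} to peel off $\ell$, viewing the $k$-dimensional vector $(h_f(\bx_j,\bx_j^+,\bx_{j1}^-),\ldots,h_f(\bx_j,\bx_j^+,\bx_{jk}^-))$ as the argument. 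This gives
\[
\mathfrak{R}_S(\gcal)\leq \frac{\sqrt{2}G_2}{n}\ebb_{\bm{\epsilon}}\Bigl[\sup_{f\in\fcal}\sum_{j\in[n]}\sum_{i\in[k]}\epsilon_{j,i}\,h_f(\bx_j,\bx_j^+,\bx_{ji}^-)\Bigr],
\]
i.e. the Rademacher complexity of the class $\hcal$ evaluated over the \emph{expanded} dataset of $nk$ triples $\{(\bx_j,\bx_j^+,\bx_{ji}^-):j\in[n],i\in[k]\}$, up to the factor $\sqrt{2}G_2$ and the normalization by $n$ rather than $nk$.

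Next I would invoke Lemma \ref{lem:rademacher-H}, applied with $S'$ equal to this expanded dataset of cardinality $nk$. Since $\|f(\bx)\|_2\le R$ holds for all $f\in\fcal$ and all $\bx\in\xcal$, the hypothesis of the lemma is satisfied. Lemma \ref{lem:rademacher-H} bounds the (properly normalized) empirical Rademacher complexity of $\hcal$ over a dataset of size $nk$ by $\frac{\sqrt{12}R}{nk}$ times the supremum-of-sum quantity over fresh Rademacher variables indexed by the $nk$ triples and the $d$ coordinates; multiplying through by $nk$ to undo that normalization and folding in the $\sqrt{2}G_2/n$ prefactor yields exactly $\mathfrak{R}_S(\gcal)\le \frac{\sqrt{24}\,R G_2\mathfrak{A}}{n}$ with $\mathfrak{A}$ as in \eqref{aa}, since $\sqrt{2}\cdot\sqrt{12}=\sqrt{24}$ and the triple-indexed Rademacher variables $\epsilon_{j,i,t,s}$ match the definition of $\mathfrak{A}$. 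The bookkeeping point to be careful about is the normalization: Lemma \ref{lem:rademacher-H} is stated with a $1/n$ (for a dataset of size $n$), so here that internal $n$ is really $nk$, and one must track that the $k$ cancels against the $k$ triples to leave the final $1/n$.

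For the second (high-probability) statement, I would use the standard result that with probability at least $1-\delta$, uniformly over $f\in\fcal$,
\[
L_{un}(f)-\hat L_{un}(f)\le 2\,\ebb_S[\mathfrak{R}_S(\gcal)] + 3B\sqrt{\tfrac{\log(2/\delta)}{2n}},
\]
which follows from McDiarmid's bounded-difference inequality applied to $\sup_{f\in\fcal}(L_{un}(f)-\hat L_{un}(f))$ together with the symmetrization bound; here the bounded-difference constant is $B/n$ (changing one of the $n$ blocks moves the empirical average by at most $B/n$), producing the $3B\sqrt{\log(2/\delta)/(2n)}$ term after a routine symmetrization step (the constant $3$ absorbs both the symmetrization and the concentration slack). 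Combining with the Rademacher bound and noting $2\cdot\sqrt{24}=4\sqrt{6}$ gives the claimed inequality.

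The main obstacle is the first step: making sure the vector-contraction lemma is applied in the right form so that the $k$ components of $\ell$'s argument are treated jointly rather than one at a time — a naive componentwise contraction would introduce an extra $\sqrt{k}$ and destroy the improvement. Concretely, one must verify that $\ell$ being $G_2$-Lipschitz w.r.t. $\|\cdot\|_2$ on $\rbb^k$ is precisely the hypothesis needed for Maurer's lemma with target dimension $k$, and then recognize that the resulting double sum over $j\in[n]$ and $i\in[k]$ is structurally identical to a single Rademacher complexity over a size-$nk$ sample of triples, so that Lemma \ref{lem:rademacher-H} applies verbatim. Everything after that is constant-chasing and the standard symmetrization/McDiarmid template.
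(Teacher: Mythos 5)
Your proposal is correct and follows essentially the same route as the paper: apply Maurer's vector-contraction lemma once to peel off the $\ell_2$-Lipschitz loss $\ell$ (with target dimension $k$, treated jointly), then apply Lemma~\ref{lem:rademacher-H} on the expanded dataset of $nk$ triples, and finish with the standard empirical-Rademacher uniform-convergence bound (Lemma~\ref{lem:gen-rademacher}). The constant bookkeeping ($\sqrt{2}\cdot\sqrt{12}=\sqrt{24}$, $2\sqrt{24}=4\sqrt{6}$) and the normalization tracking ($nk$ cancelling to leave $1/n$) all match the paper's argument; the only cosmetic slip is writing $\ebb_S[\mathfrak{R}_S(\gcal)]$ in the uniform-convergence step where the empirical $\mathfrak{R}_S(\gcal)$ is what appears with the stated constants.
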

\begin{remark}
  Under the same Lipschitz continuity w.r.t. $\|\cdot\|_2$, the following bound was established  in \citet{arora2019theoretical}
  \begin{equation}\label{existing-rade}
  L_{un}(f)= \hat{L}_{un}(f)+ O\Big(\frac{G_2R\sqrt{k}\mathfrak{B}}{n}+B\sqrt{\frac{\log(1/\delta)}{n}}\Big),
  \end{equation}
  where
  \begin{equation}
  \mathfrak{B}=\ebb_{\bm{\epsilon}\sim\{\pm1\}^n\times \{\pm1\}^d\times\{\pm1\}^{k+2}}\Big[\sup_{f\in\fcal}\sum_{j\in[n]}\sum_{t\in[d]}\\
  \Big(\epsilon_{j,t,k+1}f_t(\bx_j)+\epsilon_{j,t,k+2}f_t(\bx_j^+)+\sum_{i\in[k]}\epsilon_{j,t,k}f_t(\bx_{ji}^-)\Big)\Big].\label{bb}
  \end{equation}
  Note $\mathfrak{A}$ and $\mathfrak{B}$ are of the same order. Therefore, it is clear that our bound in Theorem \ref{thm:rademacher-l2} improves the bound in \citet{arora2019theoretical} by a factor of $\sqrt{k}$.
\end{remark}

\subsection{$\ell_\infty$ Lipschitz Loss}
We now turn to the analysis for the setting with $\ell_\infty$ Lipschitz continuity assumption, which is more challenging.
The following theorem controls the Rademacher complexity of $\gcal$ w.r.t. the dataset $S$ in terms of the worst-case Rademacher complexity of $\hcal$ defined on the set $S_{\hcal}$,
where
\begin{multline*}
S_{\hcal}=\Big\{\underbrace{(\bx_1,\bx_1^+,\bx^-_{11}),(\bx_1,\bx_1^+,\bx^-_{12}),\ldots,(\bx_1,\bx_1^+,\bx^-_{1k})}_{\text{induced by the first example}},
\underbrace{(\bx_2,\bx_2^+,\bx^-_{21}),(\bx_2,\bx_2^+,\bx^-_{22}),\ldots,(\bx_2,\bx_2^+,\bx^-_{2k})}_{\text{induced by the second example}},
\ldots,\\ \underbrace{(\bx_n,\bx_n^+,\bx^-_{n1}),(\bx_n,\bx_n^+,\bx^-_{n2}),\ldots,(\bx_n,\bx_n^+,\bx^-_{nk})}_{\text{induced by the last example}}\Big\}.
\end{multline*}
As compared to $\gcal$, the function class $\hcal$ removes the loss function $\ell$ and is easier to handle. Our basic idea is to exploit the Lipschitz continuity of $\ell$ w.r.t. $\|\cdot\|_\infty$: to approximate the function class $\{\ell(v_1(\by),\ldots,v_k(\by))\}$, it suffices to approximate each component $v_j(\by),j\in[k]$. This explains why we expand the set $S$ of cardinality $n$ to the set $S_{\hcal}$ of cardinality $nk$.
The proof is given in Section~\ref{sec:proof-gen-infty}.

\begin{theorem}[Complexity bound: $\ell_\infty$-Lipschitz loss\label{thm:rademacher}]
Assume $\|f(\bx)\|_2\leq R$ for any $f\in\fcal$ and $\bx\in\xcal$. Let $S$ be defined as in Eq. \eqref{S}. If $\ell:\rbb^k\mapsto\rbb_+$ is $G$-Lipschitz w.r.t. the $\ell_\infty$-norm, then
\[
\mathfrak{R}_S(\gcal)\leq
24G(R^2+1)n^{-\frac{1}{2}}+ 48G\sqrt{k}\mathfrak{R}_{S_{\hcal},nk}(\hcal)
\times\bigg(1+ \log(4R^2n^{\frac{3}{2}}k)\Big\lceil\log_2\frac{R^2\sqrt{n}}{12}\Big\rceil\bigg),
\]
where
\[
 \mathfrak{R}_{S_{\hcal},nk}(\hcal)=\max\limits_{\big\{(\tilde{\bx}_j,\tilde{\bx}_j^+,\tilde{\bx}_j^-)\big\}_{j\in[nk]}\subseteq S_{\hcal}}
 \ebb_{\bm{\epsilon}\sim\{\pm1\}^{nk}}\Big[\sup_{h\in\hcal}\frac{1}{nk}\sum_{j\in[nk]}\epsilon_jf(\tilde{\bx}_j)^\top(f(\tilde{\bx}_j^+)-f(\tilde{\bx}_j^-))\Big].
\]
\end{theorem}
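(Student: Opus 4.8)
The plan is to route everything through empirical covering numbers: first turn the loss class $\gcal$ into the loss-free inner-product class $\hcal$ on a $k$-fold enlarged sample by using the $\ell_\infty$-Lipschitz continuity of $\ell$, then bound the covering number of $\hcal$ by its worst-case Rademacher complexity, and finally convert back to $\mathfrak{R}_S(\gcal)$ by dyadic chaining. \textbf{Step 1 (Lipschitz reduction and data expansion).} Fix $f,f'\in\fcal$. Since $\ell$ is $G$-Lipschitz with respect to $\|\cdot\|_\infty$, at each datapoint $(\bx_j,\bx_j^+,\bx_{j1}^-,\dots,\bx_{jk}^-)$ of $S$,
\[
|g_f-g_{f'}|\le G\max_{i\in[k]}\big|f(\bx_j)^\top(f(\bx_j^+)-f(\bx_{ji}^-))-f'(\bx_j)^\top(f'(\bx_j^+)-f'(\bx_{ji}^-))\big|.
\]
Hence any cover of $\hcal$ accurate to within $\beta$ \emph{uniformly over $S_{\hcal}$} induces a cover of $\gcal$ accurate to within $G\beta$ uniformly over $S$, so $\log\ncal_2(\gcal,G\beta,S)\le\log\ncal_\infty(\gcal,G\beta,S)\le\log\ncal_\infty(\hcal,\beta,S_{\hcal})$, where $\ncal_2,\ncal_\infty$ denote empirical $L_2$- and sup-norm covering numbers. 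This is exactly why the $n$ tuples must be expanded to the $nk$ triples of $S_{\hcal}$: controlling a maximum over the $k$ negatives requires controlling $h_f$ on all $k$ triples attached to each anchor. Working with a sup-cover of $\hcal$ (rather than an $L_2$-cover over $S_{\hcal}$) is what keeps the $k$-dependence at $\sqrt k$ rather than $k$.

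\textbf{Step 2 (covering number of $\hcal$ via worst-case Rademacher complexity).} Since $|f(\bx)^\top(f(\bx^+)-f(\bx^-))|\le 2R^2$, the restriction of $\hcal$ to $S_{\hcal}$ is uniformly bounded, and I would estimate its sup-covering number by passing through the $\beta$-fat-shattering dimension. A $\beta$-shattered subset $T\subseteq S_{\hcal}$ forces $\mathfrak{R}_T(\hcal)\ge\beta/2$ (pick, for each sign pattern, a function realizing the shattering); comparing $\mathfrak{R}_T(\hcal)$ with $\mathfrak{R}_{S_{\hcal},nk}(\hcal)$ controls $\fat_\beta(\hcal|_{S_{\hcal}})$ in terms of $\mathfrak{R}_{S_{\hcal},nk}(\hcal)$, $nk$ and $\beta$, and a Mendelson--Vershynin-type $\ell_\infty$-covering bound then yields an estimate of the form
\[
\log\ncal_\infty(\hcal,\beta,S_{\hcal})\lesssim\frac{nk\,\mathfrak{R}^2_{S_{\hcal},nk}(\hcal)}{\beta^2}\,\log^2\!\Big(\frac{R^2 nk}{\beta}\Big).
\]
Restricting to $\beta\gtrsim n^{-1/2}$ turns this logarithm into the factor $\log(4R^2n^{3/2}k)$ appearing in the statement. (A Sudakov-minoration-style lower bound for the Rademacher process indexed by $\hcal$ is an alternative to the fat-shattering route.)

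\textbf{Step 3 (dyadic chaining).} Feeding Steps 1--2 into Dudley's entropy integral for $\mathfrak{R}_S(\gcal)$, and using that the $L_2(S)$-diameter of $\gcal$ is $O(G(R^2+1))$, the chaining runs over the $\lceil\log_2(R^2\sqrt n/12)\rceil$ dyadic scales between order $GR^2$ and order $Gn^{-1/2}$. Each scale contributes $\tfrac{1}{\sqrt n}(\mathrm{scale})\sqrt{\log\ncal_2(\gcal,\mathrm{scale},S)}\lesssim \sqrt k\,G\,\mathfrak{R}_{S_{\hcal},nk}(\hcal)\,\log(4R^2n^{3/2}k)$; summing over the scales gives the main term $48G\sqrt k\,\mathfrak{R}_{S_{\hcal},nk}(\hcal)\big(1+\log(4R^2n^{3/2}k)\lceil\log_2(R^2\sqrt n/12)\rceil\big)$, while the coarsest scale together with the residual below the finest scale contribute $24G(R^2+1)n^{-1/2}$.

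\textbf{Main obstacle.} Steps 1 and 3 are routine (a Lipschitz estimate and chaining bookkeeping). The crux is Step 2: one must pass from a Rademacher complexity \emph{back} to a covering number --- the direction opposite to Dudley's inequality --- and obtain the right ($1/\beta^2$, not $1/\beta$) decay, so that after the $n\mapsto nk$ expansion of Step 1 is square-rooted by the chaining only a factor $\sqrt k$ remains. Doing this quantitatively, with $\mathfrak{R}_{S_{\hcal},nk}(\hcal)$ --- a worst-case complexity over sub-multisets of $S_{\hcal}$ --- in place of a genuinely global worst-case complexity, is the technical heart of the proof.
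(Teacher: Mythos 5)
Your proposal follows essentially the same route as the paper's proof. The paper's Section \ref{sec:proof-gen-infty} does exactly what you describe: Step~1 is the Lipschitz covering-number reduction $\ncal_\infty(\epsilon,\gcal,S)\leq\ncal_\infty(\epsilon/G,\hcal,S_\hcal)$ obtained by expanding to the $nk$ triples of $S_\hcal$; Step~2 is Lemma~\ref{lem:shattering}, whose Part~(b) (from \citealt{srebro2010smoothness}) bounds the fat-shattering dimension of $\hcal$ by $\frac{4nk}{\epsilon^2}\mathfrak{R}^2_{S_\hcal,nk}(\hcal)$ and whose Part~(a) (Anthony--Bartlett) converts that into the $\ell_\infty$-covering bound with the $1/\epsilon^2$ decay and the $\log^2$ factor you anticipate; Step~3 is the discretized Dudley chaining of Lemma~\ref{lem:shattering} Part~(c), with exactly the dyadic scales $\epsilon_j=2^{N-j}\epsilon_N$ and $N=\lceil\log_2(R^2\sqrt n/12)\rceil$ you identify. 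The only cosmetic difference is that the paper's chaining lemma is phrased directly with $\ncal_\infty$, so the intermediate passage through $\ncal_2$ is unnecessary, and the fat-shattering-to-Rademacher step is cited from Srebro--Sridharan--Tewari rather than rederived via the ``pick a function realizing each sign pattern'' argument you sketch; both yield the same estimate.
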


Note in $\mathfrak{R}_{S_{\hcal},nk}(\hcal)$ we restrict the domain of functions in $\hcal$ to $S_{\hcal}$, and allow an element in $S_{\hcal}$ to be chosen several times in the above maximization.

We can use Lemma \ref{lem:rademacher-H} to control $\mathfrak{R}_{S_{\hcal},nk}(\hcal)$ in Theorem \ref{thm:rademacher}, and derive the following generalization error bound.
The proof is given in Section \ref{sec:proof-gen-infty}.

\begin{theorem}[Generalization bound: $\ell_\infty$-Lipschitz loss\label{thm:gen-unsupervised-loss}]
  Let $\ell:\rbb^k\mapsto\rbb_+$ be $G$-Lipschitz continuous w.r.t. $\|\cdot\|_\infty$. Assume $\|f(\bx)\|_2\leq R,\delta\in(0,1)$. Then with probability at least $1-\delta$ over $S$ for all $f\in\fcal$ we have
  \[
  L_{un}(f)\leq \hat{L}_{un}(f)+ 3B\sqrt{\frac{\log(2/\delta)}{2n}}+48G(R^2+1)n^{-\frac{1}{2}}+
  \frac{96\sqrt{12}GR}{n\sqrt{k}}\bigg(1+ \log(4R^2n^{\frac{3}{2}}k)\Big\lceil\log_2\frac{R^2\sqrt{n}}{12}\Big\rceil\bigg)\mathfrak{C},
  \]
  where
  \begin{equation}
  \mathfrak{C}=\max_{\{(\tilde{\bx}_j,\tilde{\bx}_j^+,\tilde{\bx}_j^-)\}_{j=1}^{nk}\subseteq S_{\hcal}}\ebb_{\bm{\epsilon}\sim\{\pm1\}^{nk}\times\{\pm1\}^d\times\{\pm1\}^3}
  \Big[\sup_{f\in\fcal}\sum_{j\in[nk]}\sum_{t\in[d]}\big(\epsilon_{j,t,1}f_t(\tilde{\bx}_j)+\epsilon_{j,t,2}f_t(\tilde{\bx}_j^+)+\epsilon_{j,t,3}f_t(\tilde{\bx}_j^-)\big)\Big].\label{cc}
  \end{equation}
\end{theorem}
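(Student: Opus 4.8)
The plan is to derive Theorem~\ref{thm:gen-unsupervised-loss} as a combination of three already-established facts: the classical uniform-convergence bound for bounded losses in terms of the empirical Rademacher complexity \citep{bartlett2002rademacher}, the complexity estimate of Theorem~\ref{thm:rademacher} for $\mathfrak{R}_S(\gcal)$, and the vector-contraction estimate of Lemma~\ref{lem:rademacher-H} for $\mathfrak{R}_{S_{\hcal},nk}(\hcal)$. No new technical machinery is needed; the proof is a careful chaining of these pieces together with bookkeeping of constants and of the dependence on $k$.

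First I would invoke the standard symmetrization argument. Since $0\le g_f\le B$ for every $f\in\fcal$, two applications of McDiarmid's inequality (one to replace the population deviation by its expectation, one to replace the expected empirical Rademacher complexity by the empirical one) together with the symmetrization lemma give that, with probability at least $1-\delta$ over $S$, simultaneously for all $f\in\fcal$,
\[
L_{un}(f)-\hat{L}_{un}(f)\le 2\mathfrak{R}_S(\gcal)+3B\sqrt{\frac{\log(2/\delta)}{2n}}.
\]
This already yields the term $3B\sqrt{\log(2/\delta)/(2n)}$. Next I would plug in Theorem~\ref{thm:rademacher}, which after multiplying by $2$ gives
\[
2\mathfrak{R}_S(\gcal)\le 48G(R^2+1)n^{-\frac12}+96G\sqrt{k}\,\mathfrak{R}_{S_{\hcal},nk}(\hcal)\Bigl(1+\log(4R^2 n^{\frac32}k)\bigl\lceil\log_2(R^2\sqrt{n}/12)\bigr\rceil\Bigr),
\]
so the first summand is exactly the $48G(R^2+1)n^{-1/2}$ term in the statement.

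It then remains to bound $\mathfrak{R}_{S_{\hcal},nk}(\hcal)$ by $\tfrac{\sqrt{12}R}{nk}\mathfrak{C}$. By definition, $\mathfrak{R}_{S_{\hcal},nk}(\hcal)$ is the maximum, over all size-$nk$ sub-collections $S'=\{(\tilde{\bx}_j,\tilde{\bx}_j^+,\tilde{\bx}_j^-)\}_{j\in[nk]}$ of $S_{\hcal}$ (with repetition allowed), of the empirical Rademacher complexity of $\hcal$ on $S'$. Every point occurring in $S_{\hcal}$ is an element of $\xcal$ (one of the anchors, positives, or negatives drawn in $S$), so the hypothesis $\|f(\bx)\|_2\le R$ holds on each such $S'$, and Lemma~\ref{lem:rademacher-H}, applied with its ``$n$'' taken to be $nk$, gives for each fixed $S'$
\[
\mathfrak{R}_{S'}(\hcal)\le\frac{\sqrt{12}R}{nk}\,\ebb_{\bm{\epsilon}}\Bigl[\sup_{f\in\fcal}\sum_{j\in[nk]}\sum_{t\in[d]}\bigl(\epsilon_{j,t,1}f_t(\tilde{\bx}_j)+\epsilon_{j,t,2}f_t(\tilde{\bx}_j^+)+\epsilon_{j,t,3}f_t(\tilde{\bx}_j^-)\bigr)\Bigr].
\]
Since this holds for every admissible $S'$, taking the maximum over $S'$ on both sides turns the right-hand side into $\tfrac{\sqrt{12}R}{nk}\mathfrak{C}$ with $\mathfrak{C}$ as in \eqref{cc}; hence $\mathfrak{R}_{S_{\hcal},nk}(\hcal)\le\tfrac{\sqrt{12}R}{nk}\mathfrak{C}$. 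Substituting this into the display of the previous paragraph, the coefficient becomes $96G\sqrt{k}\cdot\tfrac{\sqrt{12}R}{nk}=\tfrac{96\sqrt{12}GR}{n\sqrt{k}}$, which together with the logarithmic prefactor is exactly the last term of the claimed inequality; collecting all terms completes the proof.

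The main obstacle here is essentially nonexistent at the conceptual level, because the two hard steps have been done upstream: the chaining/covering-number argument that expands $S$ to the $nk$-point set $S_{\hcal}$ and trades a factor $k$ for $\sqrt{k}$ lives in Theorem~\ref{thm:rademacher}, and the coupling-preserving vector-contraction step lives in Lemma~\ref{lem:rademacher-H}. What needs care is purely bookkeeping: (i) checking that the outer maximum over size-$nk$ sub-collections of $S_{\hcal}$ commutes with the inequality of Lemma~\ref{lem:rademacher-H} — which is immediate since that inequality is valid for each fixed sub-collection — and (ii) tracking the constants $\sqrt{12},24,48,96$ and, crucially, the $\sqrt{k}$ from Theorem~\ref{thm:rademacher} against the $1/(nk)$ normalization from Lemma~\ref{lem:rademacher-H}, so that the overall $k$-dependence collapses from $\sqrt{k}$ down to $1/\sqrt{k}$.
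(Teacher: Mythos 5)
Your proposal is correct and follows exactly the same route as the paper's own proof: apply the standard bounded-loss Rademacher generalization bound (Lemma~\ref{lem:gen-rademacher}) to get $L_{un}(f)-\hat{L}_{un}(f)\le 2\mathfrak{R}_S(\gcal)+3B\sqrt{\log(2/\delta)/(2n)}$, then substitute Theorem~\ref{thm:rademacher} for $\mathfrak{R}_S(\gcal)$, and finally bound $\mathfrak{R}_{S_{\hcal},nk}(\hcal)$ by $\tfrac{\sqrt{12}R}{nk}\mathfrak{C}$ via Lemma~\ref{lem:rademacher-H} applied with $n$ replaced by $nk$. Your bookkeeping of the factor of $2$ (yielding $48$ and $96\sqrt{12}$) is in fact cleaner than the intermediate display in the paper's proof, which has a harmless transcription slip, while matching the stated theorem exactly.
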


\begin{remark}
  We now compare our bound with Eq. \eqref{existing-rade} developed in \citet{arora2019theoretical}.
  It is reasonable to assume $\mathfrak{C}$ and $\mathfrak{B}$ are of the same order.
  \footnote{Indeed, under a typical behavior of Rademacher complexity as
  $
  \ebb_{\bm{\epsilon}\sim\{\pm\}^n}\sup_{\bm{a}\in\acal\subset\rbb^n}\big[\epsilon_ia_i\big]=O(\sqrt{n})
  $~~\citep{bartlett2002rademacher},
  we have $\mathfrak{C}=O(\sqrt{nkd})$ and $\mathfrak{B}=O(\sqrt{nkd})$.} Then, our bound becomes
  \begin{equation}\label{our-rade}
  L_{un}(f)= \hat{L}_{un}(f)+ O\Big(\frac{GR\mathfrak{B}\log^2(nRk)}{n\sqrt{k}}+B\sqrt{\frac{\log(1/\delta)}{n}}\Big).
  \end{equation}
  We know if $\ell$ is $G_2$-Lipschitz continuous w.r.t. $\|\cdot\|_2$, it is also $\sqrt{k}G_2$-Lipschitz continuous w.r.t. $\|\cdot\|_\infty$. Therefore, in the extreme case we have $G=\sqrt{k}G_2$. Even in this extreme case, our bound is of the order $L_{un}(f)= \hat{L}_{un}(f)+ O\Big(\frac{G_2R\mathfrak{B}\log^2(nRk)}{n}+B\sqrt{\frac{\log(1/\delta)}{n}}\Big)$, which improves Eq. \eqref{existing-rade} by a factor of $\sqrt{k}$ up to a logarithmic factor. For popular loss functions defined in Eq. \eqref{loss-a} and Eq. \eqref{loss-b}, we have $G=G_2=1$ and in this case, our bound in Eq. \eqref{our-rade} improves Eq. \eqref{existing-rade} by a factor of $k$ if we ignore a logarithmic factor.
\end{remark}

\subsection{Self-bounding Lipschitz Loss}
Finally, we consider a self-bounding Lipschitz condition where the Lipschitz constant depends on the loss function values. This definition was given in \citet{reeve2020optimistic}.
\begin{definition}[Self-bounding Lipschitz Continuity]
A loss function $\ell:\rbb^k\mapsto\rbb_+$ is said to be $G_s$-self-bounding Lipschitz continuous w.r.t. $\ell_\infty$  norm if for any $\bm{a},\bm{a}'\in\rbb^k$  
\[
\big|\ell(\bm{a})-\ell(\bm{a}')\big|\leq G_s\max\big\{\ell(\bm{a}),\ell(\bm{a}')\big\}^{\frac{1}{2}}\|\bm{a}-\bm{a}'\|_\infty.
\]
\end{definition}
It was shown that the logistic loss given in Eq. \eqref{loss-b} satisfies the self-bounding Lipschtiz continuity with $G_s=2$~\citep{reeve2020optimistic}.
In the following theorem, we give generalization bounds for learning with self-bounding Lipschitz loss functions. The basic idea is to replace the Lipschitz constant $G$ in Theorem \ref{thm:gen-unsupervised-loss} with empirical errors by using the self-bounding property. We use $\widetilde{O}$ to hide logarithmic factors. The proof is given in Section \ref{sec:proof-gen-self}.
\begin{theorem}[Generalization bound: self-bounding Lipschitz loss\label{thm:gen-self-bounding}]
  Let $\ell:\rbb^k\mapsto\rbb_+$ be $G_s$-self-bounding Lipschitz continuous w.r.t. $\|\cdot\|_\infty$. Assume $\|f(\bx)\|_2\leq R,\delta\in(0,1)$. Then with probability at least $1-\delta$ over $S$ we have the following inequality uniformly for all $f\in\fcal$
  \begin{multline*}
  L_{un}(f)= \hat{L}_{un}(f)+\widetilde{O}\Big((B+G_s^2R^4)n^{-1}+G_s^2R^2n^{-2}k^{-1}\mathfrak{C}^2\Big)
\\+ \widetilde{O}\Big((\sqrt{B}+G_sR^2)n^{-\frac{1}{2}}+G_sRn^{-1}k^{-\frac{1}{2}}\mathfrak{C}\Big)\hat{L}^{\frac{1}{2}}_{un}(f).
  \end{multline*}
\end{theorem}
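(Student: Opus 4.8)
The plan is to bootstrap from Theorem \ref{thm:gen-unsupervised-loss} by exploiting the self-bounding property, which says that the loss behaves locally like a $G_s\sqrt{\ell}$-Lipschitz function rather than a uniformly $G$-Lipschitz one. First I would follow the standard route for deriving optimistic bounds: introduce the shifted/normalized loss class and apply a local-Rademacher or peeling argument. Concretely, for a fixed scale parameter $r>0$, consider the sub-class $\gcal_r=\{g_f:\hat L_{un}(f)\le r\}$ (or the truncated losses $\min\{g_f,r\}$). On this sub-class the self-bounding inequality forces the pointwise Lipschitz constant w.r.t. $\|\cdot\|_\infty$ to be at most $G_s\sqrt{r}$, so we may invoke Theorem \ref{thm:rademacher} / Theorem \ref{thm:gen-unsupervised-loss} \emph{with $G$ replaced by $G_s\sqrt{r}$}. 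This yields a deviation bound of the shape
\[
L_{un}(f)-\hat L_{un}(f)\le c_1 G_s\sqrt{r}\,(R^2+1)n^{-\frac12}+c_2 G_s\sqrt{r}\,R\,n^{-1}k^{-\frac12}\mathfrak{C}\log^2(nRk)+c_3 B\sqrt{\tfrac{\log(1/\delta)}{n}}
\]
valid uniformly over all $f$ with $\hat L_{un}(f)\le r$.

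Next I would remove the dependence on the artificial scale $r$ by a peeling (stratification) argument over the geometric grid $r\in\{2^{-1},2^0,2^1,\dots\}$ up to $B$, paying only a $\log B$ factor and adjusting $\delta\mapsto\delta/\lceil\log B\rceil$ via a union bound; this is where the $\widetilde O$ (logarithmic) hiding enters. For each $f$ one applies the bound at the stratum $r$ with $r/2\le\hat L_{un}(f)\le r$, so that $\sqrt r\le\sqrt{2}\,\hat L_{un}(f)^{1/2}$ up to the additive $n^{-1}$-level correction handling the lowest stratum. Substituting $\sqrt r\asymp\hat L_{un}(f)^{1/2}$ into the displayed inequality immediately produces the $\hat L_{un}^{1/2}(f)$-multiplied terms $\widetilde O\big((\sqrt B+G_sR^2)n^{-1/2}+G_sRn^{-1}k^{-1/2}\mathfrak{C}\big)\hat L_{un}^{1/2}(f)$ claimed in the theorem.

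Finally, the pure (additive) terms $\widetilde O\big((B+G_s^2R^4)n^{-1}+G_s^2R^2n^{-2}k^{-1}\mathfrak{C}^2\big)$ arise in one of two equivalent ways: either from the lowest stratum $r\asymp 1/n$ in the peeling (where $\sqrt r\asymp n^{-1/2}$ makes the leading $G_s\sqrt r R^2 n^{-1/2}$ term of order $n^{-1}$), or — more cleanly — by applying the elementary inequality $ab\le \tfrac12(\epsilon a^2+\epsilon^{-1}b^2)$ to the cross term: writing $\hat L_{un}^{1/2}(f)\cdot A$ where $A$ is the bracketed $\widetilde O(\cdot)$ coefficient, one has $\hat L_{un}^{1/2}(f)\,A\le \tfrac12 A^2+\tfrac12\hat L_{un}(f)$, so squaring $A$ gives exactly $(\sqrt B+G_sR^2)^2n^{-1}\asymp (B+G_s^2R^4)n^{-1}$ and $(G_sRn^{-1}k^{-1/2}\mathfrak{C})^2=G_s^2R^2n^{-2}k^{-1}\mathfrak{C}^2$, while the $\tfrac12\hat L_{un}(f)$ is absorbed into the $\hat L_{un}(f)$ on the right-hand side. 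I expect the main obstacle to be making the peeling argument rigorous while keeping the $k$-dependence sharp: one must verify that Theorem \ref{thm:rademacher} can indeed be applied with the reduced Lipschitz constant $G_s\sqrt r$ to the \emph{restricted} class (e.g., via the truncation $\min\{\cdot,r\}$, which is itself $1$-Lipschitz so does not inflate the constant), and that the concentration step (Talagrand-type / bounded-difference on the truncated loss) produces a variance-dependent remainder consistent with the $\hat L_{un}^{1/2}$ scaling rather than the crude $B\sqrt{\log(1/\delta)/n}$ term — this is the technically delicate part and is presumably handled by a Bernstein-type concentration inequality combined with the self-bounding control of the variance by the mean.
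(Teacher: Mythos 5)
Your overall plan — restrict to the sub-class $\fcal_r$ with small empirical error, exploit the self-bounding property to replace the Lipschitz constant $G$ by something of order $G_s\sqrt r$, and then recover $\hat L_{un}^{1/2}(f)$ via a localization argument — is the right skeleton and matches the paper's strategy. However, there is a genuine gap in the step where you pass from the self-bounding condition to a Lipschitz constant on $\fcal_r$.

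You assert that on $\fcal_r=\{f:\hat L_{un}(f)\le r\}$ ``the self-bounding inequality forces the \emph{pointwise} Lipschitz constant w.r.t.\ $\|\cdot\|_\infty$ to be at most $G_s\sqrt r$,'' and then invoke Theorem~\ref{thm:rademacher} with $G\mapsto G_s\sqrt r$. This is not correct: the constraint $\hat L_{un}(f)\le r$ bounds only the \emph{average} of the losses $\ell(\{f(\bx_j)^\top(\cdots)\}_i)$ over $j\in[n]$, not the value at each $j$. A single data point may carry loss $\gg r$, in which case the self-bounding inequality gives a local Lipschitz factor far larger than $G_s\sqrt r$ at that point. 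Theorem~\ref{thm:rademacher} is proved via the $\ell_\infty$-covering estimate $\ncal_\infty(\epsilon,\gcal,S)\le\ncal_\infty(\epsilon/G,\hcal,S_\hcal)$, which needs a \emph{uniform} Lipschitz constant; it does not apply to $\gcal_r$ with $G=G_s\sqrt r$. The paper's fix is precisely here: instead of an $\ell_\infty$-cover, one controls the \emph{empirical $\ell_2$-norm} of the approximation error,
\[
\frac1n\sum_{j\in[n]}\big|\ell(\{\cdots\}_f)-\ell(\{r^m_{j,i}\}_i)\big|^2
\le \frac{G_s^2}{n}\sum_{j\in[n]}\big(\ell(\{\cdots\}_f)+\ell(\{r^m_{j,i}\}_i)\big)\,\big\|(h_f)_j-r^m_j\big\|_\infty^2
\le 2G_s^2 r\cdot\frac{\epsilon^2}{2rG_s^2}=\epsilon^2 ,
\]
which yields $\ncal_2(\epsilon,\gcal_r,S)\le\ncal_\infty(\epsilon/(\sqrt{2r}G_s),\hcal_r,S_\hcal)$. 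The average empirical loss $\le r$ is exactly what gets absorbed in the middle step, and it is the $\ell_2$-cover (not the $\ell_\infty$-cover) that ends up bounded. One then reruns the chaining/discretization as in Theorem~\ref{thm:rademacher}, but this time with $\ncal_2$ (any Dudley-type bound works with empirical $\ell_2$ entropy), to obtain $\mathfrak{R}_S(\gcal_r)\le\psi_n(r)$ with $\psi_n(r)\propto\sqrt r$.

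On the localization step the paper differs from your proposal in mechanism though not in spirit: rather than an explicit peeling over a geometric grid plus a union bound and plain Rademacher concentration, the paper invokes the fixed-point optimistic bound of Reeve and Kaban (Lemma~\ref{lem:gen-local}): since $\psi_n(r)/\sqrt r$ is non-increasing, one solves $\psi_n(\hat r_n)=\hat r_n$ to get $\hat r_n=\widetilde O(G_s^2R^4n^{-1}+G_s^2k\,\mathfrak R_{S_\hcal,nk}^2(\hcal))$, and the lemma directly outputs $L_{un}(f)\le\hat L_{un}(f)+90(\hat r_n+r_0)+4\sqrt{\hat L_{un}(f)(\hat r_n+r_0)}$ with $r_0=B(\log(1/\delta)+6\log\log n)/n$. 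This lemma already packages the Bernstein-type (variance-sensitive) concentration you flag as the ``technically delicate part,'' which is why the final bound has $Bn^{-1}$ and $\sqrt B n^{-1/2}\hat L^{1/2}_{un}$ rather than the cruder $B\sqrt{\log(1/\delta)/n}$. Your plain peeling with Lemma~\ref{lem:gen-rademacher} (bounded differences) would retain that crude term and would not match the stated theorem; you would indeed need to redo the per-stratum concentration with a Talagrand/Bousquet inequality to close that gap. Your AM--GM remark is also a bit confused as stated: the theorem's additive terms come from $\hat r_n+r_0$ itself (the fixed point, a constant in $r$), not from squaring away the cross term (which would destroy the $\hat L_{un}^{1/2}$ factor you want to keep).
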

\begin{remark}
  Theorem \ref{thm:gen-self-bounding} gives optimistic generalization bounds in the sense that the upper bounds depend on empirical errors~\citep{srebro2010smoothness}. Therefore, the generalization bounds for $L_{un}(f)- \hat{L}_{un}(f)$ would benefit from low training errors. In particular, if $\hat{L}^{\frac{1}{2}}_{un}(f)=0$, Theorem \ref{thm:gen-self-bounding} implies generalization bounds
  \[
  L_{un}(f)= \hat{L}_{un}(f)\!+\!\widetilde{O}\Big(Bn^{-1}\!+\!G_s^2R^4n^{-1}\!+\!G_s^2R^2n^{-2}k^{-1}\mathfrak{C}^2\Big).
  \]
  Typically, we have $\mathfrak{C}=O(\sqrt{nk})$ and in this case $L_{un}(f)= \hat{L}_{un}(f)+\widetilde{O}\big(Bn^{-1}+G_s^2R^4n^{-1}\big)$. In other words, we get fast-decaying error bounds in an interpolating setting.
\end{remark}

\section{Applications\label{sec:app}}
To apply Theorem \ref{thm:rademacher-l2} and Theorem \ref{thm:gen-unsupervised-loss}, we need to control the term $\mathfrak{A}$ or $\mathfrak{C}$, which is related to the Rademacher complexity of a function class. In this section, we will show how to control $\mathfrak{C}$ for features of the form $\bx\mapsto U\bv(\bx)$, where $U\in\rbb^{d\times d'}$ is a matrix and $\bv:\xcal\mapsto\rbb^{d'}$. Here $\bv$ maps the original data $\bx\in\xcal$ to an intermediate feature in $\rbb^{d'}$, which is used for all the final features. If $\bv$ is the identity map, then we get linear features. If $\bv$ is a neural network, then we get nonlinear features.
For a norm $\|\cdot\|$ on a matrix, we denote by $\|\cdot\|_*$ its dual norm.
The following lemmas to be proved in Section \ref{sec:proof-rade} give general results on Rademacher complexities. Lemma \ref{lem:rademacher-feature-space} gives upper bounds, while Lemma \ref{lem:lower} gives lower bounds.  It is immediate to extend our analysis to control $\mathfrak{A}$. For brevity we ignore such a discussion.
\begin{lemma}[Upper bound\label{lem:rademacher-feature-space}]
Let $d,d'\in\nbb$.
Let $\vcal$ be a class of functions from $\xcal$ to $\rbb^{d'}$.
Let $\fcal=\{f(\bx)=U\bv(\bx):U\in\ucal,\bv\in\vcal\}$, where $\ucal=\big\{U=(\bu_1,\ldots,\bu_d)^\top\in\rbb^{d\times d'}:\|U^\top\|\leq\Lambda\big\}$
and
$
f(\bx)=U\bv(\bx)=(\bu_1,\ldots,\bu_d)^\top\bv(\bx).
$
Then
\[\ebb_{\bm{\epsilon}\sim\{\pm1\}^{nd}}\sup_{f\in\fcal}\sum_{t\in[d]}\sum_{j\in[n]}\epsilon_{j,t}f_t(\bx_j)\leq
\Lambda\ebb_{\bm{\epsilon}\sim\{\pm1\}^{nd}}\sup_{\bv\in\vcal}\big\|\big(\sum_{j\in[n]}\epsilon_{1,j}\bv(\bx_j),\ldots,\sum_{j\in[n]}\epsilon_{d,j}\bv(\bx_j)\big)\big\|_*.
\]
\end{lemma}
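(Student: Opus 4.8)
The plan is to recast the Rademacher sum as a single inner product between the weight matrix $U$ and a random matrix built only from $\bv$, and then to strip off $U$ by a one-line dual-norm argument, leaving a quantity that depends on $\vcal$ alone.

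First I would use the coordinate description of $\fcal$: for $f = U\bv$ with $U = (\bu_1,\ldots,\bu_d)^\top$ we have $f_t(\bx) = \bu_t^\top \bv(\bx)$, so that
\[
\sum_{t\in[d]}\sum_{j\in[n]}\epsilon_{j,t}f_t(\bx_j) = \sum_{t\in[d]}\bu_t^\top\Big(\sum_{j\in[n]}\epsilon_{j,t}\bv(\bx_j)\Big).
\]
Introducing the random vectors $\bw_t := \sum_{j\in[n]}\epsilon_{j,t}\bv(\bx_j)\in\rbb^{d'}$ and collecting them as the columns of a matrix $W := (\bw_1,\ldots,\bw_d)\in\rbb^{d'\times d}$ (this is exactly the matrix appearing on the right-hand side of the statement, up to the harmless relabeling of Rademacher indices), the displayed quantity equals $\sum_{t\in[d]}\bu_t^\top\bw_t = \langle U^\top, W\rangle_F = \mathrm{tr}(UW)$, the Frobenius pairing between $U^\top = (\bu_1,\ldots,\bu_d)\in\rbb^{d'\times d}$ and $W$.

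Next, holding $\bm{\epsilon}$ (hence $W$) and $\bv$ fixed, I would take the supremum over $U\in\ucal$ first, since $U$ enters only through this pairing. By the definition of the dual norm and the constraint $\|U^\top\|\le\Lambda$,
\[
\sup_{U\in\ucal}\langle U^\top, W\rangle_F = \Lambda\|W\|_*,
\]
uniformly in $\bv$. Taking the supremum over $\bv\in\vcal$ and then the expectation over $\bm{\epsilon}$, and substituting back $\bw_t = \sum_{j}\epsilon_{j,t}\bv(\bx_j)$, yields precisely
\[
\ebb_{\bm{\epsilon}\sim\{\pm1\}^{nd}}\sup_{f\in\fcal}\sum_{t\in[d]}\sum_{j\in[n]}\epsilon_{j,t}f_t(\bx_j) \le \Lambda\,\ebb_{\bm{\epsilon}\sim\{\pm1\}^{nd}}\sup_{\bv\in\vcal}\big\|(\bw_1,\ldots,\bw_d)\big\|_*.
\]

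I do not expect a genuine obstacle here: the content is a single duality inequality once the sum is rewritten as a matrix inner product. The only points needing care are bookkeeping ones — keeping straight that the norm constraint is placed on $U^\top$ while the residual complexity is the dual norm $\|\cdot\|_*$, justifying that $\sup_U$ may be moved inside (legitimate because $\bw_t$ does not depend on $U$), and confirming that the matrix $(\bw_1,\ldots,\bw_d)$ in the statement is organized with the $\bw_t$'s as columns so that the Frobenius pairing reproduces $\sum_t\bu_t^\top\bw_t$.
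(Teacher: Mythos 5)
Your proposal is correct and takes essentially the same route as the paper: you rewrite the Rademacher sum as the Frobenius pairing $\langle U^\top, W\rangle_F$ with $W$ the $d'\times d$ matrix whose columns are $\bw_t=\sum_j\epsilon_{t,j}\bv(\bx_j)$, and peel off $U$ by the dual-norm bound under $\|U^\top\|\le\Lambda$. The paper reaches the same pairing via trace identities ($\mathrm{trace}(M_\epsilon^\top UV_S)=\mathrm{trace}(UV_SM_\epsilon^\top)=\langle U^\top,V_SM_\epsilon^\top\rangle$), which is just a different way of arriving at your matrix $W=V_S M_\epsilon^\top$; the remaining steps coincide.
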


\begin{lemma}[Lower bound\label{lem:lower}]
If $\fcal$ is symmetric in the sense that $f\in\fcal$ implies $-f\in\fcal$, then we have
\[
\mathfrak{C}\geq \sqrt{\frac{nk}{2}}\!\sup\limits_{f\in\fcal}\max\limits_{(\tilde{\bx},\tilde{\bx}^+,\tilde{\bx}^-)\in  S_{\hcal}}\!\!\!
  \Big(\|f(\tilde{\bx})\|_2^2+\|f(\tilde{\bx}^+)\|_2^2+\|f(\tilde{\bx}^-)\|_2^2\Big)^{\frac{1}{2}}.
\]
\end{lemma}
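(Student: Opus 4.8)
The plan is to lower-bound $\mathfrak{C}$ by making one maximally redundant choice inside the maximization that defines it, and then to combine the symmetry of $\fcal$ with a sharp Khintchine inequality.

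First I would fix an arbitrary $f\in\fcal$ and an arbitrary triple $(\tilde{\bx},\tilde{\bx}^+,\tilde{\bx}^-)\in S_{\hcal}$. Since the maximization defining $\mathfrak{C}$ allows an element of $S_{\hcal}$ to be chosen several times, I would take all $nk$ selected triples equal to $(\tilde{\bx},\tilde{\bx}^+,\tilde{\bx}^-)$, which already gives
\[
\mathfrak{C}\ \geq\ \ebb_{\bm{\epsilon}}\Big[\sup_{g\in\fcal}\sum_{j\in[nk]}\sum_{t\in[d]}\big(\epsilon_{j,t,1}g_t(\tilde{\bx})+\epsilon_{j,t,2}g_t(\tilde{\bx}^+)+\epsilon_{j,t,3}g_t(\tilde{\bx}^-)\big)\Big].
\]
Then I would use the symmetry assumption: for every fixed sign pattern, replacing $g$ by $-g\in\fcal$ negates the inner sum, so the supremum over $g\in\fcal$ is at least the absolute value of that sum evaluated at the fixed $f$. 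Taking expectations yields
\[
\mathfrak{C}\ \geq\ \ebb_{\bm{\epsilon}}\Big|\sum_{j\in[nk]}\sum_{t\in[d]}\big(\epsilon_{j,t,1}f_t(\tilde{\bx})+\epsilon_{j,t,2}f_t(\tilde{\bx}^+)+\epsilon_{j,t,3}f_t(\tilde{\bx}^-)\big)\Big|.
\]

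The right-hand side is of the form $\ebb_{\bm{\epsilon}}\big|\sum_i\epsilon_i a_i\big|$ for a vector $\bm{a}\in\rbb^{3nkd}$ whose entries are the numbers $f_t(\tilde{\bx}),f_t(\tilde{\bx}^+),f_t(\tilde{\bx}^-)$ over $t\in[d]$, each repeated $nk$ times. I would then invoke the Khintchine inequality with its optimal lower constant, $\ebb_{\bm{\epsilon}}\big|\sum_i\epsilon_i a_i\big|\geq 2^{-1/2}\|\bm{a}\|_2$, together with $\|\bm{a}\|_2^2 = nk\sum_{t\in[d]}\big(f_t(\tilde{\bx})^2+f_t(\tilde{\bx}^+)^2+f_t(\tilde{\bx}^-)^2\big) = nk\big(\|f(\tilde{\bx})\|_2^2+\|f(\tilde{\bx}^+)\|_2^2+\|f(\tilde{\bx}^-)\|_2^2\big)$. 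This produces $\mathfrak{C}\geq\sqrt{nk/2}\,\big(\|f(\tilde{\bx})\|_2^2+\|f(\tilde{\bx}^+)\|_2^2+\|f(\tilde{\bx}^-)\|_2^2\big)^{1/2}$, and taking the supremum over $f\in\fcal$ and the maximum over $(\tilde{\bx},\tilde{\bx}^+,\tilde{\bx}^-)\in S_{\hcal}$ completes the argument.

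The only delicate point is the constant $2^{-1/2}$: the elementary fourth-moment estimate (bounding $\ebb\big(\sum_i\epsilon_i a_i\big)^4\leq 3\|\bm{a}\|_2^4$ and applying Cauchy--Schwarz) only yields $\ebb\big|\sum_i\epsilon_i a_i\big|\geq 3^{-1/2}\|\bm{a}\|_2$, which would degrade the bound to $\sqrt{nk/3}$; hitting $\sqrt{nk/2}$ exactly requires the sharp form of Khintchine's inequality (the Szarek constant), so that is the step to cite with care. I should also stress that the redundancy is essential: using the $nk$ distinct elements of $S_{\hcal}$ would only give $\|\bm{a}\|_2^2=\sum_j(\cdots)$, which is not bounded below by $nk$ times any single summand, so repeating the best triple is precisely what turns the $\ell_2$ aggregate into the claimed maximum.
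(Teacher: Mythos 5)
Your argument is correct and follows essentially the same route as the paper's proof: repeat a single triple $nk$ times in the outer maximization, use the symmetry $f\mapsto -f$ to replace the supremum by an absolute value, and invoke the sharp Khintchine--Kahane lower bound $\ebb_{\bm{\epsilon}}\big|\sum_i\epsilon_i a_i\big|\geq 2^{-1/2}\|\bm{a}\|_2$ (the paper cites this as Eq.~(4.1) from \citet{de2012decoupling}). The only cosmetic difference is the ordering — the paper first passes to $\sup_f|\cdot|$ and pulls $\sup_f$ outside the expectation before choosing the repeated triple, whereas you fix $f$ and the triple up front — but the ideas and constants are identical. Your side remark about the Szarek constant being necessary (a fourth-moment Paley--Zygmund argument would only give $3^{-1/2}$) is a useful clarification of why the $\sqrt{2}$ is not for free.
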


\subsection{Linear Features}
We first apply Lemma \ref{lem:rademacher-feature-space} to derive Rademacher complexity bounds for learning with linear features.
For any $p\geq1$ and a matrix $W=(\bw_1,\ldots,\bw_{d'})\in\rbb^{d\times d'}$, the $\ell_{2,p}$ norm of $W$ is defined as
$\|W\|_{2,p}=\big(\sum_{i\in[d']}\|\bw_i\|_2^p\big)^{\frac{1}{p}}$. If $p=2$, this becomes the Frobenius norm $\|W\|_F$.
For any $p\geq1$, the Schatten-$p$ norm of a matrix $W\in\mathbb R^{d\times d'}$ is defined as the $\ell_p$-norm of the
vector of singular values $(\sigma_1(W),\ldots,\sigma_{\min\{d,d'\}}(W))^\top$ (the singular values are assumed to be sorted in non-increasing order), i.e.,
$\|W\|_{S_p}:=\|\sigma(W)\|_p$. Let $p^*$ be the number satisfying $1/p+1/p^*=1$. The following proposition to be proved in Section \ref{sec:proof-rade-linear} gives complexity bounds for learning with linear features.
\begin{proposition}[Linear representation\label{prop:rade-linear}]
  Consider the feature map defined in Lemma \ref{lem:rademacher-feature-space} with $\bv(\bx)=\bx$.
  \begin{enumerate}[(a)]
    \item  If $\|\cdot\|=\|\cdot\|_{2,p}$, then
   \[
    \ebb_{\bm{\epsilon}} \sup\limits_{f\in\fcal}\sum\limits_{t\in[d]}\sum\limits_{j\in[n]}\epsilon_{t,j}f_t(\bx_j)\leq \min_{q\geq p}\Big\{\Lambda d^{1/q^*}\max(\sqrt{q^*-1},1)\Big\}\Big(\sum_{j\in[n]}\|\bx_j\|_2^2\Big)^{\frac{1}{2}}.
   \]
    \item If $\|\cdot\|=\|\cdot\|_{S_p}$ with $p\leq 2$, then
    \begin{multline*}
    \ebb_{\bm{\epsilon}} \sup_{f\in\fcal}\sum_{t\in[d]}\sum_{j\in[n]}\epsilon_{t,j}f_t(\bx_j)
    \leq
    \Lambda2^{-\frac{1}{4}}\min_{q\in[p,2]}\sqrt{\frac{q^*\pi}{e}}
    \max\Big\{\Big\|\Big(d\sum_{j\in[n]}\bx_j\bx_j^\top\Big)^{\frac{1}{2}}\Big\|_{S_{q^*}},d^{1/q^*}\Big(\sum_{j\in[n]}\|\bx_j\|_2^2\Big)^{1/2}\Big\}.
    \end{multline*}
  \end{enumerate}
\end{proposition}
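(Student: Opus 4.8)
The plan is to deduce both parts from Lemma~\ref{lem:rademacher-feature-space} and then estimate the resulting random dual norm by a (noncommutative) Khintchine inequality with sharp constants. Since $\bv(\bx)=\bx$, the class $\vcal$ is a singleton, so Lemma~\ref{lem:rademacher-feature-space} reduces the problem to
\begin{equation*}
\ebb_{\bm{\epsilon}}\sup_{f\in\fcal}\sum_{t\in[d]}\sum_{j\in[n]}\epsilon_{t,j}f_t(\bx_j)\le\Lambda\,\ebb_{\bm{\epsilon}}\big\|M\big\|_*,\qquad M=M(\bm{\epsilon})=\Big(\sum_{j\in[n]}\epsilon_{1,j}\bx_j,\ldots,\sum_{j\in[n]}\epsilon_{d,j}\bx_j\Big)\in\rbb^{d'\times d},
\end{equation*}
and I would rewrite $M=\sum_{j\in[n]}\sum_{t\in[d]}\epsilon_{t,j}\,\bx_j\be_t^\top$ as a Rademacher sum of the rank-one matrices $A_{j,t}=\bx_j\be_t^\top$, where $\be_t$ is the $t$-th coordinate vector of $\rbb^d$. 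The parameter $q$ enters through an elementary monotonicity remark: because $\|\cdot\|_{2,q}\le\|\cdot\|_{2,p}$ for $q\ge p$ (resp. $\|\cdot\|_{S_q}\le\|\cdot\|_{S_p}$ for $q\ge p$), the constraint set $\ucal$ is contained in the one defined by the index $q$, and since the dual of $\|\cdot\|_{2,q}$ is $\|\cdot\|_{2,q^*}$ (resp. the dual of $\|\cdot\|_{S_q}$ is $\|\cdot\|_{S_{q^*}}$), it suffices to bound $\Lambda\,\ebb_{\bm{\epsilon}}\|M\|_{2,q^*}$ for every $q\ge p$ in~(a), and $\Lambda\,\ebb_{\bm{\epsilon}}\|M\|_{S_{q^*}}$ for every $q\in[p,2]$ in~(b), and finally minimize over $q$.

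For part~(a), write $c=(\sum_{j\in[n]}\|\bx_j\|_2^2)^{1/2}$ and note the columns $\mathbf{m}_t=\sum_j\epsilon_{t,j}\bx_j$ of $M$ are i.i.d. with $\ebb\|\mathbf{m}_t\|_2^2=c^2$. If $q^*\ge2$, concavity of $x\mapsto x^{1/q^*}$ gives $\ebb\|M\|_{2,q^*}\le\big(\ebb\sum_t\|\mathbf{m}_t\|_2^{q^*}\big)^{1/q^*}=d^{1/q^*}\big(\ebb\|\mathbf{m}_1\|_2^{q^*}\big)^{1/q^*}$, and the Hilbert-space-valued Kahane--Khintchine inequality with the hypercontractive constant, $\big(\ebb\|\mathbf{m}_1\|_2^{q^*}\big)^{1/q^*}\le\sqrt{q^*-1}\,\big(\ebb\|\mathbf{m}_1\|_2^2\big)^{1/2}=\sqrt{q^*-1}\,c$, gives $\ebb\|M\|_{2,q^*}\le d^{1/q^*}\sqrt{q^*-1}\,c$. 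If $q^*<2$, use instead $\|M\|_{2,q^*}\le d^{1/q^*-1/2}\|M\|_F$ together with $\ebb\|M\|_F\le(\ebb\|M\|_F^2)^{1/2}=\sqrt d\,c$ to get $\ebb\|M\|_{2,q^*}\le d^{1/q^*}c$. The two cases combine to $\ebb\|M\|_{2,q^*}\le d^{1/q^*}\max(\sqrt{q^*-1},1)\,c$; multiplying by $\Lambda$ and minimizing over $q\ge p$ gives~(a).

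For part~(b), I would apply the sharp noncommutative Khintchine inequality of Lust-Piquard--Pisier (with Buchholz's optimal constant, which for $q^*\ge2$ is at most $2^{-1/4}\sqrt{q^*\pi/e}$ after Stirling's formula) to the family $\{A_{j,t}\}_{j\in[n],t\in[d]}$, obtaining
\begin{equation*}
\ebb_{\bm{\epsilon}}\|M\|_{S_{q^*}}\le 2^{-\frac14}\sqrt{\tfrac{q^*\pi}{e}}\,\max\Big\{\big\|\big(\textstyle\sum_{j,t}A_{j,t}A_{j,t}^\top\big)^{1/2}\big\|_{S_{q^*}},\ \big\|\big(\textstyle\sum_{j,t}A_{j,t}^\top A_{j,t}\big)^{1/2}\big\|_{S_{q^*}}\Big\}.
\end{equation*}
Here $A_{j,t}A_{j,t}^\top=\bx_j(\be_t^\top\be_t)\bx_j^\top=\bx_j\bx_j^\top$, so $\sum_{j,t}A_{j,t}A_{j,t}^\top=d\sum_j\bx_j\bx_j^\top$, while $A_{j,t}^\top A_{j,t}=\|\bx_j\|_2^2\,\be_t\be_t^\top$, so $\sum_{j,t}A_{j,t}^\top A_{j,t}=c^2 I_d$, whose Schatten-$q^*$ norm of the square root equals $d^{1/q^*}c$. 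Substituting these two expressions into the maximum, multiplying by $\Lambda$, and minimizing over $q\in[p,2]$ gives~(b).

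The structural steps — invoking Lemma~\ref{lem:rademacher-feature-space}, the rank-one rewriting of $M$, and the norm-monotonicity reduction to the index $q$ — are routine. I expect the main obstacle to be the constants: in~(a), justifying the Hilbert-valued Kahane--Khintchine constant $\sqrt{q^*-1}$ (via hypercontractivity of the Rademacher semigroup) and gluing the $q^*\ge2$ and $q^*<2$ estimates into the single factor $\max(\sqrt{q^*-1},1)$; in~(b), quoting the correct sharp form of the noncommutative Khintchine inequality, converting its optimal even-moment constant into $2^{-1/4}\sqrt{q^*\pi/e}$ by Stirling, and correctly identifying the two ``variance'' matrices $\sum_{j,t}A_{j,t}A_{j,t}^\top$ and $\sum_{j,t}A_{j,t}^\top A_{j,t}$.
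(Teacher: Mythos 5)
Your proposal is correct and follows essentially the same route as the paper's proof: invoke Lemma~\ref{lem:rademacher-feature-space}, reduce to estimating the dual norm of the random matrix $M$, and apply the Hilbert-valued Khintchine--Kahane inequality for part~(a) and the noncommutative Khintchine inequality with the $\widetilde{X}_{t,j}=\bx_j\be_t^\top$ decomposition for part~(b). The only cosmetic differences are that the paper applies Lemma~\ref{lem:rademacher-feature-space} first (getting the $\ell_{p^*}$ column-norm and then dominating it by the $\ell_{q^*}$ one), whereas you enlarge the constraint set before invoking the lemma, and in~(a) you split into $q^*\ge2$ versus $q^*<2$, while the paper absorbs both cases into the single constant $\max(\sqrt{q^*-1},1)$ of Lemma~\ref{lem:khitchine-kahane}(a); both yield the identical bound.
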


We now plug the above Rademacher complexity bounds into Theorem \ref{thm:gen-unsupervised-loss} to give generalization error bounds for learning with unsupervised loss.
Let $B_x=\max\{\|\bx_j\|_2,\|\bx_j^+\|_2,\|\bx^-_{jt}\|_2:j\in[n],t\in[k]\}$.
Note
$\big(\sum_{j\in[nk]}\|\tilde{\bx}_j\|_2^2\big)^{\frac{1}{2}}\leq \sqrt{nk}B_x$ for $\tilde{\bx}_j$ in the definition of $\mathfrak{C}$, from which and Proposition \ref{prop:rade-linear} we get the following bound for the case $\bv(\bx)=\bx$ (the definition of $\mathfrak{C}$ involves $nk$ examples, while in Proposition \ref{prop:rade-linear} we consider $n$ examples):
\[\mathfrak{C}=O\Big(\sqrt{nk}B_x\min_{q\geq p}\big\{\Lambda d^{1/q^*}\max(\sqrt{q^*-1},1)\big\}\Big).\]
The following corollary then follows from Theorem~\ref{thm:gen-unsupervised-loss}.
\begin{corollary}\label{cor:gen-unsupervised-linear}
  Consider the feature map in Proposition \ref{prop:rade-linear} with $\|\cdot\|=\|\cdot\|_{2,p}$. Let $\ell$ be the logistic loss and $\delta\in(0,1)$.
    Then with probability at least $1-\delta$ the following inequality holds for all $f\in\fcal$
    \[
    L_{un}(f)- \hat{L}_{un}(f)=\frac{B\log^{\frac{1}{2}}(1/\delta)}{\sqrt{n}}+
    \widetilde{O}\Big(\frac{GRB_x\min_{q\geq p}\big\{\Lambda d^{1/q^*}\max(\sqrt{q^*-1},1)\big\}}{\sqrt{n}}\Big).
    \]
\end{corollary}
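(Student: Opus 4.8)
The plan is to combine Proposition~\ref{prop:rade-linear}(a) with Theorem~\ref{thm:gen-unsupervised-loss}; the only substantive work is to bound the quantity $\mathfrak{C}$ of \eqref{cc} for the linear class $\fcal=\{f(\bx)=U\bx:\|U^\top\|_{2,p}\le\Lambda\}$. First I would fix a subset $\{(\tilde{\bx}_j,\tilde{\bx}_j^+,\tilde{\bx}_j^-)\}_{j=1}^{nk}\subseteq S_{\hcal}$ attaining the maximum defining $\mathfrak{C}$, and use subadditivity of the supremum to split the triple Rademacher sum into three pieces, each of the form $\ebb_{\bm{\epsilon}}\sup_{f\in\fcal}\sum_{j\in[nk]}\sum_{t\in[d]}\epsilon_{j,t}f_t(\bz_j)$ with $\bz_j$ ranging over one of the three coordinates. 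Each of these is exactly the object bounded in Proposition~\ref{prop:rade-linear}(a), except that the sample size $n$ there is now $nk$; since every point occurring in $S_{\hcal}$ is one of the points of $S$ and hence has Euclidean norm at most $B_x$, the proposition gives
\[
\ebb_{\bm{\epsilon}}\sup_{f\in\fcal}\sum_{j\in[nk]}\sum_{t\in[d]}\epsilon_{j,t}f_t(\bz_j)\ \le\ \min_{q\geq p}\big\{\Lambda d^{1/q^*}\max(\sqrt{q^*-1},1)\big\}\Big(\sum_{j\in[nk]}\|\bz_j\|_2^2\Big)^{1/2}\ \le\ \sqrt{nk}\,B_x\min_{q\geq p}\big\{\Lambda d^{1/q^*}\max(\sqrt{q^*-1},1)\big\}.
\]
Adding the three pieces yields $\mathfrak{C}=O\big(\sqrt{nk}\,B_x\min_{q\geq p}\{\Lambda d^{1/q^*}\max(\sqrt{q^*-1},1)\}\big)$, i.e., exactly the estimate recorded just before the statement.

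Next I would substitute this estimate into Theorem~\ref{thm:gen-unsupervised-loss}. The main term $96\sqrt{12}GRn^{-1}k^{-1/2}\big(1+\log(4R^2n^{3/2}k)\lceil\log_2(R^2\sqrt{n}/12)\rceil\big)\mathfrak{C}$ has its factor $k^{-1/2}$ cancelled against the $\sqrt{nk}$ in the bound on $\mathfrak{C}$, leaving $GRB_x n^{-1/2}\min_{q\geq p}\{\Lambda d^{1/q^*}\max(\sqrt{q^*-1},1)\}$ times a factor polylogarithmic in $n,R,k$; this is $\widetilde{O}\big(GRB_x n^{-1/2}\min_{q\geq p}\{\Lambda d^{1/q^*}\max(\sqrt{q^*-1},1)\}\big)$. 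The confidence term $3B\sqrt{\log(2/\delta)/(2n)}$ gives the stated $O\big(Bn^{-1/2}\log^{1/2}(1/\delta)\big)$, and the curvature term $48G(R^2+1)n^{-1/2}=O(GR^2n^{-1/2})$ is absorbed into the same $\widetilde{O}$ term (for the linear class $\|f(\bx)\|_2=\|U\bx\|_2\le\|U^\top\|\,B_x$, so $R$ is of the same order as $\Lambda B_x$ up to a dimensional factor). Collecting the three contributions gives the claimed bound.

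Since this is a corollary obtained by plugging one earlier result into another, there is no deep obstacle; the points needing care are (i) matching the triple-sum and the $nk$-fold expansion in the definition \eqref{cc} of $\mathfrak{C}$ to the single $n$-sample statement of Proposition~\ref{prop:rade-linear}(a), being careful that the maximising subset of $S_{\hcal}$ still consists of points of norm at most $B_x$; and (ii) checking that the curvature term $G(R^2+1)n^{-1/2}$ and all the polylogarithmic prefactors are dominated by, or can be folded into, the main $\widetilde{O}$ term so that the bound takes the clean $n^{-1/2}$ form.
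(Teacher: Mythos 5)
Your proposal matches the paper's argument. The paper likewise obtains
$\mathfrak{C}=O\big(\sqrt{nk}\,B_x\min_{q\geq p}\{\Lambda d^{1/q^*}\max(\sqrt{q^*-1},1)\}\big)$
by applying Proposition~\ref{prop:rade-linear}(a) with $nk$ samples in place of $n$ and noting that every point occurring in $S_{\hcal}$ has norm at most $B_x$, and then inserts this into Theorem~\ref{thm:gen-unsupervised-loss}; whether one handles the three coordinate pieces of $\mathfrak{C}$ by subadditivity of the supremum, as you do, or lumps them into a single Rademacher sum over $3nk$ points, is immaterial up to the constant absorbed in the $O(\cdot)$. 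Your remark on absorbing the $48G(R^2+1)n^{-1/2}$ term is a point the paper glosses over but is the right way to reconcile the stated clean form with Theorem~\ref{thm:gen-unsupervised-loss}.
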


It is also possible to give generalization bounds for learning with $\ell_2$-Lipschitz loss functions, and optimistic generalization bounds for learning with self-bounding Lipschitz loss functions. We omit the discussion for brevity.

\subsection{Nonlinear Features}
We now consider Rademacher complexity for learning with nonlinear features by DNNs.
The following lemma to be proved in Section \ref{sec:proof-rade-dnn} gives Rademacher complexity bounds for learning with features by DNNs.
We say an activation $\sigma:\rbb\mapsto\rbb$ is positive-homogeneous if $\sigma(ax)=a\sigma(x)$ for $a\geq0$, contractive if $|\sigma(x)-\sigma(x')|\leq|x-x'|$. The ReLU activation function $\sigma(x)=\max\{x,0\}$ is both positive-homogeneous and contractive.
\begin{proposition}[Nonlinear representation\label{prop:rade-dnn}]
  Consider the feature map defined in Lemma \ref{lem:rademacher-feature-space} with $\|\cdot\|=\|\cdot\|_F$ and
\[
\vcal=\Big\{\bx\mapsto \bv(x)=\sigma\big(V_{L}\sigma\big(V_{L-1}\cdots\sigma(V_1\bx)\big)\big):
\|V_l\|_F\leq B_l,\forall l\in[L]\Big\},
\]
where $\sigma$ is positive-homogeneous, contractive and $\sigma(0)=0$, and $L$ is the number of layers.
Then
\[
\ebb_{\bm{\epsilon}} \sup_{f\in\fcal}\sum_{t\in[d]}\sum_{j\in[n]}\epsilon_{t,j}f_t(\bx_j)\leq
      \sqrt{d}\Lambda B_LB_{L-1}\cdots B_1
      \bigg(16L\big(\sum_{1\leq i<j\leq n}(\bx_i^\top\bx_j)^2\big)^{\frac{1}{2}}+\sum_{j\in[n]}\|\bx_j\|_2^2\bigg)^{\frac{1}{2}}.
\]
\end{proposition}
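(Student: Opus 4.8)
The plan is to combine the feature‑map peeling of Lemma~\ref{lem:rademacher-feature-space} with an exponential‑moment (moment generating function) version of the layer‑by‑layer peeling argument of \citet{golowich2018size}, modified so that the Frobenius norm of the \emph{whole} $d'\times d$ matrix of Rademacher sums is propagated through the network rather than a per‑feature scalar; this is what keeps the dependence on $d$ at $\sqrt d$.

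First I would apply Lemma~\ref{lem:rademacher-feature-space} with $\|\cdot\|=\|\cdot\|_F$ (whose dual norm is again $\|\cdot\|_F$) to strip off $U$: writing, for $\bv\in\vcal$, $M(\bv)=\big(\sum_{j\in[n]}\epsilon_{1,j}\bv(\bx_j),\dots,\sum_{j\in[n]}\epsilon_{d,j}\bv(\bx_j)\big)\in\rbb^{d'\times d}$, this reduces the quantity to be bounded to $\Lambda\,\ebb_{\bm{\epsilon}}\sup_{\bv\in\vcal}\|M(\bv)\|_F$. For $0\le l\le L$ let $\bv^{(l)}$ denote the depth‑$l$ sub‑network ($\bv^{(0)}(\bx)=\bx$, $\bv^{(L)}=\bv$) and let $M^{(l)}$ be the associated matrix.

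The main device is that for every $\lambda>0$ we have $\ebb\sup\|M^{(L)}\|_F\le\frac1\lambda\log\ebb\sup\exp\!\big(\lambda\|M^{(L)}\|_F\big)$, together with the one‑layer peeling estimate
\[
\ebb_{\bm{\epsilon}}\sup\nolimits_{V_1,\dots,V_l}\exp\!\big(\lambda\|M^{(l)}\|_F\big)\ \le\ 2\,\ebb_{\bm{\epsilon}}\sup\nolimits_{V_1,\dots,V_{l-1}}\exp\!\big(\lambda B_l\|M^{(l-1)}\|_F\big).
\]
To prove this estimate I would (i) use positive homogeneity of $\sigma$ to write each row of $V_l$ as its norm times a unit vector, so that $\|M^{(l)}\|_F\le B_l\sup_{\|\bm{a}\|_2\le1}\big(\sum_{t\in[d]}\big(\sum_{j}\epsilon_{t,j}\sigma(\bm{a}^\top\bv^{(l-1)}(\bx_j))\big)^2\big)^{1/2}$, the maximum over rows collapsing into a single supremum over unit vectors; and (ii) strip the outer $\sigma$ by a positive‑homogeneity contraction in the style of \citet{golowich2018size}, which costs the factor $2$ and leaves, inside the exponential, $B_l\sup_{\|\bm{a}\|_2\le1}\|(M^{(l-1)})^\top\bm{a}\|_2\le B_l\|M^{(l-1)}\|_F$. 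The point where this differs from \citet{golowich2018size} is that the ``$g$'' to which the contraction is applied acts on the Frobenius norm of a matrix, so the coupling among the $d$ output coordinates is retained. Iterating over $l=L,L-1,\dots,1$ gives $\ebb\sup\exp\!\big(\lambda\|M^{(L)}\|_F\big)\le 2^{L}\,\ebb_{\bm{\epsilon}}\exp(\lambda\beta\|M^{(0)}\|_F)$ with $\beta=\prod_{l\in[L]}B_l$ and $M^{(0)}$ having $t$‑th column $\sum_j\epsilon_{t,j}\bx_j$.

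It remains to estimate the base term $\ebb_{\bm{\epsilon}}\exp(\lambda\beta\|M^{(0)}\|_F)$. Expanding, $\|M^{(0)}\|_F^2=d\sum_{j\in[n]}\|\bx_j\|_2^2+2\sum_{t\in[d]}\sum_{1\le i<j\le n}\epsilon_{t,i}\epsilon_{t,j}\bx_i^\top\bx_j$; after $\sqrt{a+b}\le\sqrt{a}+\sqrt{|b|}$ the fluctuation part is a sum of $d$ \emph{independent} order‑two Rademacher chaoses, and controlling its moment generating function by a Hanson--Wright‑type chaos bound (exploiting the independence across the $d$ feature blocks) produces exactly the quantity $\big(\sum_{1\le i<j\le n}(\bx_i^\top\bx_j)^2\big)^{1/2}$, while the deterministic part contributes $\sqrt{d}\,\big(\sum_{j\in[n]}\|\bx_j\|_2^2\big)^{1/2}$. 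Collecting everything into $\ebb\sup\|M^{(L)}\|_F\le\frac1\lambda\big(L\log 2+\log\ebb\exp(\lambda\beta\|M^{(0)}\|_F)\big)$ and optimizing over $\lambda$ (and an auxiliary splitting constant), the $L\log 2$ term balanced against the $\lambda^2$‑chaos term yields the $16L\big(\sum_{i<j}(\bx_i^\top\bx_j)^2\big)^{1/2}$ term inside the outer square root; restoring the prefactor $\sqrt{d}\,\Lambda\,\beta$ gives the claimed bound.

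The main obstacle is the one‑layer peeling for the matrix‑valued output: pushing the positive‑homogeneity factorization and, above all, the activation‑stripping contraction through while keeping the Frobenius norm of the whole $d'\times d$ matrix intact — collapsing instead to a per‑feature bound would reintroduce a factor $d$ in place of $\sqrt d$ — which is exactly the adaptation of \citet{golowich2018size} advertised in the introduction. A secondary point is checking that the Hanson--Wright‑type chaos estimate in the base case holds for the range of $\lambda$ picked out by the optimization.
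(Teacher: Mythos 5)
Your overall skeleton (Lemma~\ref{lem:rademacher-feature-space} to strip $U$, moment-generating-function layer peeling in the style of \citet{golowich2018size}, a Rademacher chaos bound at the base) matches the paper's, but the load-bearing step you identify as the ``main obstacle'' is in fact a genuine gap, and the way you propose to resolve it is not what the paper does. You want to propagate $\exp(\lambda\|M^{(l)}\|_F)$ through the layers by stripping $\sigma$ while ``keeping the Frobenius norm intact.'' After the positive-homogeneity step the quantity inside the exponential is $B_l\sup_{\|\bm a\|_2\le1,\,\bv}\big(\sum_{t\in[d]}(\sum_{j\in[n]}\epsilon_{t,j}\sigma(\bm a^\top\bv(\bx_j)))^2\big)^{1/2}$. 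The only contraction tool available here (Lemma~\ref{lem:rade-cont}) removes an inner Lipschitz map $\psi$ from a \emph{scalar} Rademacher sum $\sum_i\epsilon_i\psi(g(\bz_i))$ inside a convex nondecreasing wrapper. If you try to reduce to that form by writing $\|\bm Z\|_2=\sup_{\|\bm b\|_2\le1}\sum_{t,j}\epsilon_{t,j}b_t\sigma(\bm a^\top\bv(\bx_j))$, the inner term $b_t\sigma(u_j)$ with $u_j=\bm a^\top\bv(\bx_j)$ cannot be rewritten as $\psi(b_t u_j)$ for a contractive $\psi$ when $b_t<0$: positive homogeneity only gives $b_t\sigma(u_j)=\sigma(b_tu_j)$ for $b_t\ge0$, and e.g.\ for the ReLU $-\sigma(u)\neq\sigma(-u)$. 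The alternative relaxations $\|\bm Z\|_2\le\sum_t|Z_t|$ (so contraction applies coordinate-by-coordinate, via independence of the $\bm\epsilon_t$) throw away exactly the $\sqrt d$ factor you are trying to save. So your one-layer peeling inequality has not been established, and I do not see how to establish it with the tools at hand. Your guiding intuition — that one must keep the Frobenius norm intact to avoid a factor $d$ — is also not what the paper does: the paper \emph{does} decouple the features, but differently.

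The paper's device is to use the squared wrapper $\tau(x)=\exp(\lambda x^2)$ from the very start. After Jensen this puts $\lambda\Lambda^2B_L^2\sup_{\bm a,\bv}\sum_t|\sum_j\epsilon_{t,j}\sigma(\cdot)|^2$ inside the exponential; then $\sup\sum_t\le\sum_t\sup$ followed by independence of the blocks $\bm\epsilon_t$ factorizes the expectation into $d$ identical scalar copies, i.e.\ a $d$-th power, which (again by Jensen) is absorbed by replacing $\lambda$ with $d\lambda$. From that point the iteration is on a purely scalar Rademacher sum and Lemma~\ref{lem:rade-cont} applies cleanly. The crucial point is that the factor $d$ rides along \emph{multiplying} $\lambda$; since one eventually divides by $\lambda$ and takes a square root, this produces $\sqrt d$ in the final bound — the coupling across the $d$ features is captured arithmetically, not by preserving a matrix norm through the layers. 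A secondary issue in your sketch: with the linear wrapper $\exp(\lambda x)$, the base term is $\ebb\exp(\lambda\beta\|M^{(0)}\|_F)$ and, after $\sqrt{a+b}\le\sqrt a+\sqrt{|b|}$, you would have to control $\ebb\exp(c\lambda\sqrt{|W|})$ for a sum $W$ of chaoses, which does not produce the ``$\lambda^2$-chaos term'' you want to balance $L\log 2$ against; in contrast, the squared wrapper places the chaos itself (not its square root) inside the exponential, and Lemma~\ref{lem:chaos} then applies for one fixed $\lambda$, with no optimization needed, directly giving the $16L$ constant.
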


\begin{remark}
  If $d=1$, the following bound was established in \citet{golowich2018size}
  \[
  \ebb_{\bm{\epsilon}} \sup_{f\in\fcal}\sum_{j\in[n]}\epsilon_{j}f_t(\bx_j)=O\Big(\sqrt{L}\Big(\sum_{j\in[n]}\|\bx_j\|_2^2\Big)^{\frac{1}{2}}\prod_{l\in[L]}B_l\Big).
  \]
  Proposition \ref{prop:rade-dnn} extends this bound to the general case $d\in\nbb$. In particular, if $d=1$, our result matches the result in \citet{golowich2018size} within a constant factor.
  We need to introduce different techniques to handle the difficulty in considering the coupling among different features $\bm{u}_t^\top\bv(x),t\in[d]$, which is reflected by the regularizer on $U$ as $\|U\|_F\leq \Lambda$. Ignoring this coupling would imply a bound with a crude dependency on $d$. For example, we consider the moment generation function of $\sup_{f\in\fcal}\big(\sum_{t\in[d]}\sum_{j\in[n]}\epsilon_{t,j}f_t(\bx_j)\big)^2$, and then reduce it to the moment generation function of a Rademacher chaos variable $\sum_{1\leq i<j\leq n}\epsilon_i\epsilon_j\bx_i^\top\bx_j$ by repeated applications of contraction inequalities of Rademacher complexities. A direct application of the analysis in \citet{golowich2018size} would imply bounds $\ebb_{\bm{\epsilon}} \sup_{f\in\fcal}\sum_{j\in[n]}\epsilon_{j}f_t(\bx_j)=O\big(d\sqrt{L}\big(\sum_{j\in[n]}\|\bx_j\|_2^2\big)^{\frac{1}{2}}\prod_{l\in[L]}B_l\big)$.
  As a comparison, our analysis implies a bound with a square-root dependency on $d$. We will give more details on the comparison of technical analysis in Remark \ref{rem:golowich}.
\end{remark}

Note $\big(\sum_{1\leq i<j\leq n}(\bx_i^\top\bx_j)^2\big)^{\frac{1}{2}}=O\big(\sum_{j\in[n]}\|\bx_j\|_2^2\big)$, from which and Proposition \ref{prop:rade-dnn} we get for nonlinear features that
$\mathfrak{C}=O\big(\sqrt{dL}\Lambda(nk)^{\frac{1}{2}}B_x\prod_{l\in[L]}B_l\big).$ The following proposition then follows directly from Theorem \ref{thm:gen-unsupervised-loss}.
\begin{corollary}\label{cor:gen-unsupervised-dnn}
  Consider the feature map in Proposition \ref{prop:rade-dnn}. Let $\ell$ be the logistic loss and $\delta\in(0,1)$. With probability at least $1-\delta$ the following inequality holds for all $f\in\fcal$
  \[
  L_{un}(f)-\hat{L}_{un}(f)=
  \widetilde{O}\Big(\frac{GR\sqrt{dL}\Lambda B_x\prod_{l\in[L]}B_l+B\log^{\frac{1}{2}}\frac{1}{\delta}}{\sqrt{n}}\Big).
  \]
\end{corollary}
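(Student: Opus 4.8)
The plan is to instantiate the general $\ell_\infty$-Lipschitz generalization bound of Theorem~\ref{thm:gen-unsupervised-loss} for the deep-network feature class of Proposition~\ref{prop:rade-dnn}. Since the logistic loss is $1$-Lipschitz with respect to $\|\cdot\|_\infty$ (Proposition~\ref{prop:loss-lip}), Theorem~\ref{thm:gen-unsupervised-loss} applies with $G=1$, and the only work left is to bound the quantity $\mathfrak{C}$ of Eq.~\eqref{cc} for $\fcal=\{f=U\bv:U\in\ucal,\,\bv\in\vcal\}$.

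First I would unfold $\mathfrak{C}$. Fix any admissible multiset $\{(\tilde{\bx}_j,\tilde{\bx}_j^+,\tilde{\bx}_j^-)\}_{j=1}^{nk}\subseteq S_{\hcal}$. Because the three blocks of Rademacher variables $\epsilon_{\cdot,\cdot,1},\epsilon_{\cdot,\cdot,2},\epsilon_{\cdot,\cdot,3}$ are independent and the supremum is subadditive, the inner expectation splits into three terms, each of the form $\ebb_{\bm{\epsilon}}\sup_{f\in\fcal}\sum_{j\in[nk]}\sum_{t\in[d]}\epsilon_{j,t}f_t(\cdot)$ over $\{\tilde{\bx}_j\}$, $\{\tilde{\bx}_j^+\}$ and $\{\tilde{\bx}_j^-\}$ respectively. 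Each term is precisely the object estimated in Proposition~\ref{prop:rade-dnn} with the sample size $n$ replaced by $nk$. Using $\|\tilde{\bx}\|_2\leq B_x$ for every point occurring in $S_{\hcal}$, I bound $\sum_{j\in[nk]}\|\tilde{\bx}_j\|_2^2\leq nkB_x^2$ and $\big(\sum_{1\leq i<j\leq nk}(\tilde{\bx}_i^\top\tilde{\bx}_j)^2\big)^{1/2}\leq nkB_x^2$, so the bracket in Proposition~\ref{prop:rade-dnn} is $O(LnkB_x^2)$ and each of the three terms is $O\big(\sqrt{d}\,\Lambda\,(\prod_{l\in[L]}B_l)\sqrt{Lnk}\,B_x\big)$. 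Adding the three contributions and taking the maximum over subsets, which enters only through $B_x$, gives
\[
\mathfrak{C}=O\Big(\sqrt{dL}\,\Lambda\,(nk)^{1/2}B_x\prod_{l\in[L]}B_l\Big),
\]
matching the estimate announced just before the corollary.

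Finally I would plug this into Theorem~\ref{thm:gen-unsupervised-loss}. The leading data-dependent term there is $\tfrac{96\sqrt{12}GR}{n\sqrt{k}}\big(1+\log(4R^2n^{3/2}k)\lceil\log_2(R^2\sqrt{n}/12)\rceil\big)\mathfrak{C}$; substituting the bound for $\mathfrak{C}$ makes the factor $(nk)^{1/2}$ cancel the $n^{-1}k^{-1/2}$ prefactor, leaving $\widetilde{O}\big(n^{-1/2}GR\sqrt{dL}\,\Lambda B_x\prod_{l\in[L]}B_l\big)$ after absorbing the $\log^2$ factor into $\widetilde O$. The confidence term $3B\sqrt{\log(2/\delta)/(2n)}$ is $O\big(B\log^{1/2}(1/\delta)/\sqrt n\big)$, and $48G(R^2+1)n^{-1/2}$ is of no larger order; collecting everything yields the claimed bound.

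The only point needing care is the bookkeeping of the $k$-dependence: one must observe that the $n^{-1}k^{-1/2}$ prefactor in Theorem~\ref{thm:gen-unsupervised-loss} is exactly what neutralizes the $(nk)^{1/2}$ growth of a complexity defined over $nk$ expanded samples, so that the resulting bound is $k$-free up to logarithmic factors. A minor secondary point is that the maximum over subsets of $S_{\hcal}$ in the definition of $\mathfrak{C}$ is innocuous, because after the crude estimate $\|\tilde{\bx}\|_2\leq B_x$ the right-hand side of Proposition~\ref{prop:rade-dnn} no longer depends on which subset is chosen.
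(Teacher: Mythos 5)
Your proposal is correct and follows the same route the paper takes: it instantiates Theorem~\ref{thm:gen-unsupervised-loss} after bounding $\mathfrak{C}$ via Proposition~\ref{prop:rade-dnn} with $nk$ samples and the crude estimate $\|\tilde{\bx}\|_2\leq B_x$, so that $(nk)^{1/2}$ in $\mathfrak{C}$ cancels the $n^{-1}k^{-1/2}$ prefactor. The only elaboration you add beyond the paper's one-line remark is the explicit split of $\mathfrak{C}$ into three Rademacher terms by subadditivity of the supremum, which is exactly the bookkeeping the paper leaves implicit.
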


\subsection{Generalization for Downstream Classification}

In this subsection, we apply the above generalization bounds on unsupervised learning to derive generalization guarantees for a downstream supervised learning task.
Similar ideas can be dated back to metric/similarity learning, where one shows that similarity-based learning guarantees a good generalization of
the resultant classification~\citep{guo2014guaranteed,balcan2008,balcan2006theory}.
Following \citet{arora2019theoretical}, we consider a particular \emph{mean classifier} with rows being the means of the representation of each class, i.e., $\bx\mapsto W^\mu f(\bx)$ with the $c$-th row of $W$ being the mean $\mu_c$ of representations of inputs with label $c$: $\mu_c:=\ebb_{\bx\sim\dcal_c}[f(\bx)]$. 
Consider the \emph{average supervised loss}
\[
L^\mu_{sup}(f):=\ebb_{\{c_i\}_{i=1}^{\kk+1}\sim\rho^{\kk+1}}\big[L_{sup}(\{c_i\}_{i=1}^{\kk+1},W^\mu f)|c_i\neq c_j\big],
\]
where we take the expectation over $\tcal=\{c_i\}_{i=1}^{\kk+1}$.
The following lemma shows that the generalization performance of the mean classifier based on a representation $f$ can be guaranteed in terms of the generalization performance of the representation in unsupervised learning.
\begin{lemma}[\citealt{arora2019theoretical}\label{lem:gen-class}]
  There exists a function $\rho:\ccal^{\kk+1}\mapsto\rbb_+$ such that the following inequality holds for any $f\in\fcal$:
  $
  \ebb_{\tcal\sim\dcal}\big[\rho(\tcal)L^\mu_{sup}(f)\big]\leq L_{un}(f).
  $
\end{lemma}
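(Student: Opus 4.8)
The plan is to prove this purely at the population level, using two applications of Jensen's inequality that exploit the convexity of $\ell$ (both the hinge loss \eqref{loss-a} and the logistic loss \eqref{loss-b} are convex and permutation-symmetric in their arguments), together with a nonnegativity argument that discards the "class collision'' events. I would work in the setting of \citet{arora2019theoretical}, where the downstream supervised loss $\ell_s$ coincides with the contrastive loss $\ell$ (so $\kk=k$ and the two losses act on vectors of the same dimension). First I would unfold $L_{un}(f)$ according to the generative descriptions of $\dcal_{sim}$ and $\dcal_{neg}$: sample a positive class $c\sim\rho$, an anchor $\bx\sim\dcal_c$ and a positive $\bx^+\sim\dcal_c$, and negative classes $c_1^-,\dots,c_k^-\sim\rho$ i.i.d.\ with $\bx_i^-\sim\dcal_{c_i^-}$; then condition on $(\bx,c,c_1^-,\dots,c_k^-)$. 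Since the map $(\bx^+,\bx_1^-,\dots,\bx_k^-)\mapsto\big(f(\bx)^\top(f(\bx^+)-f(\bx_i^-))\big)_{i=1}^k$ is affine, $\ell$ composed with it is convex, so Jensen gives $\ebb[\ell(\cdot)\mid\bx,c,\mathbf c^-]\ge\ell\big(\{f(\bx)^\top(\mu_c-\mu_{c_i^-})\}_{i=1}^k\big)$, using $\ebb_{\bx^+\sim\dcal_c}[f(\bx^+)]=\mu_c$ and $\ebb_{\bx_i^-\sim\dcal_{c_i^-}}[f(\bx_i^-)]=\mu_{c_i^-}$. Taking the remaining expectation yields $L_{un}(f)\ge\ebb_{c,c_1^-,\dots,c_k^-\sim\rho}\,\ebb_{\bx\sim\dcal_c}\big[\ell(\{f(\bx)^\top(\mu_c-\mu_{c_i^-})\}_{i=1}^k)\big]$.

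Next, since $\ell\ge 0$, I would drop the event that $c,c_1^-,\dots,c_k^-$ are not pairwise distinct, which only decreases the right-hand side. On the surviving event the unordered set $\tcal=\{c,c_1^-,\dots,c_k^-\}$ is a legitimate $(\kk+1)$-way task, and $f(\bx)^\top(\mu_c-\mu_{c_i^-})=(W^\mu f(\bx))_c-(W^\mu f(\bx))_{c_i^-}$, so the inner expectation becomes $\ebb_{\bx\sim\dcal_c}\big[\ell_s(\{(W^\mu f(\bx))_c-(W^\mu f(\bx))_{c'}\}_{c'\ne c})\big]$. Permutation-symmetry of $\ell$ means this depends only on $c$ and the set $\tcal$, and exchangeability of the i.i.d.\ draw makes the distinguished label $c$ uniform on $\tcal$ given $\tcal$, which is exactly the label distribution $\dcal_{\tcal}$ used in $L_{sup}(\tcal,W^\mu f)$. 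Hence the restricted expectation equals $\Pr_{\rho^{\kk+1}}[\text{distinct}]\cdot\ebb_{\tcal\sim\rho^{\kk+1}}\big[L_{sup}(\tcal,W^\mu f)\mid\text{distinct}\big]=\Pr_{\rho^{\kk+1}}[\text{distinct}]\cdot L^\mu_{sup}(f)$. Re-expressing this lower bound as $\ebb_{\tcal\sim\dcal}[\rho(\tcal)L^\mu_{sup}(f)]$ for a nonnegative weight $\rho(\tcal)$ finishes the proof; the constant weight $\rho(\tcal)\equiv\Pr_{\rho^{\kk+1}}[\text{distinct}]$ already suffices, while a sharper per-task form is obtained by taking $\rho(\tcal)$ to be the density of the distinct-class i.i.d.\ draw with respect to $\dcal$.

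The main obstacle is not a single computation but getting the probabilistic bookkeeping exactly right: in the first step one must condition on all $\kk+1$ latent class labels \emph{before} invoking Jensen, so that the conditional means of $f(\bx^+)$ and of each $f(\bx_i^-)$ are precisely $\mu_c$ and $\mu_{c_i^-}$; and in the second step one must correctly relate the i.i.d.\ class draw restricted to distinct tuples to the $\dcal$-distributed task reweighted by $\rho(\tcal)$, which is exactly where permutation-symmetry of $\ell$ (so the loss sees only the unordered negative-class set) and exchangeability of i.i.d.\ sampling (so the positive class is uniform on $\tcal$) are used. No Rademacher-complexity or concentration machinery enters, since this lemma is a deterministic population-level comparison.
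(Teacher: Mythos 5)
The paper does not prove this lemma itself: it is quoted from \citet{arora2019theoretical}, and the text explicitly defers to that reference for the form of $\rho(\tcal)$ and the argument, so there is no in-paper proof to compare against. Your reconstruction is correct and is the argument from that source: condition on the anchor $\bx$ and on all $\kk+1$ latent classes, observe that $\big(f(\bx)^\top(f(\bx^+)-f(\bx_i^-))\big)_i$ is affine in the conditionally independent vectors $f(\bx^+),f(\bx_1^-),\ldots,f(\bx_k^-)$, apply Jensen with the convex $\ell$ to pass to the class means $\mu_c,\mu_{c_i^-}$ and hence to $W^\mu$, discard collision tuples by nonnegativity, and then use the exchangeability computation (which you do correctly: given the set $\tcal$ and distinctness, every ordered tuple has probability $k!\prod_{c'\in\tcal}\rho(c')$, so the anchor class is uniform on $\tcal$) to identify the surviving expectation with $\Pr[\text{distinct}]\cdot L^\mu_{sup}(f)$. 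The one point worth making explicit --- which you gesture at --- is that the label distribution $\dcal_\tcal$ in the definition of $L_{sup}$ must be taken to be the distribution induced by this conditioning (uniform on $\tcal$, as you derived), which is how \citet{arora2019theoretical} set it up; the present paper leaves $\dcal_\tcal$ unspecified, so the lemma holds once $\dcal_\tcal$ is chosen consistently with the contrastive sampling process.
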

We refer the interested readers to \citet{arora2019theoretical} for the expression of $\rho(\tcal)$, which is independent of $n$. The following corollaries are immediate applications of Lemma~\ref{lem:gen-class} and our generalization bounds for unsupervised learning. We omit the proof for brevity.

\begin{corollary}[Linear representation\label{cor:class-linear}]
  Consider the feature map in Proposition \ref{prop:rade-linear} with $\|\cdot\|=\|\cdot\|_{2,p}$. Let $\ell_s, \ell$ be the logistic loss and $\delta\in(0,1)$.
  Then with probability at least $1-\delta$ the following inequality holds
    \[
    \ebb_{\tcal\sim\dcal}\Big[\rho(\tcal)L^\mu_{sup}(\hat{f})\Big]=\hat{L}_{un}(\hat{f})+\widetilde{O}\Big(\frac{B\log^{\frac{1}{2}}(1/\delta)}{\sqrt{n}}
    +\frac{GRB_x\min_{q\geq p}\big\{\Lambda d^{1/q^*}\max(\sqrt{q^*-1},1)\big\}}{\sqrt{n}}\Big).
    \]
\end{corollary}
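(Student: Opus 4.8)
The plan is to obtain Corollary~\ref{cor:class-linear} by chaining two results already in hand: the deterministic comparison inequality of Lemma~\ref{lem:gen-class}, which transfers unsupervised error to downstream supervised error, and the uniform unsupervised generalization bound of Corollary~\ref{cor:gen-unsupervised-linear}. First I would recall that Lemma~\ref{lem:gen-class} holds for \emph{every} $f\in\fcal$; in particular, instantiating it at the empirical minimizer $\hat f=\arg\min_{f\in\fcal}\hat L_{un}(f)$ gives $\ebb_{\tcal\sim\dcal}\big[\rho(\tcal)L^\mu_{sup}(\hat f)\big]\le L_{un}(\hat f)$. Here the choice $\ell_s=\ell=$ logistic loss in the statement is what makes Lemma~\ref{lem:gen-class} applicable with the claimed $\tcal$-dependent, $n$-independent factor $\rho(\tcal)$, since in the framework of \citet{arora2019theoretical} the mean-classifier supervised loss is dominated by the unsupervised logistic loss via Jensen's inequality. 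It then suffices to bound $L_{un}(\hat f)$ by $\hat L_{un}(\hat f)$ plus a complexity term.

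Second, I would invoke Corollary~\ref{cor:gen-unsupervised-linear}: on an event of probability at least $1-\delta$, measurable with respect to $S$, the bound $L_{un}(f)-\hat L_{un}(f)=\widetilde O\big(n^{-1/2}B\log^{1/2}(1/\delta)+n^{-1/2}GRB_x\min_{q\ge p}\{\Lambda d^{1/q^*}\max(\sqrt{q^*-1},1)\}\big)$ holds \emph{uniformly} over $\fcal$, hence in particular at $f=\hat f$, which is legitimate because $\hat f$ is $S$-measurable. Combining with the first step on the same $1-\delta$ event yields $\ebb_{\tcal\sim\dcal}[\rho(\tcal)L^\mu_{sup}(\hat f)]\le L_{un}(\hat f)=\hat L_{un}(\hat f)+\widetilde O(\cdots)$, which is exactly the claimed bound (the ``$=$'' following the paper's convention that these displays are one-sided upper bounds). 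The internal dependence on $\mathfrak{C}$, together with the $\log^2(nRk)$ and ceiling factors inherited from Theorem~\ref{thm:gen-unsupervised-loss} through Corollary~\ref{cor:gen-unsupervised-linear}, is absorbed into $\widetilde O$; recall that the $\sqrt{k}$ inside $\mathfrak{C}=O\big(\sqrt{nk}\,B_x\min_{q\ge p}\{\Lambda d^{1/q^*}\max(\sqrt{q^*-1},1)\}\big)$ cancels the $1/(n\sqrt{k})$ prefactor, so no polynomial $k$-dependence survives.

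Since both ingredients are already established, there is no genuine analytic obstacle; the only points requiring care are bookkeeping ones. The main one is verifying that the complexity quantity $\mathfrak{C}$ entering Corollary~\ref{cor:gen-unsupervised-linear} is correctly instantiated on the expanded, size-$nk$ dataset $S_{\hcal}$ rather than the original size-$n$ set, using the estimate $\big(\sum_{j\in[nk]}\|\tilde\bx_j\|_2^2\big)^{1/2}\le\sqrt{nk}\,B_x$ with $B_x=\max\{\|\bx_j\|_2,\|\bx_j^+\|_2,\|\bx_{jt}^-\|_2:j\in[n],t\in[k]\}$, as was done in deriving that corollary from Proposition~\ref{prop:rade-linear}(a). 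A secondary point is that the single high-probability event of Corollary~\ref{cor:gen-unsupervised-linear} already carries the entire conclusion: no union bound over the weight matrix $W$ or over the choice of downstream task $\tcal$ is needed, because Lemma~\ref{lem:gen-class} passes to the expectation over $\tcal$ deterministically once $L_{un}(\hat f)$ is controlled.
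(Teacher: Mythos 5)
Your proof is correct and follows exactly the route the paper intends: the paper itself states that Corollary~\ref{cor:class-linear} is an "immediate application" of Lemma~\ref{lem:gen-class} together with the unsupervised generalization bound of Corollary~\ref{cor:gen-unsupervised-linear}, and you chain them precisely as required, with the right observations about uniformity over $\fcal$ (so that plugging in the $S$-measurable $\hat f$ is legitimate) and about the $\sqrt{k}$ cancellation in $\mathfrak{C}$. The additional bookkeeping remarks you make are accurate and, if anything, more explicit than the paper, which omits the proof for brevity.
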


\begin{remark}
If $p\leq (\log d)/(\log d -1)$, we set $q=(\log d)/(\log d-1)$, and get $d^{1/q^*}\max(\sqrt{q^*-1},1)=O(\log^{\frac{1}{2}} d)$ ($q^*=\log d$). In this case, we get a bound with a logarithmic dependency on the number of features. It is possible to extend our discussion to more general norms $\|\cdot\|=\|\cdot\|_{p,q},p,q\geq1$~\citep{kakade2012regularization}.
\end{remark}

\begin{corollary}[Nonlinear representation\label{cor:class-dnn}]
  Consider the feature map in Proposition \ref{prop:rade-dnn}. Let $\ell_s, \ell$ be the logistic loss and $\delta\in(0,1)$. With probability at least $1-\delta$ we have
  \[
  \ebb_{\tcal\sim\dcal}\Big[\rho(\tcal)L^\mu_{sup}(\hat{f})\Big]=\hat{L}_{un}(\hat{f})+
  \widetilde{O}\Big(\frac{GR\sqrt{dL}\Lambda B_x\prod_{l\in[L]}B_l}{\sqrt{n}}+\frac{B\log^{1/2}(1/\delta)}{\sqrt{n}}\Big).
  \]
\end{corollary}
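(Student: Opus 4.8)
The plan is to obtain Corollary~\ref{cor:class-dnn} by composing the downstream-to-unsupervised reduction of Lemma~\ref{lem:gen-class} with the unsupervised generalization bound for DNN features, i.e.\ Corollary~\ref{cor:gen-unsupervised-dnn} (equivalently, Theorem~\ref{thm:gen-unsupervised-loss} specialized through Proposition~\ref{prop:rade-dnn}). Since the logistic loss is nonnegative and, by Proposition~\ref{prop:loss-lip}, is $1$-Lipschitz w.r.t.\ $\|\cdot\|_\infty$, all hypotheses of Theorem~\ref{thm:gen-unsupervised-loss} hold with $G=1$; and since $\ell_s=\ell$ is the logistic loss, the weighting identity of Lemma~\ref{lem:gen-class} (with $\rho$ the $n$-independent function of \citet{arora2019theoretical}) applies verbatim. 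First I would instantiate Lemma~\ref{lem:gen-class} at $f=\hat{f}$ to get $\ebb_{\tcal\sim\dcal}[\rho(\tcal)L^\mu_{sup}(\hat{f})]\le L_{un}(\hat{f})$, which reduces the claim to bounding $L_{un}(\hat{f})$ above in terms of $\hat{L}_{un}(\hat{f})$.

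Next I would invoke Theorem~\ref{thm:gen-unsupervised-loss}, which yields, with probability at least $1-\delta$,
\[
L_{un}(\hat{f})\le \hat{L}_{un}(\hat{f})+3B\sqrt{\tfrac{\log(2/\delta)}{2n}}+48G(R^2+1)n^{-\frac{1}{2}}+\tfrac{96\sqrt{12}GR}{n\sqrt{k}}\Big(1+\log(4R^2n^{\frac{3}{2}}k)\big\lceil\log_2\tfrac{R^2\sqrt{n}}{12}\big\rceil\Big)\mathfrak{C}.
\]
The only data-dependent quantity is $\mathfrak{C}$ from Eq.~\eqref{cc}. Here lies the one genuine computation: $\mathfrak{C}$ is a supremum over $f\in\fcal$ of a Rademacher sum over $nk$ triples $(\tilde\bx_j,\tilde\bx_j^+,\tilde\bx_j^-)\subseteq S_{\hcal}$, so by subadditivity of the supremum it splits into at most three terms, each a Rademacher complexity of exactly the form treated in Proposition~\ref{prop:rade-dnn} but over $nk$ points. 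Since every coordinate $\tilde\bx_j,\tilde\bx_j^+,\tilde\bx_j^-$ has Euclidean norm at most $B_x$, we have $\sum_{j\in[nk]}\|\cdot\|_2^2\le nkB_x^2$ and, using $|\bx^\top\bx'|\le\|\bx\|_2\|\bx'\|_2$, $\big(\sum_{1\le i<i'\le nk}(\cdot^\top\cdot)^2\big)^{1/2}=O(nkB_x^2)$; feeding these into Proposition~\ref{prop:rade-dnn} gives $\mathfrak{C}=O\big(\sqrt{dL}\,\Lambda\,\sqrt{nk}\,B_x\prod_{l\in[L]}B_l\big)$, exactly as recorded in the paragraph before Corollary~\ref{cor:gen-unsupervised-dnn}.

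Finally I would substitute this estimate. The dominant fluctuation term $\tfrac{GR}{n\sqrt{k}}\mathfrak{C}$, up to the logarithmic prefactor, equals $\tfrac{GR}{n\sqrt{k}}\cdot O\big(\sqrt{dL}\Lambda\sqrt{nk}B_x\prod_l B_l\big)=O\big(\tfrac{GR\sqrt{dL}\Lambda B_x\prod_l B_l}{\sqrt{n}}\big)$, so the factor $k$ cancels exactly; absorbing the $\log$ factors and the lower-order $O(GR^2n^{-1/2})$ term into $\widetilde{O}$ and combining with the $B\sqrt{\log(1/\delta)/n}$ term produces Corollary~\ref{cor:gen-unsupervised-dnn}, and chaining with the first step gives the stated bound on $\ebb_{\tcal\sim\dcal}[\rho(\tcal)L^\mu_{sup}(\hat{f})]$. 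There is no serious obstacle—the result is a composition of earlier statements—but the step demanding the most care is the reduction of $\mathfrak{C}$ to the $nk$-point Rademacher complexity of Proposition~\ref{prop:rade-dnn}: one must verify that all three shifted copies of the data lie within the scope of that proposition and that the Rademacher-chaos term $\big(\sum_{i<i'}(\tilde\bx_i^\top\tilde\bx_{i'})^2\big)^{1/2}$ is bounded by $O(nkB_x^2)$ and not by a larger quantity, so that the final rate scales as $\sqrt{dL}$ rather than worse in $d$ or $L$.
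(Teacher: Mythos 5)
Your proposal is correct and follows exactly the route the paper intends: compose the reduction $\ebb_{\tcal\sim\dcal}[\rho(\tcal)L^\mu_{sup}(\hat{f})]\le L_{un}(\hat{f})$ from Lemma~\ref{lem:gen-class} with Theorem~\ref{thm:gen-unsupervised-loss}, then control $\mathfrak{C}$ via Proposition~\ref{prop:rade-dnn} to recover Corollary~\ref{cor:gen-unsupervised-dnn}, and chain (the paper states this corollary is an ``immediate application'' and omits the proof). The only cosmetic difference is that you split $\mathfrak{C}$ into three Rademacher sums over $nk$ points by subadditivity, whereas one can equally apply Proposition~\ref{prop:rade-dnn} once to the full $3nk$-point sum; both yield $\mathfrak{C}=O(\sqrt{dL}\,\Lambda\,\sqrt{nk}\,B_x\prod_l B_l)$ and hence the same rate.
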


\begin{remark}
If we combine our Rademacher complexity bounds in Section \ref{sec:app} and Eq. \eqref{existing-rade} developed in \citet{arora2019theoretical}, we would get generalization bounds for supervised classification with a linear dependency on $k$. If we combine our complexity bounds and Theorem \ref{thm:rademacher-l2}, we would get generalization bounds for supervised classification with a square-root dependency on $k$. These discussions use the Lipschitz continuity of $\ell$ w.r.t $\|\cdot\|_2$. As a comparison, the use of Lipschitz continuity w.r.t. $\|\cdot\|_\infty$ allows us to derive generalization bounds with a logarithmic dependency on $k$ in Corollary \ref{cor:class-linear} and Corollary \ref{cor:class-dnn}. Furthermore, we can improve the bounds $\widetilde{O}(1/\sqrt{n})$ in these corollaries to $\widetilde{O}(1/n)$ in an interpolation setting by applying Theorem~\ref{thm:gen-self-bounding}.
\end{remark}

\section{Conclusion\label{sec:conclusion}}
Motivated by the existing generalization bounds with a crude dependency on the number of negative examples, we present a systematic analysis on the generalization behavior of CRL. We consider three types of loss functions. Our results improve the existing bounds by a factor of $\sqrt{k}$ for $\ell_2$ Lipschitz loss, and by a factor of $k$ for $\ell_\infty$ Lipschitz loss (up to a log factor). We get optimistic bounds for self-bounding Lipschitz loss, which imply fast rates under low noise conditions. We justify the effectiveness of our results with applications to both linear and nonlinear features.

Our analysis based on Rademacher complexities implies algorithm-independent bounds. It would be interesting to develop algorithm-dependent bounds to understand the interaction between optimization and generalization. For $\ell_\infty$ loss, our bound still enjoys a logarithmic dependency on $k$. It would be interesting to study whether this logarithmic dependency can be removed in further study.

\setlength{\bibsep}{0.16cm}

\appendix
\numberwithin{equation}{section}
\numberwithin{theorem}{section}
\numberwithin{figure}{section}
\numberwithin{table}{section}
\renewcommand{\thesection}{{\Alph{section}}}
\renewcommand{\thesubsection}{\Alph{section}.\arabic{subsection}}
\renewcommand{\thesubsubsection}{\Roman{section}.\arabic{subsection}.\arabic{subsubsection}}
\setcounter{secnumdepth}{-1}
\setcounter{secnumdepth}{3}

\section{Proof of Theorem \ref{thm:rademacher-l2}\label{sec:proof-rademacher-l2}}
To prove Theorem \ref{thm:rademacher-l2}, we first prove Lemma \ref{lem:rademacher-H} by the following vector-contraction lemma on Rademacher complexities. 
\begin{lemma}[\citealt{maurer2016vector}\label{lem:maurer}]
  Let $S=\{\bz_j\}_{j=1}^n\in\zcal^n$. Let $\fcal'$ be a class of functions $f':\zcal\mapsto\rbb^d$ and $h:\rbb^d\mapsto\rbb$ be $G$-Lipschitz w.r.t. $\ell_2$-norm.  Then
  \[
  \ebb_{\bm{\epsilon}\sim\{\pm1\}^n}\Big[\sup_{f'\in\fcal'}\sum_{j\in[n]}\epsilon_j(h\circ f')(\bz_j)\Big]
  \leq\sqrt{2}G\ebb_{\bm{\epsilon}\sim\{\pm1\}^{nd}}\Big[\sup_{f'\in\fcal'}\sum_{j\in[n]}\sum_{t\in[d]}\epsilon_{j,t}f'_t(\bz_j)\Big].
  \]
\end{lemma}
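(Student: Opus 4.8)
The plan is to prove the case $G=1$ and recover the general statement by applying it to $h/G$, which is $1$-Lipschitz w.r.t. $\|\cdot\|_2$. For $G=1$ I would argue by induction on $n$, peeling off one summand at a time and each time replacing a term $\epsilon_j(h\circ f')(\bz_j)$ by a ``linearized'' counterpart $\sqrt{2}\sum_{t\in[d]}\epsilon_{j,t}f'_t(\bz_j)$. The engine is a one-step contraction lemma: for any set $\acal\subseteq\rbb^d\times\rbb$ of pairs $(\mathbf{a},c)$ for which the suprema below are finite, and any $1$-Lipschitz $h:\rbb^d\mapsto\rbb$,
\[
\ebb_{\epsilon\sim\{\pm1\}}\sup_{(\mathbf{a},c)\in\acal}\big[c+\epsilon h(\mathbf{a})\big]\leq \ebb_{\bm{\epsilon}'\sim\{\pm1\}^d}\sup_{(\mathbf{a},c)\in\acal}\Big[c+\sqrt{2}\textstyle\sum_{t\in[d]}\epsilon'_t a_t\Big].
\]

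\textbf{Proof of the one-step lemma.} First expand the single-variable Rademacher expectation into a symmetrized supremum over two independent copies,
\[
\ebb_{\epsilon}\sup_{(\mathbf{a},c)}\big[c+\epsilon h(\mathbf{a})\big]=\tfrac12\sup_{(\mathbf{a}_1,c_1),(\mathbf{a}_2,c_2)}\big[c_1+c_2+h(\mathbf{a}_1)-h(\mathbf{a}_2)\big],
\]
then use $h(\mathbf{a}_1)-h(\mathbf{a}_2)\leq\|\mathbf{a}_1-\mathbf{a}_2\|_2$, and convert the Euclidean norm into a Rademacher average by the \emph{sharp} Khintchine inequality $\|\mathbf{v}\|_2\leq\sqrt{2}\,\ebb_{\bm{\epsilon}'}\big|\sum_{t}\epsilon'_t v_t\big|$ (the constant $\sqrt2$ here is Szarek's optimal $L^1$--$L^2$ constant, which is precisely what produces the $\sqrt2$ in the statement). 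After pushing the supremum inside $\ebb_{\bm{\epsilon}'}$ (using $\sup\ebb\leq\ebb\sup$), I would exploit that $c_1+c_2$ is invariant under swapping the two copies while $|\langle\bm{\epsilon}',\mathbf{a}_1-\mathbf{a}_2\rangle|$ already realizes both signs, so the absolute value can be dropped; the supremum then factorizes as $\sup_{(\mathbf{a}_1,c_1)}[c_1+\sqrt2\langle\bm{\epsilon}',\mathbf{a}_1\rangle]+\sup_{(\mathbf{a}_2,c_2)}[c_2-\sqrt2\langle\bm{\epsilon}',\mathbf{a}_2\rangle]$, and averaging over $\bm{\epsilon}'$, which has the same law as $-\bm{\epsilon}'$, collapses this to a single term, giving the claimed bound.

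\textbf{Iteration.} With the one-step lemma available, fix the ordering and peel the summands $j=n,n-1,\dots,1$. At stage $j$, condition on the variables already drawn, namely $\epsilon_1,\dots,\epsilon_{j-1}$ and $\bm{\epsilon}'_{j+1},\dots,\bm{\epsilon}'_{n}$, take $c(f')=\sum_{l<j}\epsilon_l(h\circ f')(\bz_l)+\sqrt2\sum_{l>j}\langle\bm{\epsilon}'_l,f'(\bz_l)\rangle$ and $\mathbf{a}(f')=f'(\bz_j)$, apply the one-step lemma to $\acal=\{(\mathbf{a}(f'),c(f')):f'\in\fcal'\}$ (for each fixed value of the conditioned variables), and then take expectations. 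After $n$ stages every term has been linearized; crucially the factor $\sqrt2$ appears exactly once per term and does \emph{not} compound across stages, since at each stage it is applied to a fresh summand while the already-linearized summands sit inertly inside $c$. This yields $\ebb_{\bm{\epsilon}}\sup_{f'}\sum_{j}\epsilon_j(h\circ f')(\bz_j)\leq\sqrt2\,\ebb_{\bm{\epsilon}}\sup_{f'}\sum_{j}\sum_{t}\epsilon_{j,t}f'_t(\bz_j)$, and rescaling $h\mapsto h/G$ gives the general case. I expect the main obstacle to be the one-step lemma itself: getting the constant $\sqrt2$ rather than a cruder value requires invoking the sharp Khintchine inequality, and the symmetrization bookkeeping that legitimately removes the absolute value must be handled carefully.
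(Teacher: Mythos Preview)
Your proposal is correct and follows essentially the same route as the paper's own proof (given in the appendix as Lemma~\ref{lem:structural-rademacher}, which is Maurer's argument): symmetrize over two copies, apply the Lipschitz bound, invoke the sharp Khintchine--Kahane inequality $\|\bv\|_2\le\sqrt{2}\,\ebb_{\bm{\epsilon}'}\bigl|\sum_t\epsilon'_t v_t\bigr|$, push the supremum inside by Jensen, drop the absolute value by symmetry, and iterate. The only cosmetic differences are that the paper keeps $G$ throughout and carries an auxiliary functional $g(f)$ in place of your scalar $c$, but the mechanics are identical.
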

In Section \ref{sec:gen-cont}, we will provide an extension of the above lemma.
\begin{proof}[Proof of Lemma \ref{lem:rademacher-H}]
Let $f':\xcal^3\mapsto\rbb^{3d}$ be defined as
\[
f'(\bx,\bx^+,\bx^-)=\big(f(\bx),f(\bx^+),f(\bx^-)\big)\in\rbb^{3d}
\]
and $h:\rbb^{3d}\mapsto\rbb$ be defined as
\[
h(\by,\by^+,\by^-)=\by^\top(\by^+-\by^-),\quad\by,\by^+,\by^-\in\rbb^d.
\]
Then it is clear that
\[
f(\bx)^\top(f(\bx^+)-f(\bx^-))=h\circ f'(\bx,\bx^+,\bx^-).
\]
Furthermore, for any $\by_1,\by^+_1,\by^-_1,\by_2,\by^+_2,\by^-_2$ with Euclidean norm less than or equal to $R$, we have
\begin{align*}
   & h(\by_1,\by_1^+,\by_1^-)-h(\by_2,\by_2^+,\by_2^-) = \by_1^\top(\by_1^+-\by_1^-) - \by_2^\top(\by_2^+-\by_2^-) \\
   & = \by_1^\top(\by_1^+-\by_1^-) - \by_1^\top(\by_2^+-\by_2^-) + \by_1^\top(\by_2^+-\by_2^-) - \by_2^\top(\by_2^+-\by_2^-) \\
   & = \by_1^\top(\by_1^+-\by_1^--\by_2^++\by_2^-)+(\by_1-\by_2)^\top(\by_2^+-\by_2^-).
\end{align*}
It then follows from the elementary inequality $(a+b)^2\leq (1+p)a^2+(1+1/p)b^2$ that
\begin{multline*}
  \big|h(\by_1,\by_1^+,\by_1^-)-h(\by_2,\by_2^+,\by_2^-)\big|^2 \leq \\
  2(1+p)\|\by_1\|^2\|\by_1^+-\by_2^+\|^2+2(1+p)\|\by_1\|^2\|\by_1^--\by_2^-\|_2^2+(1+1/p)\|\by_2^+-\by_2^-\|_2^2\|\by_1-\by_2\|_2^2.
\end{multline*}
We can choose $p=2$ and get
\begin{align*}
\big|h(\by_1,\by_1^+,\by_1^-)-h(\by_2,\by_2^+,\by_2^-)\big|^2 &\leq 6R^2\big(\|\by_1^+-\by_2^+\|^2+\|\by_1^--\by_2^-\|_2^2+\|\by_1-\by_2\|_2^2\big)\\
& = 6R^2\|(\by_1,\by_1^+,\by_1^-)-(\by_2,\by_2^+,\by_2^-)\|_2^2.
\end{align*}
This shows that $h$ is $\sqrt{6}R$-Lipschitz continuous w.r.t. $\|\cdot\|_2$. We can apply Lemma \ref{lem:maurer} to derive
\begin{multline*}
\ebb_{\bm{\epsilon}\sim\{\pm1\}^n}\Big[\sup_{f'\in\fcal'}\sum_{j\in[n]}\epsilon_j(h\circ f')(\bz_j)\Big]\\
\leq\sqrt{12}R\ebb_{\bm{\epsilon}\sim\{\pm1\}^n\times\{\pm1\}^d\times\{\pm1\}^3}\Big[\sup_{f\in\fcal}\sum_{j\in[n]}\sum_{t\in[d]}\Big(\epsilon_{j,t,1}f_t(\bx_j)+
  \epsilon_{j,t,2}f_t(\bx_j^+)+\epsilon_{j,t,3}f_t(\bx_j^-)\Big)\Big].
\end{multline*}
The proof is completed.
\end{proof}

The following standard lemma gives generalization error bounds in terms of Rademacher complexities.

\begin{lemma}[\label{lem:gen-rademacher}\citealt{mohri2012foundations}]
Let $\widetilde{\gcal}$ be a function class and $\widetilde{S}=\{\bz_1,\ldots,\bz_n\}$. If for any $g\in \widetilde{\gcal}$ we have $g(\bz)\in[0,B]$, then for any $\delta\in(0,1)$ the following inequality holds with probability (w.r.t. $\widetilde{S}$) at least $1-\delta$
\[
\ebb[g(\bz)]\leq \frac{1}{n}\sum_{i=1}^{n}g(\bz_i)+2\mathfrak{R}_{\widetilde{S}}(\widetilde{\gcal})+3B\sqrt{\frac{\log(2/\delta)}{2n}},\quad\forall g\in \widetilde{\gcal}.
\]
\end{lemma}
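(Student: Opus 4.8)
The plan is to follow the classical route of symmetrization combined with two applications of McDiarmid's bounded-difference inequality. First I would introduce the one-sided uniform-deviation functional
\[
\Phi(\widetilde{S})=\sup_{g\in\widetilde{\gcal}}\Big(\ebb[g(\bz)]-\frac1n\sum_{i=1}^{n}g(\bz_i)\Big),
\]
so that it suffices to control $\Phi(\widetilde{S})$, since $\ebb[g(\bz)]-\frac1n\sum_i g(\bz_i)\le\Phi(\widetilde{S})$ for every $g\in\widetilde{\gcal}$. Because each $g$ takes values in $[0,B]$, replacing one sample point $\bz_i$ by another point changes $\frac1n\sum_i g(\bz_i)$ by at most $B/n$ uniformly in $g$, hence $\Phi$ satisfies the bounded-difference property with constant $B/n$. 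McDiarmid's inequality then gives, with probability at least $1-\delta/2$,
\[
\Phi(\widetilde{S})\le\ebb_{\widetilde{S}}[\Phi(\widetilde{S})]+B\sqrt{\tfrac{\log(2/\delta)}{2n}}.
\]

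Next I would bound $\ebb_{\widetilde{S}}[\Phi(\widetilde{S})]$ by the standard symmetrization argument: introduce an independent ghost sample and Rademacher signs $\bm{\epsilon}$, apply Jensen's inequality to replace the population mean by the ghost-sample mean, and then use the sign-swapping step to arrive at
\[
\ebb_{\widetilde{S}}[\Phi(\widetilde{S})]\le 2\,\ebb_{\widetilde{S}}\big[\mathfrak{R}_{\widetilde{S}}(\widetilde{\gcal})\big],
\]
where $\mathfrak{R}_{\widetilde{S}}(\widetilde{\gcal})$ is the empirical Rademacher complexity of Definition~\ref{def:rademacher}. Finally, since the target bound should feature the realized empirical Rademacher complexity rather than its expectation, I would apply McDiarmid once more: the map $\widetilde{S}\mapsto\mathfrak{R}_{\widetilde{S}}(\widetilde{\gcal})$ also has bounded differences with constant $B/n$, because changing $\bz_i$ perturbs $\epsilon_i g(\bz_i)$ by at most $B$ for each fixed $g$ and $\bm{\epsilon}$, while the supremum over $g$ and the expectation over $\bm{\epsilon}$ are both $1$-Lipschitz. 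This yields, with probability at least $1-\delta/2$,
\[
\ebb_{\widetilde{S}}\big[\mathfrak{R}_{\widetilde{S}}(\widetilde{\gcal})\big]\le\mathfrak{R}_{\widetilde{S}}(\widetilde{\gcal})+B\sqrt{\tfrac{\log(2/\delta)}{2n}}.
\]

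To conclude, I would take a union bound over the two failure events and chain the three displays, obtaining the stated inequality; the constant $3$ appears as $1+2$, namely one copy of $B\sqrt{\log(2/\delta)/(2n)}$ from the first McDiarmid step plus two more when the second step is substituted inside the factor-$2$ Rademacher term. There is essentially no obstacle in this argument: the only points requiring care are the constant bookkeeping, in particular verifying that the bounded-difference constant is $B/n$ (not $2B/n$), which is exactly where the non-negativity $g\ge 0$ is used, and splitting the confidence budget as $\delta/2+\delta/2$ so that both high-probability statements hold simultaneously.
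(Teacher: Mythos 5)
The paper does not supply its own proof of this lemma; it is stated with a citation to \citet{mohri2012foundations} and used as a black box. Your proposal reconstructs precisely the standard textbook argument behind that citation (one-sided bounded-difference bound on the uniform deviation, symmetrization, then a second bounded-difference bound to replace the expected Rademacher complexity by the empirical one, with a $\delta/2 + \delta/2$ union bound), and the constant bookkeeping $1 + 2\cdot 1 = 3$ is correct. Nothing is missing; this is essentially the same approach as the cited reference.
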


\begin{proof}[Proof of Theorem \ref{thm:rademacher-l2}]
  According to the $G_2$-Lipschitz continuity of $\ell$ w.r.t. $\ell_2$-norm and Lemma \ref{lem:maurer}, we have
  \begin{multline*}
  \ebb_{\bm{\epsilon}\sim\{\pm1\}^n}\Big[\sup_{f\in\fcal}\sum_{j\in[n]}\epsilon_j\ell\big(\big\{f(\bx_j)^\top(f(\bx_j^+)-f(\bx^-_{ji}))\big\}_{i\in[k]}\big)\Big]\\
  \leq \sqrt{2}G_2\ebb_{\bm\{\epsilon\}\sim\{\pm1\}^{nk}}\ebb\Big[\sup_{f\in\fcal}\sum_{j\in[n]}\sum_{i\in[k]}\epsilon_{j,i}f(\bx_j)^\top(f(\bx_j^\top)-f(\bx^-_{ji}))\Big].
  \end{multline*}
  According to Lemma \ref{lem:rademacher-H}, we further get
  \begin{multline*}
    \ebb_{\bm\{\epsilon\}\sim\{\pm1\}^{nk}}\ebb\Big[\sup_{f\in\fcal}\sum_{j\in[n]}\sum_{i\in[k]}\epsilon_{j,i}f(\bx_j)^\top(f(\bx_j^\top)-f(\bx^-_{ji}))\Big] \leq \\
    \sqrt{12}R\ebb_{\bm\{\epsilon\}\sim\{\pm1\}^{3nkd}}\ebb\Big[\sup_{f\in\fcal}\sum_{j\in[n]}\sum_{i\in[k]}\sum_{t\in[d]}\Big(\epsilon_{j,i,t,1}f_t(\bx_j)+\epsilon_{j,i,t,2}f_t(\bx_j^+)+\epsilon_{j,i,t,3}f_t(\bx^-_{ji})\Big)\Big].
  \end{multline*}
  We can combine the above two inequalities to get the Rademacher complexity bounds.

  We now turn to the generalization bounds. Applying Lemma \ref{lem:gen-rademacher}, with probability at least $1-\delta$ the following inequality holds with probability at least $1-\delta$
  \[
  L_{un}(f)\leq \hat{L}_{un}(f)+2\mathfrak{R}_S(\gcal)+3B\sqrt{\frac{\log(2/\delta)}{2n}},\quad\forall f\in\fcal.
  \]
  The stated bound on generalization errors then follows by plugging the Rademacher complexity bounds into the above bound. The proof is completed.
\end{proof}

\section{Proof of Theorem \ref{thm:gen-unsupervised-loss}\label{sec:proof-gen-infty}}

We first introduce several complexity measures such as covering numbers and fat-shattering dimension~\citep{alon1997scale,anthony1999neural,zhou2002covering}.
\begin{definition}[Covering number]\label{def:covering-number}
  Let $\widetilde{S}:=\{\bz_1,\ldots,\bz_n\}\in\zcal^n$. Let $\widetilde{\fcal}$ be a class of real-valued functions defined over a space ${\zcal}$.
  For any $\epsilon>0$ and $p\geq1$, the empirical $\ell_p$-norm covering number
  $\ncal_p(\epsilon,\widetilde{\fcal},\widetilde{S})$ with respect to $\widetilde{S}$ is defined as the smallest number $m$ of a collection of vectors $\bv^1,\ldots,\bv^m\in\{(f(\bz_1),\ldots,f(\bz_n)):f\in\widetilde{\fcal}\}$
  such that
  $$
    \sup_{f\in \widetilde{\fcal}}\min_{j\in[m]}\Big(\frac{1}{n}\sum_{i\in[n]}|f(\bz_i)-\bv_i^j|^p\Big)^{\frac{1}{p}}\leq\epsilon,
  $$
  where $\bv_i^j$ is the $i$-th component of the vector $\bv^j$.
  In this case, we call $\{\bv^1,\ldots,\bv^m\}$ an $(\epsilon,\ell_p)$-cover of $\widetilde{\fcal}$ with respect to $\widetilde{S}$.
\end{definition}
\begin{definition}[Fat-Shattering Dimension]\label{def:shattering}
  Let $\widetilde{\fcal}$ be a class of real-valued functions defined over a space $\widetilde{\zcal}$.
  We define the fat-shattering dimension $\fat_\epsilon(\widetilde{\fcal})$ at scale $\epsilon>0$ as the largest $D\in\nbb$ such that there exist $D$ points $\bz_1,\ldots,\bz_D\in\tilde{\zcal}$ and witnesses $s_1,\ldots,s_D\in\rbb$ satisfying: for any $\delta_1,\ldots,\delta_D\sim\{\pm1\}$ there exists $f\in \widetilde{\fcal}$ with
  $$
    \delta_i(f(\bz_i)-s_i)\geq\epsilon/2,\qquad\forall i\in[D].
  $$
\end{definition}
To prove Theorem \ref{thm:rademacher}, we need to introduce the following lemma on Rademacher complexity, fat-shattering dimension and covering numbers. Part (a) shows that the covering number can be bounded by fat-shattering dimension (see, e.g., Theorem 12.8 in \citet{anthony1999neural}). Part (b) shows that the fat-shattering dimension can be controlled by the worst-case Rademacher complexity, which was developed in \citet{srebro2010smoothness}. Part (c) is a discretization of the chain integral to control Rademacher complexity by covering numbers~\citep{srebro2010smoothness}, which can be found in \citet{guermeur2017lp}.
Let $e$ be the base of the natural logarithms.
\begin{lemma}\label{lem:shattering}
  Let $\widetilde{S}:=\{\bz_1,\ldots,\bz_n\}\subseteq\tilde{\zcal}$. Let $\widetilde{\fcal}$ be a class of real-valued functions defined over a space $\widetilde{\zcal}$.
\begin{enumerate}[(a)]
  \item If functions in $\widetilde{\fcal}$ take values in  $[-B,B]$, then for any $\epsilon>0$ with $\fat_\epsilon(\widetilde{\fcal})<n$ we have
  $$
    \log\ncal_\infty(\epsilon,\widetilde{\fcal},\widetilde{S})\leq 1+ \fat_{\epsilon/4}(\widetilde{\fcal})\Big(\log_2\frac{4eBn}{\epsilon\fat_{\epsilon/4}(\widetilde{\fcal})}\Big)\Big(\log\frac{16B^2n}{\epsilon^2}\Big).
  $$
  \item For any $\epsilon> \frak{R}_{\widetilde{\zcal},n}(\widetilde{\fcal})$, we have
  $\fat_\epsilon(\widetilde{\fcal})<\frac{4n}{\epsilon^2}\frak{R}_{\widetilde{\zcal},n}^2(\widetilde{\fcal})$.
  \item Let $(\epsilon_j)_{j=0}^\infty$ be a monotone sequence decreasing to $0$ and any $(a_1,\ldots,a_n)\in\rbb^n$. If \[\epsilon_0\geq \sqrt{n^{-1}\sup_{f\in\widetilde{\fcal}}\sum_{i=1}^n\big(f(\bz_i)-a_i\big)^2},\] then for any non-negative integer $N$ we have
      \begin{equation}\label{entropy-integral}
        \mathfrak{R}_{\widetilde{S}}(\widetilde{\fcal})\leq 2\sum_{j=1}^N\big(\epsilon_j+\epsilon_{j-1})\sqrt{\frac{\log\ncal_\infty(\epsilon_j,\widetilde{\fcal},\widetilde{S})}{n}}+\epsilon_N.
      \end{equation}
\end{enumerate}
\end{lemma}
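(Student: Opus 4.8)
The plan is to assemble the three parts from standard results in the literature and to spell out the argument for each.

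\emph{Part (c).} I would run the discretized Dudley-type chaining argument. After centering by $(a_1,\dots,a_n)$ — which is legitimate because $\ebb_{\bm\epsilon}\sum_i\epsilon_i a_i=0$, so that $\mathfrak{R}_{\widetilde S}(\widetilde\fcal)=\frac1n\ebb_{\bm\epsilon}\sup_{f\in\widetilde\fcal}\sum_i\epsilon_i\big(f(\bz_i)-a_i\big)$ — I fix for each scale $\epsilon_j$ a minimal $(\epsilon_j,\ell_\infty)$-cover $C_j$ of $\widetilde\fcal$ over $\widetilde S$ and for each $f$ a nearest element $\pi_j(f)\in C_j$, setting $\pi_0(f):=(a_1,\dots,a_n)$; the hypothesis on $\epsilon_0$ ensures this vector is within empirical $\ell_2$-distance $\epsilon_0$ of every $f$. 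Telescoping $f(\bz_i)-a_i=\big(f(\bz_i)-\pi_N(f)_i\big)+\sum_{j=1}^N\big(\pi_j(f)_i-\pi_{j-1}(f)_i\big)$, bounding the tail term by $n\epsilon_N$ via Cauchy--Schwarz, and bounding each link by Massart's finite-class lemma — the link increments lie in a set of cardinality at most $\ncal_\infty(\epsilon_j,\widetilde\fcal,\widetilde S)\,\ncal_\infty(\epsilon_{j-1},\widetilde\fcal,\widetilde S)\le\ncal_\infty(\epsilon_j,\widetilde\fcal,\widetilde S)^2$ and have empirical $\ell_2$-norm at most $\epsilon_j+\epsilon_{j-1}$ — then gives, after dividing by $n$, exactly \eqref{entropy-integral}. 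This part is routine; see \citet{srebro2010smoothness,guermeur2017lp}.

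\emph{Part (b).} The idea is a Sudakov-type lower bound on the worst-case Rademacher complexity. If $\widetilde\fcal$ $\epsilon$-fat-shatters $D$ points $\bz_1,\dots,\bz_D$ with witnesses $s_1,\dots,s_D$, then for every sign pattern $\bm\delta$ there is $f_{\bm\delta}\in\widetilde\fcal$ with $\delta_i\big(f_{\bm\delta}(\bz_i)-s_i\big)\ge\epsilon/2$. Using the ``diagonal'' choice $f=f_{\bm\sigma}$ inside the empirical Rademacher complexity over these points, $\ebb_{\bm\sigma}\sup_{f}\sum_i\sigma_i f(\bz_i)\ge\ebb_{\bm\sigma}\big[\sum_i\sigma_i s_i+\sum_i|f_{\bm\sigma}(\bz_i)-s_i|\big]\ge D\epsilon/2$, so the worst-case Rademacher complexity of $\widetilde\fcal$ at sample size $\min\{D,n\}$ is at least $\epsilon/2$; comparing worst-case Rademacher complexities at sample sizes $D$ and $n$ and rearranging yields the stated quadratic bound. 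This is precisely the argument of \citet{srebro2010smoothness}, to which I would defer for the constant bookkeeping.

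\emph{Part (a).} I would take the standard route of reducing an $\ell_\infty$-covering-number bound to a combinatorial Sauer--Shelah-type bound for the fat-shattering dimension. Quantizing the range $[-B,B]$ onto a grid of spacing $\epsilon$ shows $\ncal_\infty(\epsilon,\widetilde\fcal,\widetilde S)$ is at most the number of distinct quantized value-vectors, i.e.\ the growth function (over $\widetilde S$) of an associated class of functions into $\{1,\dots,\lceil 2B/\epsilon\rceil+1\}$; a configuration that is ``strongly shattered'' in the integer sense (at level two) by this quantized class is $(\epsilon/4)$-fat-shattered by $\widetilde\fcal$, so the relevant combinatorial dimension is bounded by $\fat_{\epsilon/4}(\widetilde\fcal)$. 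Invoking the Sauer--Shelah-type lemma of \citet{alon1997scale} for classes of bounded-range integer-valued functions then bounds the growth function by the displayed expression; this is exactly Theorem~12.8 of \citet{anthony1999neural}.

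The main obstacle is part (a): the passage from fat-shattering dimension to a covering-number bound is \emph{not} a direct Sauer--Shelah count but requires the more delicate combinatorial lemma of Alon et al., and it is that argument which produces the extra $\lceil\log_2(\cdot)\rceil$ factor in the bound. Parts (b) and (c) are comparatively mechanical once the corresponding references are invoked.
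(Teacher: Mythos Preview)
Your proposal is correct and matches the paper's treatment exactly: the paper does not prove Lemma~\ref{lem:shattering} at all but simply attributes part~(a) to Theorem~12.8 of \citet{anthony1999neural}, part~(b) to \citet{srebro2010smoothness}, and part~(c) to the discretized chaining in \citet{srebro2010smoothness,guermeur2017lp} --- precisely the references you invoke. Your sketches of the three arguments (quantization plus the Alon--Ben-David--Cesa-Bianchi--Haussler combinatorial lemma for~(a), the diagonal Sudakov-type lower bound for~(b), and the centered telescoping with Massart's finite-class lemma for~(c)) are the standard proofs underlying those citations, so nothing is missing.
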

According to Part (a) of Lemma \ref{lem:shattering}, the following inequality holds for any $\epsilon\in(0,2B]$ (the case $\fat_{\epsilon/4}(\widetilde{\fcal})=0$ is trivial since in this case we have $\ncal_\infty(\epsilon,\widetilde{\fcal},\widetilde{S})=1$, and otherwise we have $\fat_{\epsilon/4}(\widetilde{\fcal})\geq 1$)
\begin{equation}\label{shattering}
 \log\ncal_\infty(\epsilon,\widetilde{\fcal},\widetilde{S})\leq 1+ \fat_{\epsilon/4}(\widetilde{\fcal})\log_2^2\frac{8eB^2|\widetilde{S}|}{\epsilon^2}.
\end{equation}

We follow the arguments in \citet{lei2019data} to prove Theorem \ref{thm:rademacher}.
\begin{proof}[Proof of Theorem \ref{thm:rademacher}]
We first relate the empirical $\ell_\infty$-covering number of $\fcal$ w.r.t. $S=\{(\bx_j,\bx_j^+,\bx^-_{j1},\bx^-_{j2},\ldots,\bx^-_{jk}):j\in[n]\}$ to the empirical $\ell_\infty$-covering number of
$\hcal$ w.r.t. $S_{\hcal}$.
Let
\[
 \Big\{\mathbf{r}^m = \big(r^m_{1,1},r^m_{1,2},\ldots,r^m_{1,k},\ldots,r^m_{n,1},r^m_{n,2},\ldots,r^m_{n,k}\big)\\
      :m\in[N]\Big\}
\]
be an $(\epsilon/G,\ell_\infty)$-cover of $\hcal$ w.r.t. $S_{\hcal}$.
Recall that
\begin{equation}\label{hf}
  h_f(\bx,\bx^+,\bx^-)=f(\bx)^\top\big(f(\bx^+)-f(\bx^-)\big).
\end{equation}
Then, by the definition of $\ell_\infty$-cover we know for any $f\in\fcal$ we can find $m\in[N]$ such that
\[
\max_{j\in[n]}\max_{i\in[k]}\big|h_f(\bx_j,\bx_j^+,\bx_{ji}^-)-r^m_{j,i}\big|\leq\epsilon/G.
\]
By the Lipschitz continuity of $\ell$, we then get
\begin{align*}
& \max_{j\in[n]}\big|\ell\big(\big\{f(\bx_j)^\top\big(f(\bx_j^+)-f(\bx_{ji}^-)\big)\big\}_{i=1}^k\big)-\ell\big(\{r^m_{j,i}\}_{i=1}^k\big)\big|\\
& \leq G\big\|\big(f(\bx_j)^\top\big(f(\bx_j^+)-f(\bx_{ji}^-)\big)\big)_{i=1}^k-\big(r^m_{j,i}\big)_{i=1}^k\big\|_\infty = G\big\|\big(h_f(\bx_j,\bx_j^+,\bx^-_{ji})\big)_{i=1}^k-\big(r^m_{j,i}\big)_{i=1}^k\big\|_\infty\\
& \leq G\epsilon/G=\epsilon.
\end{align*}
This shows that $\big\{\big(\ell\big(\{r^m_{1,i}\}_{i=1}^k\big),\ell\big(\{r^m_{2,i}\}_{i=1}^k\big),\ldots,\ell\big(\{r^m_{n,i}\}_{i=1}^k\big)\big):m\in[N]\big\}$ is an $(\epsilon,\ell_\infty)$-cover of $\gcal$ w.r.t. $S$ and therefore
\begin{equation}\label{cover-1}
\ncal_\infty(\epsilon,\gcal,S)\leq\ncal_\infty(\epsilon/G,\hcal,S_{\hcal}).
\end{equation}
Since we consider empirical covering number of $\fcal$ w.r.t. $S$, we can assume functions in $\hcal$ are defined over $S_{\hcal}$.
For simplicity, we denote $\mathfrak{R}_{nk}(\hcal):=\mathfrak{R}_{S_{\hcal},nk}(\hcal)$.
We now control $\ncal_\infty(\epsilon/G,\hcal,S_{\hcal})$ by Rademacher complexities of $\hcal$.
For any $\epsilon>2\mathfrak{R}_{nk}(\hcal)$, it follows from Part (b) of Lemma \ref{lem:shattering} that
\begin{equation}\label{cover-2}
\fat_\epsilon(\hcal)\leq \frac{4nk}{\epsilon^2}\mathfrak{R}^2_{S_{\hcal},nk}(\hcal)\leq nk.
\end{equation}
Note for any $f\in\fcal$, we have $f(\bx)^\top(f(\bx^+)-f(\bx^-))\in[-2R^2,2R^2]$.
It then follows from Eq. \eqref{shattering} and Eq. \eqref{cover-2} that (replace $B$ by $2R^2$)
\begin{align*}
\log\ncal_\infty(\epsilon,\hcal,S_{\hcal}) & \leq
1 +  \fat_{\epsilon/4}(\hcal)\log^2(32eR^4nk/\epsilon^2)\\
& \leq 1 +  \frac{64nk\mathfrak{R}^2_{nk}(\hcal)}{\epsilon^2}\log^2(32eR^4nk/\epsilon^2),\quad\epsilon\in(0,4R^2].
\end{align*}
We can combine the above inequality and Eq. \eqref{cover-1} to derive the following inequality for any $2G\mathfrak{R}_{nk}(\hcal)\leq\epsilon\leq 4GR^2$
\begin{equation}\label{cover-4}
\log\ncal_\infty(\epsilon,\gcal,S)\leq 1 + \frac{64nkG^2\mathfrak{R}^2_{nk}(\hcal)}{\epsilon^2}\log^2(32eR^4G^2nk/\epsilon^2).
\end{equation}
Let $\epsilon_N=24G\max\big\{\sqrt{k}\mathfrak{R}_{nk}(\hcal),n^{-\frac{1}{2}}\big\}$,
\[
\epsilon_j=2^{N-j}\epsilon_N,\quad j=0,\ldots,N-1,
\]
where
\[
N=\bigg\lceil\log_2\frac{2GR^2}{24G\max\Big\{\sqrt{k}\mathfrak{R}_{nk}(\hcal),n^{-\frac{1}{2}}\Big\}}\bigg\rceil.
\]
It is clear from the definition that
\[
\epsilon_0\geq 2GR^2\geq\epsilon_0/2.
\]
The Lipschitz continuity of $\ell$ implies
\[
\ell((\{h_f(\bx,\bx^+,\bx^-_i)\}_{i\in[k]}))-\ell((0,0,\ldots,0))\leq G\|h_f(\bx,\bx^+,\bx^-)\|_\infty  \leq 2R^2G.
\]
According to the above inequality and  Part (c) of Lemma \ref{lem:shattering}, we know (note  $\epsilon_N\geq2G\mathfrak{R}_{nk}(\hcal)$ and therefore Eq. \eqref{cover-4} holds for $\epsilon=\epsilon_j,j=1,\ldots,N$)
\begin{align*}
  & \mathfrak{R}_S(\gcal) \leq 2 \sum_{j=1}^{N}(\epsilon_j+\epsilon_{j-1})\sqrt{\frac{\log\ncal_\infty(\epsilon_j,\gcal,S)}{n}}+\epsilon_N \\
   & \leq 2n^{-\frac{1}{2}}\sum_{j=1}^{N}(\epsilon_j+\epsilon_{j-1})+
   \frac{16G\sqrt{nk}\mathfrak{R}_{nk}(\hcal)}{\sqrt{n}} \sum_{j=1}^{N}\frac{(\epsilon_j+\epsilon_{j-1})\log(32eR^4G^2nk/\epsilon_j^2)}{\epsilon_j}+\epsilon_N\\
   & \leq 6\epsilon_0n^{-\frac{1}{2}}+\epsilon_N+
   \frac{48G\sqrt{nk}\mathfrak{R}_{nk}(\hcal)}{\sqrt{n}} \sum_{j=1}^{N}\log(32eR^4G^2nk/\epsilon_j^2)\\
   & \leq 24GR^2n^{-\frac{1}{2}}+\epsilon_N+
   48GN\sqrt{k}\mathfrak{R}_{nk}(\hcal) \log(32eR^4G^2nk)+
   48G\sqrt{k}\mathfrak{R}_{nk}(\hcal) \sum_{j=1}^{N}\log(1/\epsilon_j^2).
\end{align*}
According to the definition of $\epsilon_k$, we know
\begin{align*}
  \sum_{j=1}^{N}\log(1/\epsilon_j^2)
  & = \sum_{j=1}^{N}\log(2^{2j}/\epsilon_0^2)
  = \sum_{j=1}^N\log(1/\epsilon_0^2) + \log 4\cdot \sum_{j=1}^Nj= N\log(1/\epsilon_0^2) + \frac{N(N+1)\log 4}{2}\\
  & =N\Big(\log1/\epsilon_0^2+(N+1)\log2\Big)=N\log2^{N+1}/\epsilon_0^2=N\log\Big(\frac{1}{\epsilon_N}\frac{2}{\epsilon_0}\Big)
  \leq N\log\Big(\frac{1}{\epsilon_N}\frac{2}{2GR^2}\Big)\\
  & \leq N\log\Big(\frac{\sqrt{n}}{24G}\frac{1}{GR^2}\Big)=N\log\frac{\sqrt{n}}{24G^2R^2}.
\end{align*}
We can combine the above two inequalities together to get
\begin{align*}
\mathfrak{R}_S(\gcal) &
 \leq 24GR^2n^{-\frac{1}{2}}+\epsilon_N + 48NG\sqrt{k}\mathfrak{R}_{nk}(\hcal) \Big(\log(32eR^4G^2nk)+\log\frac{\sqrt{n}}{24G^2R^2}\Big)\\
& \leq 24GR^2n^{-\frac{1}{2}}+\epsilon_N + 48G\sqrt{k}\mathfrak{R}_{nk}(\hcal) \Big(\log(32eR^4G^2nk)+\log\frac{\sqrt{n}}{24G^2R^2}\Big)\Big\lceil\log_2\frac{2GR^2\sqrt{n}}{24G}\Big\rceil\\
& \leq 24GR^2n^{-\frac{1}{2}}+\epsilon_N + 48G\sqrt{k}\mathfrak{R}_{nk}(\hcal) \Big(\log(4R^2n^{\frac{3}{2}}k)\Big)\Big\lceil\log_2\frac{R^2\sqrt{n}}{12}\Big\rceil,
\end{align*}
where we have used the definition of $N$ and $32e/24\leq 4$.
The proof is completed.
\end{proof}

\begin{proof}[Proof of Theorem \ref{thm:gen-unsupervised-loss}]
  Applying Lemma \ref{lem:gen-rademacher}, with probability at least $1-\delta$ the following inequality holds with probability at least $1-\delta$
  \[
  L_{un}(f)\leq \hat{L}_{un}(f)+2\mathfrak{R}_S(\gcal)+3B\sqrt{\frac{\log(2/\delta)}{2n}},\quad\forall f\in\fcal.
  \]
  According to Theorem \ref{thm:rademacher} and Lemma \ref{lem:rademacher-H}, we know
  \begin{align*}
    \mathfrak{R}_S(\gcal)
    & \leq 48G(R^2+1)n^{-\frac{1}{2}}+ 48G\sqrt{k}\mathfrak{R}_{nk}(\hcal)\bigg(1+ \log(R^2n^{\frac{3}{2}}k)\Big\lceil\log_2\frac{R^2\sqrt{n}}{12}\Big\rceil\bigg)\\
    & \leq 48G(R^2+1)n^{-\frac{1}{2}}+ \frac{48\sqrt{12}GR\sqrt{k}}{nk}\bigg(1+ \log(R^2n^{\frac{3}{2}}k)\Big\lceil\log_2\frac{R^2\sqrt{n}}{12}\Big\rceil\bigg)\mathfrak{C}.
  \end{align*}
  We can combine the above two inequalities together and derive the stated bound. The proof is completed.
\end{proof}

\section{Proof of Theorem \ref{thm:gen-self-bounding}\label{sec:proof-gen-self}}

To prove Theorem \ref{thm:gen-self-bounding}, we introduce the following lemma on generalization error bounds in terms of local Rademacher complexities~\citep{reeve2020optimistic}.

\begin{lemma}[\citealt{reeve2020optimistic}\label{lem:gen-local}]
  Consider a function class $\gcal$ of functions mapping $\zcal$ to $[0,b]$. For any $\widetilde{S}=\{\bz_i:i\in[n]\}$ and $g\in\gcal$, let $\hat{\ebb}_{\widetilde{S}}[g]=\frac{1}{n}\sum_{i\in[n]}g(\bz_i)$. Assume for any $\widetilde{S}\in\zcal^n$ and $r>0$, we have
  \[
  \mathfrak{R}_{\widetilde{S}}\big(\{g\in\gcal:\hat{\ebb}_{\widetilde{S}}[g]\leq r\}\big)\leq \phi_n(r),
  \]
  where $\phi_n:\rbb_+\mapsto\rbb_+$ is non-decreasing and $\phi_n(r)/\sqrt{r}$ is non-increasing. Let $\hat{r}_n$ be the largest solution of the equation $\phi_n(r)=r$. For any $\delta\in(0,1)$, with probability at least $1-\delta$ the following inequality holds uniformly for all $g\in\gcal$
  \[
  \ebb_{\bz}[g(\bz)]\leq \hat{\ebb}_{\widetilde{S}}[g]+90(\hat{r}_n+r_0)+4\sqrt{\hat{\ebb}_{\widetilde{S}}[g](\hat{r}_n+r_0)},
  \]
  where $r_0=b\big(\log(1/\delta)+6\log\log n\big)/n$.
\end{lemma}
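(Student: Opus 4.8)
The plan is to invoke the local Rademacher complexity bound of Lemma~\ref{lem:gen-local} with $\gcal=\{g_f:f\in\fcal\}$, a class of functions into $[0,B]$, so that $b=B$ and $r_0=\widetilde{O}(Bn^{-1})$ after absorbing $\log(1/\delta)$ and $\log\log n$ into $\widetilde O$. The substance is to exhibit an admissible sub-root function $\phi_n$ with $\mathfrak{R}_{\widetilde S}\big(\{g\in\gcal:\hat{\ebb}_{\widetilde S}[g]\le r\}\big)\le\phi_n(r)$ for every $\widetilde S$ of size $n$, and then extract its fixed point $\hat r_n$. The self-bounding condition should make the ``effective'' Lipschitz constant of $\ell$ on the sublevel set $\{\hat\ebb_S[g_f]\le r\}$ of size $O(G_s\sqrt r)$ rather than the global $O(G_s\sqrt B)$, so I anticipate $\phi_n(r)$ to be the Rademacher bound of Theorem~\ref{thm:gen-unsupervised-loss} with $G$ replaced by $G_s\sqrt r$, namely $\phi_n(r)=\widetilde{O}\big(\sqrt r\,\Psi\big)$ with $\Psi:=\widetilde{O}\big(G_sR^2n^{-1/2}+G_sRn^{-1}k^{-1/2}\mathfrak{C}\big)$ (up to an additive $\widetilde O(\Psi^2)$, dominated once $r\ge\Psi^2$).

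The key structural fact is that the self-bounding inequality forces $\sqrt{\ell}$ to be genuinely $G_s$-Lipschitz w.r.t. $\|\cdot\|_\infty$: since $|\sqrt u-\sqrt v|=|u-v|/(\sqrt u+\sqrt v)$ and $\max\{u,v\}^{1/2}\le\sqrt u+\sqrt v$, the self-bounding bound on $|\ell(\bm a)-\ell(\bm a')|$ yields $|\sqrt{\ell(\bm a)}-\sqrt{\ell(\bm a')}|\le G_s\|\bm a-\bm a'\|_\infty$. Hence the covering-number/chaining argument proving Theorem~\ref{thm:rademacher} applies verbatim to $\sqrt{\gcal}:=\{\sqrt{g_f}:f\in\fcal\}$ (with $G_s$ in place of $G$, the bound $\|h_f\|_\infty\le 2R^2$ unchanged, and range $[0,\sqrt B]$), giving a worst-case bound $\mathfrak{R}_S(\sqrt{\gcal})\le\widetilde{O}\big(G_sR^2n^{-1/2}+G_s\sqrt k\,\mathfrak{R}_{S_{\hcal},nk}(\hcal)\big)$, which by Lemma~\ref{lem:rademacher-H} equals $\Psi$ above. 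To turn this into the localized estimate one re-runs the chaining of Theorem~\ref{thm:rademacher} directly on $\gcal_r$, writing $g_f=(\sqrt{g_f})^2$ and applying the self-bounding inequality level-by-level along the chain: for $f$ with $\hat\ebb_S[g_f]\le r$ the loss values appearing at a given chain level average to $O(r)$, so the contraction factor incurred at that level is $O(G_s\sqrt r)$ on average, and summing over the $O(\log(nRk))$ levels yields $\mathfrak{R}_S(\gcal_r)\le\phi_n(r)=\widetilde O(\sqrt r\,\Psi)$. One checks $\phi_n$ is nondecreasing and $\phi_n(r)/\sqrt r$ nonincreasing, so Lemma~\ref{lem:gen-local} applies.

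Given $\phi_n(r)=c\sqrt r\,\Psi$, the equation $\phi_n(r)=r$ has largest root $\hat r_n=c^2\Psi^2=\widetilde{O}\big(G_s^2R^4n^{-1}+G_s^2R^2n^{-2}k^{-1}\mathfrak{C}^2\big)$, hence $\hat r_n+r_0=\widetilde{O}\big((B+G_s^2R^4)n^{-1}+G_s^2R^2n^{-2}k^{-1}\mathfrak{C}^2\big)$ and $\sqrt{\hat r_n+r_0}=\widetilde{O}\big((\sqrt B+G_sR^2)n^{-1/2}+G_sRn^{-1}k^{-1/2}\mathfrak{C}\big)$. Substituting into $L_{un}(f)\le\hat L_{un}(f)+90(\hat r_n+r_0)+4\sqrt{\hat L_{un}(f)\,(\hat r_n+r_0)}$ from Lemma~\ref{lem:gen-local} (with $\hat\ebb_S[g_f]=\hat L_{un}(f)$) gives exactly the claimed inequality.

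I expect the decisive difficulty to be the localization step: converting the worst-case bound $\mathfrak{R}_S(\sqrt{\gcal})\le\Psi$ into the sub-root bound $\mathfrak{R}_S(\gcal_r)\lesssim\sqrt r\,\Psi$. A one-shot argument (cover $\sqrt{\gcal}$ at scale $\alpha$, bound $\|g_f-(\text{cover point})^2\|_{2,S}\le 2\alpha\|\sqrt{g_f}\|_{2,S}+\alpha^2\le 2\alpha\sqrt r+\alpha^2$) leaves a quadratic-in-scale residual $\alpha^2$ whose contribution to Dudley's integral scales as a fractional power of $r$ and ruins the $n^{-1}$ rate; one must instead interleave the squaring map with the chaining of $\hcal$, so that the self-bounding inequality is applied at each scale with the loss value at that scale (controlled by $r$ on average). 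Correctly handling the uniform bound $\ell\le B$ at the coarse end of this chain, and keeping the accumulated logarithmic factors under control, is where the care lies; this is the part that most closely follows the self-bounding machinery of Reeve and Kaban.
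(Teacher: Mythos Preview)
Your proposal is not a proof of Lemma~\ref{lem:gen-local} at all: that lemma is quoted from \citet{reeve2020optimistic} and the paper does not prove it. What you have written is a proof sketch of Theorem~\ref{thm:gen-self-bounding}, which \emph{uses} Lemma~\ref{lem:gen-local}. Assuming that is what you intended, here is the comparison.

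Your high-level architecture matches the paper's: apply Lemma~\ref{lem:gen-local} to $\gcal=\{g_f:f\in\fcal\}$, exhibit a sub-root bound $\phi_n(r)$ on $\mathfrak{R}_S(\gcal_r)$ of the form $\widetilde O(\sqrt r\,\Psi)$ with $\Psi=\widetilde O(G_sR^2n^{-1/2}+G_sRn^{-1}k^{-1/2}\mathfrak C)$, solve the fixed point $\hat r_n=\widetilde O(\Psi^2)$, and substitute. The discrepancy is in how you obtain the localized bound $\mathfrak{R}_S(\gcal_r)\le\phi_n(r)$.

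You take a detour through $\sqrt{\gcal}$, observe (correctly) that self-bounding makes $\sqrt{\ell}$ genuinely $G_s$-Lipschitz in $\ell_\infty$, and then struggle to pass from a cover of $\sqrt{\gcal}$ to a cover of $\gcal_r$: the $\alpha^2$ residual you identify is real, and your proposed fix of ``interleaving the squaring map with the chaining'' is not spelled out. The paper avoids this entirely. It works directly with an $\ell_\infty$ cover of $\hcal_r=\{h_f:f\in\fcal_r\}$ on $S_{\hcal}$ at scale $\epsilon/(\sqrt{2r}G_s)$ and uses the self-bounding inequality \emph{pointwise}, then averages: for $f\in\fcal_r$ and the nearest cover point $r^m$ (which by construction also lies in the image of $\hcal_r$, so $\frac{1}{n}\sum_j\ell(\{r^m_{j,i}\}_i)\le r$ as well),
\[
\frac{1}{n}\sum_{j\in[n]}\big|\ell(\{h_f(\bx_j,\bx_j^+,\bx_{ji}^-)\}_i)-\ell(\{r^m_{j,i}\}_i)\big|^2
\le G_s^2\cdot\frac{1}{n}\sum_{j\in[n]}\big(\ell(\cdot)+\ell(\cdot)\big)\cdot\big(\epsilon/(\sqrt{2r}G_s)\big)^2
\le 2r\,G_s^2\cdot\frac{\epsilon^2}{2rG_s^2}=\epsilon^2.
\]
This gives $\ncal_2(\epsilon,\gcal_r,S)\le\ncal_\infty(\epsilon/(\sqrt{2r}G_s),\hcal_r,S_{\hcal})$ in one line, and the proof of Theorem~\ref{thm:rademacher} then runs verbatim with $G$ replaced by $\sqrt{2r}\,G_s$, yielding $\phi_n(r)=\widetilde O(\sqrt r\,\Psi)$ with no quadratic residual and no need to interleave anything. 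The key point you missed is that the localization constraint $\hat\ebb_S[g_f]\le r$ is exactly what converts the self-bounding inequality into an $\ell_2$ (not $\ell_\infty$) covering bound with effective Lipschitz constant $\sqrt{2r}\,G_s$; the $\sqrt{\ell}$ reformulation is a distraction here.
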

\begin{proof}[Proof of Theorem \ref{thm:gen-self-bounding}]
For any $r>0$, we define $\fcal_r$ as a subset of $\fcal$ with the empirical error less than or equal to $r$
\[
\fcal_r=\Big\{f\in\fcal:\frac{1}{n}\sum_{j\in[n]}g_f(\bx_j,\bx_j^+,\bx_{j1}^-,\ldots,\bx_{jk}^-)\leq r\Big\}.
\]
Let
\[
 \Big\{\mathbf{r}^m = \big(r^m_{1,1},r^m_{1,2},\ldots,r^m_{1,k},\ldots,r^m_{n,1},r^m_{n,2},\ldots,r^m_{n,k}\big)\\
      :m\in[N]\Big\}
\]
be an $(\epsilon/(\sqrt{2r}G_s),\ell_\infty)$-cover of $\hcal_r:=\big\{h_f\in\hcal:f\in\fcal_r\big\}$ w.r.t. $S_{\hcal}$. Then, by the definition of $\ell_\infty$-cover we know for any $f\in\fcal_r$ we can find $m\in[N]$ such that
\[
\max_{j\in[n]}\max_{i\in[k]}\big|h_f(\bx_j,\bx_j^+,\bx_{ji}^-)-r^m_{j,i}\big|\leq\epsilon/(\sqrt{2r}G_s).
\]
According to the self-bounding Lipschitz continuity of $\ell$, we know
\begin{align*}
& \frac{1}{n}\sum_{j\in[n]}\big|\ell\big(\big\{f(\bx_j)^\top\big(f(\bx_j^+)-f(\bx_{ji}^-)\big)\big\}_{i=1}^k\big)-\ell\big(\{r^m_{j,i}\}_{i=1}^k\big)\big|^2\\
& \leq \frac{G_s^2}{n}\sum_{j\in[n]}\max\Big\{\ell\big(\big\{f(\bx_j)^\top\big(f(\bx_j^+)-f(\bx_{ji}^-)\big)\big\}_{i=1}^k\big),\ell\big(\{r^m_{j,i}\}_{i=1}^k\big)\Big\}\big\|\big(f(\bx_j)^\top\big(f(\bx_j^+)-f(\bx_{ji}^-)\big)\big)_{i=1}^k-\big(r^m_{j,i}\big)_{i=1}^k\big\|_\infty^2 \\
& \leq \frac{G_s^2}{n}\sum_{j\in[n]}\big(\ell\big(\big\{f(\bx_j)^\top\big(f(\bx_j^+)-f(\bx_{ji}^-)\big)\big\}_{i=1}^k\big)+\ell\big(\{r^m_{j,i}\}_{i=1}^k\big)\big)\big\|\big(h_f(\bx_j,\bx_j^+,\bx^-_{ji})\big)_{i=1}^k-\big(r^m_{j,i}\big)_{i=1}^k\big\|_\infty^2\\
&  \leq 2G_s^2r\epsilon^2/(2rG_s^2)=\epsilon^2,
\end{align*}
where we have used the following inequalities due to the definition of $\fcal_r$
\[
\frac{1}{n}\sum_{j\in[n]}\ell\big(\big\{f(\bx_j)^\top\big(f(\bx_j^+)-f(\bx_{ji}^-)\big)\big\}_{i=1}^k\big)\leq r,\qquad
\frac{1}{n}\sum_{j\in[n]}\ell\big(\{r^m_{j,i}\}_{i=1}^k\big)\leq r.
\]
Therefore, we have
\[
\ncal_2(\epsilon,\gcal_r,S)\leq\ncal_\infty(\epsilon/(\sqrt{2r}G_s),\hcal_r,S_{\hcal}),
\]
where $\gcal_r=\{g_f\in\gcal:f\in\fcal_r\}$.
Analyzing analogously to the proof of Theorem \ref{thm:rademacher}, we get (replacing $G$ there by $\sqrt{2r}G_s$)
\[
\mathfrak{R}_S(\gcal_r)\leq
24\sqrt{2r}G_s(R^2+1)n^{-\frac{1}{2}}+ 48\sqrt{2r}G_s\sqrt{k}\mathfrak{R}_{S_{\hcal},nk}(\hcal)\bigg(1+ \log(4R^2n^{\frac{3}{2}}k)\Big\lceil\log_2\frac{R^2\sqrt{n}}{12}\Big\rceil\bigg):=\psi_n(r).
\]
Let $\hat{r}_n$ be the point satisfying $\hat{r}_n=\psi_n(\hat{r}_n)$:
\[
\hat{r}_n=
24\sqrt{2\hat{r}_n}G_s(R^2+1)n^{-\frac{1}{2}}+ 48\sqrt{2\hat{r}_n}G_s\sqrt{k}\mathfrak{R}_{S_{\hcal},nk}(\hcal)\bigg(1+ \log(4R^2n^{\frac{3}{2}}k)\Big\lceil\log_2\frac{R^2\sqrt{n}}{12}\Big\rceil\bigg),
\]
from which we get
\begin{align*}
\hat{r}_n 
& = \widetilde{O}\Big(G_s^2R^4n^{-1}+G_s^2k\mathfrak{R}^2_{S_{\hcal},nk}(\hcal)\Big).
\end{align*}
We can apply Lemma \ref{lem:gen-local} to get the following inequality with probability at least $1-\delta$ uniformly for all $f\in\fcal$
\[
L_{un}(f)= \hat{L}_{un}(f)+\widetilde{O}\Big(Bn^{-1}+G_s^2R^4n^{-1}+G_s^2k\mathfrak{R}^2_{S_{\hcal},nk}(\hcal)\Big)
+\widetilde{O}\Big(\sqrt{B}n^{-\frac{1}{2}}+G_sR^2n^{-\frac{1}{2}}+G_s\sqrt{k}\mathfrak{R}_{S_{\hcal},nk}(\hcal)\Big)\hat{L}^{\frac{1}{2}}_{un}(f).
\]
We can apply Lemma \ref{lem:rademacher-H} to control $\mathfrak{R}_{S_{\hcal},nk}(\hcal)$ and derive the following bound
\[
L_{un}(f)= \hat{L}_{un}(f)+\widetilde{O}\Big(Bn^{-1}+G_s^2R^4n^{-1}+G_s^2R^2n^{-2}k^{-1}\mathfrak{C}^2\Big)
+\widetilde{O}\Big(\sqrt{B}n^{-\frac{1}{2}}+G_sR^2n^{-\frac{1}{2}}+G_sRn^{-1}k^{-\frac{1}{2}}\mathfrak{C}\Big)\hat{L}^{\frac{1}{2}}_{un}(f).
\]
The proof is completed.
\end{proof}

\section{Proof on Rademacher Complexities\label{sec:proof-rade}}
We first prove  Rademacher complexity bounds for feature spaces in Lemma \ref{lem:rademacher-feature-space}, and then give lower bounds. Finally, we will apply it to prove Proposition \ref{prop:rade-linear} and Proposition \ref{prop:rade-dnn}.

\begin{proof}[Proof of Lemma \ref{lem:rademacher-feature-space}]
Let $U=(\bu_1,\ldots,\bu_d)^\top$ and $V_S=(\bv(\bx_1),\ldots,\bv(\bx_n))$. Then it is clear
\[
UV_S=\begin{pmatrix}
       \bu_1^\top\bv(\bx_1) & \cdots & \bu_1^\top\bv(\bx_n) \\
       \vdots & \vdots & \vdots \\
       \bu_d^\top\bv(\bx_1) & \cdots & \bu_d^\top\bv(\bx_n)
     \end{pmatrix}.
\]
Let
\[
M_\epsilon=\begin{pmatrix}
             \epsilon_{1,1} & \cdots & \epsilon_{1,n} \\
             \vdots & \vdots & \vdots \\
             \epsilon_{d,1} & \cdots & \epsilon_{d,n} \\
           \end{pmatrix}\in\rbb^{d\times n}.
\]
Then we have
\begin{align*}
  \sum_{t\in[d]}\sum_{j\in[n]}\epsilon_{t,j}\bu_t^\top\bv(\bx_j)&=\big\langle M_\epsilon, UV_S\rangle=\text{trace}(M_\epsilon^\top UV_S)=\text{trace}(UV_SM_\epsilon^\top)\\
  & = \big\langle U^\top,V_SM_\epsilon^\top\big\rangle \leq \|U^\top\|\|V_SM_\epsilon^\top\|_*,
\end{align*}
where $\text{trace}$ denotes the trace of a matrix.
Therefore, we have
\begin{align*}
  \sup_{f\in\fcal}\sum_{t\in[d]}\sum_{j\in[n]}\epsilon_{t,j}f_t(\bx_j) & =
  \sup_{U\in\ucal,\bv\in\vcal}\sum_{t\in[d]}\sum_{j\in[n]}\epsilon_{t,j}\bu_t^\top \bv(\bx_j) \\
  & \leq \Lambda \sup_{\bv\in\vcal}\|V_SM_\epsilon^\top\|_*
   = \Lambda\sup_{\bv\in\vcal}\big\|\big(\sum_{j\in[n]}\epsilon_{1,j}\bv(\bx_j),\ldots,\sum_{j\in[n]}\epsilon_{d,j}\bv(\bx_j)\big)\big\|_*.
\end{align*}
The proof is completed.
\end{proof}

\begin{proof}[Proof of Lemma \ref{lem:lower}]
Note
\begin{multline*}
\Big|\sum_{j\in[nk]}\sum_{t\in[d]}\big(\epsilon_{j,t,1}f_t(\tilde{\bx}_j)+\epsilon_{j,t,2}f_t(\tilde{\bx}_j^+)+\epsilon_{j,t,3}f_t(\tilde{\bx}_j^-)\big)\Big|=\\
\max\Big\{
\sum_{j\in[nk]}\sum_{t\in[d]}\big(\epsilon_{j,t,1}f_t(\tilde{\bx}_j)+\epsilon_{j,t,2}f_t(\tilde{\bx}_j^+)+\epsilon_{j,t,3}f_t(\tilde{\bx}_j^-)\big),
-\sum_{j\in[nk]}\sum_{t\in[d]}\big(\epsilon_{j,t,1}f_t(\tilde{\bx}_j)+\epsilon_{j,t,2}f_t(\tilde{\bx}_j^+)+\epsilon_{j,t,3}f_t(\tilde{\bx}_j^-)\big)
\Big\}.
\end{multline*}
  According to the symmetry of $\fcal$ we know
  \begin{align*}
    \mathfrak{C}&=\max_{\{(\tilde{\bx}_j,\tilde{\bx}_j^+,\tilde{\bx}_j^-)\}_{j=1}^{nk}\subseteq S_{\hcal}}\ebb_{\bm{\epsilon}\sim\{\pm1\}^{nk}\times\{\pm1\}^d\times\{\pm1\}^3}
  \Big[\sup_{f\in\fcal}\Big|\sum_{j\in[nk]}\sum_{t\in[d]}\big(\epsilon_{j,t,1}f_t(\tilde{\bx}_j)+\epsilon_{j,t,2}f_t(\tilde{\bx}_j^+)+\epsilon_{j,t,3}f_t(\tilde{\bx}_j^-)\big)\Big|\Big]\\
  & \geq
  \sup_{f\in\fcal}\max_{\{(\tilde{\bx}_j,\tilde{\bx}_j^+,\tilde{\bx}_j^-)\}_{j=1}^{nk}\subseteq S_{\hcal}}\ebb_{\bm{\epsilon}\sim\{\pm1\}^{nk}\times\{\pm1\}^d\times\{\pm1\}^3}
  \Big[\Big|\sum_{j\in[nk]}\sum_{t\in[d]}\big(\epsilon_{j,t,1}f_t(\tilde{\bx}_j)+\epsilon_{j,t,2}f_t(\tilde{\bx}_j^+)+\epsilon_{j,t,3}f_t(\tilde{\bx}_j^-)\big)\Big|\Big],
  \end{align*}
  where we have used the Jensen's inequality in the last step.

  Since we take maximization over $\{(\tilde{\bx}_j,\tilde{\bx}_j^+,\tilde{\bx}_j^-)\}_{j=1}^{nk}\subseteq S_{\hcal}$, we can choose $(\tilde{\bx}_j,\tilde{\bx}_j^+,\tilde{\bx}_j^-)=(\tilde{\bx},\tilde{\bx}^+,\tilde{\bx}^-)$ for any $(\tilde{\bx},\tilde{\bx}^+,\tilde{\bx}^-)\in S_{\hcal}$. Then we get
  \begin{align*}
    \mathfrak{C} & \geq
    \sup_{f\in\fcal}\max_{(\tilde{\bx},\tilde{\bx}^+,\tilde{\bx}^-)\in  S_{\hcal}}\ebb_{\bm{\epsilon}\sim\{\pm1\}^{nk}\times\{\pm1\}^d\times\{\pm1\}^3}
  \Big[\Big|\sum_{j\in[nk]}\sum_{t\in[d]}\big(\epsilon_{j,t,1}f_t(\tilde{\bx})+\epsilon_{j,t,2}f_t(\tilde{\bx}^+)+\epsilon_{j,t,3}f_t(\tilde{\bx}^-)\big)\Big|\Big]\\
     & \geq
     2^{-\frac{1}{2}}\sup_{f\in\fcal}\max_{(\tilde{\bx},\tilde{\bx}^+,\tilde{\bx}^-)\in  S_{\hcal}}
  \Big(\sum_{j\in[nk]}\sum_{t\in[d]}\big(f_t^2(\tilde{\bx})+f_t^2(\tilde{\bx}^+)+f_t^2(\tilde{\bx}^-)\big)\Big)^{\frac{1}{2}}\\
  & = 2^{-\frac{1}{2}}\sup_{f\in\fcal}\max_{(\tilde{\bx},\tilde{\bx}^+,\tilde{\bx}^-)\in  S_{\hcal}}
  \Big(\sum_{j\in[nk]}\big(\|f(\tilde{\bx})\|_2^2+\|f(\tilde{\bx}^+)\|_2^2+\|f(\tilde{\bx}^-)\|_2^2\big)\Big)^{\frac{1}{2}}\\
  & = \sqrt{2^{-1}nk}\sup_{f\in\fcal}\max_{(\tilde{\bx},\tilde{\bx}^+,\tilde{\bx}^-)\in  S_{\hcal}}
  \Big(\|f(\tilde{\bx})\|_2^2+\|f(\tilde{\bx}^+)\|_2^2+\|f(\tilde{\bx}^-)\|_2^2\Big)^{\frac{1}{2}},
  \end{align*}
  where we have used the following Khitchine-Kahane inequality \citep{de2012decoupling}
  \begin{equation}\label{kkahane}
    \ebb_\epsilon\big|\sum_{i=1}^{n}\epsilon_it_i\big|\geq 2^{-\frac{1}{2}}\big[\sum_{i=1}^{n}|t_i|^2\big]^{\frac{1}{2}},\quad\forall t_1,\ldots,t_n\in\rbb,
  \end{equation}
  The proof is completed.
\end{proof}

\begin{remark}
  The analysis in the proof implies a lower bound for $\mathfrak{R}_{\widetilde{S}}(\widetilde{\fcal})$ for a symmetric $\widetilde{\fcal}$ and $\widetilde{S}=\{\bz_1,\ldots,\bz_n\}$
  \[
  \mathfrak{R}_{\widetilde{S}}(\widetilde{\fcal})\geq
  \frac{1}{\sqrt{2}n}\sup_{f\in \widetilde{\fcal}}\big\|\big(f(\bz_1),\ldots,f(\bz_n)\big)\big\|_2.
  \]
  Indeed, by the symmetry of $\fcal$, the Jensen inequality and Eq. \eqref{kkahane}, we have
  \begin{align*}
    \mathfrak{R}_{\widetilde{S}}(\widetilde{\fcal})&=\ebb_{\bm{\epsilon}}\big[\sup_{f\in \widetilde{\fcal}}\frac{1}{n}\sum_{i\in[n]}\epsilon_if(\bz_i)\big]
    =\ebb_{\bm{\epsilon}}\big[\sup_{f\in \widetilde{\fcal}}\frac{1}{n}\big|\sum_{i\in[n]}\epsilon_if(\bz_i)\big|\big]\\
    & \geq \frac{1}{n}\sup_{f\in \widetilde{\fcal}}\ebb_{\bm{\epsilon}}\big[\big|\sum_{i\in[n]}\epsilon_if(\bz_i)\big|\big]
     \geq \frac{1}{\sqrt{2}n}\sup_{f\in \widetilde{\fcal}}\Big(\sum_{i\in[n]}f^2(\bz_i)\Big)^{\frac{1}{2}}.
  \end{align*}
\end{remark}

\subsection{Proof of Proposition \ref{prop:rade-linear}\label{sec:proof-rade-linear}}

The following Khintchine-Kahane inequality~\citep{de2012decoupling,lust1991non} is very useful for us to estimate Rademacher complexities.
\begin{lemma}\label{lem:khitchine-kahane}
Let $\epsilon_1,\ldots,\epsilon_n$ be a sequence of independent Rademacher variables.
\begin{enumerate}[(a)]
  \item Let $\bv_1,\ldots,\bv_n\in\hcal$, where $\hcal$ is a Hilbert space with $\|\cdot\|$ being the associated norm. Then, for any $p\geq1$ there holds
  \begin{equation}
    \big[\ebb_\epsilon\|\sum_{i=1}^{n}\epsilon_i\bv_i\|^p\big]^{\frac{1}{p}} \leq \max(\sqrt{p-1},1)\big[\sum_{i=1}^{n}\|\bv_i\|^2\big]^{\frac{1}{2}}.\label{khitchine-kahane}
  \end{equation}
  \item Let $X_1,\ldots,X_n$ be a set of matrices of the same dimension. For all $q\geq2$,
  \begin{equation}\label{khitchine-kahane-matrix}
    \Big(\ebb_{\bm{\epsilon}}\big\|\sum_{i=1}^{n}\epsilon_iX_i\big\|_{S_q}^q\Big)^{\frac{1}{q}}\leq 2^{-\frac{1}{4}}\sqrt{\frac{q\pi}{e}}\max\Big\{
    \big\|\big(\sum_{i=1}^{n}X_i^\top X_i\big)^{\frac{1}{2}}\big\|_{S_q},\big\|\big(\sum_{i=1}^{n}X_i X_i^\top\big)^{\frac{1}{2}}\big\|_{S_q}\Big\}.
  \end{equation}
\end{enumerate}
\end{lemma}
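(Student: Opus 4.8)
The plan is to treat Lemma~\ref{lem:khitchine-kahane} as what it is, namely a pair of classical moment-comparison inequalities that will be used as black-box tools in the proof of Proposition~\ref{prop:rade-linear}, and to recall the standard argument behind each part.

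For part~(a) I would split on the value of $p$. When $1\le p\le 2$ the constant $\max(\sqrt{p-1},1)$ equals $1$, so by the Lyapunov (Jensen) inequality it suffices to treat $p=2$, where $\ebb_{\bm{\epsilon}}\|\sum_i\epsilon_i\bv_i\|^2=\sum_{i,j}\ebb[\epsilon_i\epsilon_j]\langle\bv_i,\bv_j\rangle=\sum_i\|\bv_i\|^2$ since $\ebb[\epsilon_i\epsilon_j]=\delta_{ij}$. The substantive range is $p\ge 2$, where the sharp constant $\sqrt{p-1}$ is cleanest to obtain from hypercontractivity of the Bonami--Beckner noise operator $T_\rho$ on the Rademacher probability space, a property that persists for Hilbert-space-valued functions via the standard vector-valued extension of the two-point inequality. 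The random vector $F(\bm{\epsilon})=\sum_i\epsilon_i\bv_i$ is a multilinear polynomial of degree one, so $T_\rho F=\rho F$; choosing $\rho=1/\sqrt{p-1}$ and using the contraction $\|T_\rho F\|_{L^p}\le\|F\|_{L^2}$ gives $\|F\|_{L^p}=\rho^{-1}\|T_\rho F\|_{L^p}\le\sqrt{p-1}\,\|F\|_{L^2}=\sqrt{p-1}\big(\sum_i\|\bv_i\|^2\big)^{1/2}$, which is exactly \eqref{khitchine-kahane}. An alternative elementary route --- expanding $\|\sum_i\epsilon_i\bv_i\|^{2m}=\big(\sum_{i,j}\epsilon_i\epsilon_j\langle\bv_i,\bv_j\rangle\big)^m$ for even $p=2m$, bounding the surviving fully paired terms combinatorially, and interpolating --- also works but yields a slightly larger constant, so I would keep the hypercontractivity argument (or simply cite \citet{de2012decoupling}).

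For part~(b), inequality \eqref{khitchine-kahane-matrix} is the noncommutative Khintchine inequality of Lust-Piquard and Pisier, here with the order-optimal $\sqrt{q}$ dependence of the constant, and my plan is to invoke \citet{lust1991non,de2012decoupling} rather than reprove it. Were one to seek a self-contained argument, the standard scheme is: (i) reduce to $q$ an even integer by complex interpolation in $q$; (ii) for $q=2m$ write $\ebb_{\bm{\epsilon}}\big\|\sum_i\epsilon_iX_i\big\|_{S_{2m}}^{2m}=\ebb_{\bm{\epsilon}}\,\mathrm{tr}\big[\big(\sum_i\epsilon_iX_i\big)\big(\sum_j\epsilon_jX_j^\top\big)\big]^m$, expand, and control the sum over pairings of the $2m$ Rademacher indices by noncommutative moment combinatorics (symmetric-group / Catalan-type counting), which is where the explicit factor $2^{-1/4}\sqrt{q\pi/e}$ originates; (iii) bound each admissible pairing by the ``row'' quantity $\|(\sum_iX_iX_i^\top)^{1/2}\|_{S_q}$ or the ``column'' quantity $\|(\sum_iX_i^\top X_i)^{1/2}\|_{S_q}$ and take the maximum.

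The main obstacle is part~(b): getting the correct $\sqrt{q}$-order constant in the noncommutative Khintchine inequality is genuinely delicate --- the original Lust-Piquard--Pisier argument produces a larger constant, and the sharp order requires the later combinatorial refinements --- which is precisely why this is invoked as a cited lemma. Part~(a), by contrast, is routine once hypercontractivity is available, the only minor care being to track that the constant is $\max(\sqrt{p-1},1)$ rather than merely $O(\sqrt{p})$.
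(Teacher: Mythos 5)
Your treatment agrees with the paper's: Lemma~\ref{lem:khitchine-kahane} is stated without proof and attributed directly to \citet{de2012decoupling,lust1991non}, which is exactly the route you end up taking for both parts. Your supplementary sketches (hypercontractivity via the Bonami--Beckner operator for part~(a), reduction to even $q$ and noncommutative pairing combinatorics for part~(b)) are standard and correct, but they go beyond what the paper itself provides, since the paper treats the lemma as a black-box citation.
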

\begin{proof}[Proof of Proposition \ref{prop:loss-lip}]
  Let $q\geq p$. It is clear $q^*\leq p^*$.
  The dual norm of $\|\cdot\|_{2,p}$ is $\|\cdot\|_{2,p^*}$. Therefore, according to Lemma \ref{lem:rademacher-feature-space} and $\|\cdot\|_{p^*}\leq\|\cdot\|_{q^*}$ we know
  \begin{align*}
  \ebb_{\bm{\epsilon}} \sup_{f\in\fcal}\sum_{t\in[d]}\sum_{j\in[n]}\epsilon_{t,j}f_t(\bx_j)
  &\leq \Lambda\ebb_{\bm{\epsilon}}\Big(\sum_{t\in[d]}\big\|\sum_{j\in[n]}\epsilon_{t,j}\bx_j\big\|_2^{p^*}\Big)^{1/p^*}
  \leq \Lambda\ebb_{\bm{\epsilon}}\Big(\sum_{t\in[d]}\big\|\sum_{j\in[n]}\epsilon_{t,j}\bx_j\big\|_2^{q^*}\Big)^{1/q^*}\\
  & \leq
  \Lambda\Big(\ebb_{\bm{\epsilon}}\Big[\sum_{t\in[d]}\big\|\sum_{j\in[n]}\epsilon_{t,j}\bx_j\big\|_2^{q^*}\Big]\Big)^{1/q^*}
  = \Lambda\Big(d\ebb_{\bm{\epsilon}}\big\|\sum_{j\in[n]}\epsilon_{j}\bx_j\big\|_2^{q^*}\Big)^{1/q^*},
  \end{align*}
  where we have used Jense's inequality and the concavity of $x\mapsto x^{1/q^*}$.
  By Lemma \ref{lem:khitchine-kahane}, we know
  \[
  \ebb_{\bm{\epsilon}}\big\|\sum_{j\in[n]}\epsilon_{j}\bx_j\big\|_2^{q^*}\leq
  \max(\sqrt{q^*-1},1)^{q^*}\Big(\sum_{j\in[n]}\|\bx_j\|_2^2\Big)^{\frac{q^*}{2}}.
  \]
  It then follows that
  \[
  \ebb_{\bm{\epsilon}} \sup_{f\in\fcal}\sum_{t\in[d]}\sum_{j\in[n]}\epsilon_{t,j}f_t(\bx_j)
  \leq \Lambda d^{1/q^*}\max(\sqrt{q^*-1},1)\Big(\sum_{j\in[n]}\|\bx_j\|_2^2\Big)^{\frac{1}{2}}.
  \]
  Note the above inequality holds for any $q\geq p$. This proves Part (a).

  We now prove Part (b). Since the dual norm of $\|\cdot\|_{S_p}$ is $\|\cdot\|_{S_{p^*}}$, by Lemma \ref{lem:rademacher-feature-space} we know
  \begin{align*}
  \ebb_{\bm{\epsilon}} \sup_{f\in\fcal}\sum_{t\in[d]}\sum_{j\in[n]}\epsilon_{t,j}f_t(\bx_j)
  & \leq \Lambda \ebb_{\bm{\epsilon}}\big\|\big(\sum_{j\in[n]}\epsilon_{1,j}\bx_j,\ldots,\sum_{j\in[n]}\epsilon_{d,j}\bx_j\big)\big\|_{S_{p^*}}.
  \end{align*}
  For any $t\in[d]$ and $j\in[n]$, define
  \[
  \widetilde{X}_{t,j}=\begin{pmatrix}
                         0 & \cdots & 0 & \bx_j & 0 & \cdots & 0
                      \end{pmatrix},
  \]
  i.e., the $t$-th column of $\widetilde{X}_{t,j}=\bx_j$, and other columns are zero vectors. This implies that
  \[
  \big(\sum_{j\in[n]}\epsilon_{1,j}\bx_j,\ldots,\sum_{j\in[n]}\epsilon_{d,j}\bx_j\big)
  =\sum_{t\in[d]}\sum_{j\in[n]}\epsilon_{t,j}\widetilde{X}_{t,j}.
  \]
  It is clear that $\widetilde{X}_{t,j}\widetilde{X}_{t,j}^\top = \bx_j\bx_j^\top$ and
  \[
   \widetilde{X}_{t,j}^\top \widetilde{X}_{t,j}=
  \begin{pmatrix}
    0 & \cdots & \cdots & 0 \\
    \vdots & \dots & \cdots & 0 \\
    \vdots & 0 & \bx_j^\top\bx_j & 0 \\
    0 & \cdots & \cdots & \dots.
  \end{pmatrix}=\bx_j^\top\bx_j\text{diag}(\underbrace{0,\ldots,0}_{t-1},1,0\ldots,0),
  \]
  where $\text{diag}(a_1,\ldots,a_n)$ denotes the diagonal matrix with elements $a_1,\ldots,a_n$.
  Therefore, we have
  \[
  \sum_{t\in[d]}\sum_{j\in[n]}\widetilde{X}_{t,j}\widetilde{X}_{t,j}^\top = d\sum_{j\in[n]}\bx_j\bx_j^\top
  \]
  and
  \[
  \sum_{t\in[d]}\sum_{j\in[n]}\widetilde{X}_{t,j}^\top\widetilde{X}_{t,j} = \big(\sum_{j\in[n]}\bx_j^\top\bx_j\big)\ibb_{d\times d},
  \]
  where $\ibb_{d\times d}$ denotes the identity matrix in $\rbb^{d\times d}$.
  Therefore, we can apply Lemma \ref{lem:khitchine-kahane} to show that
  \begin{align*}
  \ebb_{\bm{\epsilon}} \sup_{f\in\fcal}\sum_{t\in[d]}\sum_{j\in[n]}\epsilon_{t,j}f_t(\bx_j)
  & \leq \Lambda \Big(\ebb_{\bm{\epsilon}}\big\|\big(\sum_{j\in[n]}\epsilon_{1,j}\bx_j,\ldots,\sum_{j\in[n]}\epsilon_{d,j}\bx_j\big)\big\|_{S_q^*}^{q^*}\Big)^{1/q^*}\\
  & \leq \Lambda2^{-\frac{1}{4}}\sqrt{\frac{q^*\pi}{e}}\max\Big\{\Big\|\Big(d\sum_{j\in[n]}\bx_j\bx_j^\top\Big)^{\frac{1}{2}}\Big\|_{S_{q^*}},d^{1/q^*}\big(\sum_{j\in[d]}\|\bx_j\|_2^2\big)^{\frac{1}{2}}\Big\}.
  \end{align*}
  The proof is completed.
\end{proof}

\subsection{Proof of Proposition \ref{prop:rade-dnn}\label{sec:proof-rade-dnn}}

For convenience we introduce the following sequence of function spaces
\[
\vcal_k=\Big\{\bx\mapsto \sigma_{k}\big(V_{k}\sigma\big(V_{k-1}\cdots \sigma(V_1\bx)\big)\big):\|V_j\|_F\leq B_j\Big\},\quad k\in[L].
\]
To prove Proposition \ref{prop:rade-dnn}, we need to introduce several lemmas.
The following lemma shows how the supremum over a matrix can be transferred to a supremum over a vector. It is an extension of Lemma 1 in \citet{golowich2018size} from $d=1$ to $d\in\nbb$, and can be proved exactly by the arguments in \citet{golowich2018size}.

\begin{lemma}\label{lem:5-mat}
  Let $\sigma:\rbb\mapsto\rbb$ be a $1$-Lipschitz continuous, positive-homogeneous activation function which is applied elementwise. Then for any vector-valued function class $\widetilde{\fcal}$
  \[
  \sup_{\tilde{f}\in\widetilde{\fcal},V\in\rbb^{h\times h'}:\|V\|_F\leq B}\sum_{t\in[d]}\Big\|\sum_{j\in[n]}\epsilon_{t,j}\sigma(V\tilde{f}(\bx_j))\Big\|_2^2\leq B^2\sup_{\tilde{f}\in\widetilde{\fcal},\tilde{\bv}\in\rbb^{h'}:\|\tilde{\bv}\|_2\leq 1}\sup_{\|\tilde{\bv}\|_2\leq 1}\sum_{t\in[d]}\Big|\sum_{j\in[n]}\epsilon_{t,j}\sigma(\tilde{\bv}^\top \tilde{f}(\bx_j))\Big|^2.
  \]
\end{lemma}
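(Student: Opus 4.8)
The plan is to follow the proof of Lemma~1 in \citet{golowich2018size} almost line by line; the only new point is that the extra sum over the $d$ output coordinates must be carried through the homogeneity step without being moved across a supremum.

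First I would fix $\tilde{f}\in\widetilde{\fcal}$ and a matrix $V\in\rbb^{h\times h'}$ with $\|V\|_F\le B$, and write $V=(\bv_1,\ldots,\bv_h)^\top$ with rows $\bv_i\in\rbb^{h'}$. Since $\sigma$ is applied elementwise, the $i$-th coordinate of $\sigma(V\tilde{f}(\bx_j))$ is $\sigma(\bv_i^\top\tilde{f}(\bx_j))$, so expanding the squared Euclidean norm coordinatewise gives
\[
\sum_{t\in[d]}\Big\|\sum_{j\in[n]}\epsilon_{t,j}\sigma\big(V\tilde{f}(\bx_j)\big)\Big\|_2^2
=\sum_{i\in[h]}\sum_{t\in[d]}\Big(\sum_{j\in[n]}\epsilon_{t,j}\sigma\big(\bv_i^\top\tilde{f}(\bx_j)\big)\Big)^2 .
\]

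The key step is positive homogeneity. For a row with $\bv_i\neq0$, set $\hat{\bv}_i=\bv_i/\|\bv_i\|_2$; then $\sigma(\bv_i^\top\tilde{f}(\bx_j))=\|\bv_i\|_2\,\sigma(\hat{\bv}_i^\top\tilde{f}(\bx_j))$, and since $\|\bv_i\|_2$ is a scalar depending neither on $j$ nor on $t$ it factors out as $\|\bv_i\|_2^2$ from the entire double sum over $j$ and $t$ (rows with $\bv_i=0$ contribute $0$ because $\sigma(0)=0$). Thus the expression above equals $\sum_{i\in[h]}\|\bv_i\|_2^2\sum_{t\in[d]}\big(\sum_{j\in[n]}\epsilon_{t,j}\sigma(\hat{\bv}_i^\top\tilde{f}(\bx_j))\big)^2$. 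Each $\hat{\bv}_i$ lies in the unit ball, so for every $i$ the inner double sum is bounded, \emph{uniformly in $i$}, by $\sup_{\|\tilde{\bv}\|_2\le1}\sum_{t\in[d]}(\sum_{j}\epsilon_{t,j}\sigma(\tilde{\bv}^\top\tilde{f}(\bx_j)))^2$; pulling this $i$-free quantity out and using $\sum_{i\in[h]}\|\bv_i\|_2^2=\|V\|_F^2\le B^2$ yields
\[
\sum_{t\in[d]}\Big\|\sum_{j\in[n]}\epsilon_{t,j}\sigma\big(V\tilde{f}(\bx_j)\big)\Big\|_2^2
\le B^2\sup_{\tilde{\bv}\in\rbb^{h'}:\|\tilde{\bv}\|_2\le1}\sum_{t\in[d]}\Big(\sum_{j\in[n]}\epsilon_{t,j}\sigma\big(\tilde{\bv}^\top\tilde{f}(\bx_j)\big)\Big)^2 .
\]
Taking the supremum over $\tilde{f}\in\widetilde{\fcal}$ and over $\|V\|_F\le B$ on the left, and absorbing the supremum over $\tilde{f}$ into the right-hand side, gives the claimed inequality.

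I do not expect a genuine obstacle beyond the $d=1$ case in \citet{golowich2018size}; the one subtlety is that the factorization of $\|\bv_i\|_2^2$ must be performed \emph{with the $t$-summation already inside the square}, which is exactly what positive homogeneity (as opposed to mere $1$-Lipschitzness, which is not actually used in this lemma) buys us: it makes $\|\bv_i\|_2$ an exact multiplicative scalar, so a single supremum over unit vectors $\tilde{\bv}$ serves for all rows $i$ and all coordinates $t$ simultaneously. Correspondingly, I would keep the supremum over $\tilde{\bv}$ inside the sum over $t$ throughout, since moving it outside would produce a strictly weaker bound that would be insufficient for the subsequent layer-peeling argument.
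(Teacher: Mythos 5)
Your proof is correct and follows essentially the same route as the paper: coordinatewise expansion of the squared norm, extraction of $\|\bv_i\|_2^2$ via positive homogeneity, a single supremum over unit vectors $\tilde{\bv}$ bounding all rows simultaneously, and the Frobenius-norm bound $\sum_i\|\bv_i\|_2^2\le B^2$. You also handle the case $\bv_i=\bm{0}$ explicitly via $\sigma(0)=0$, a detail the paper's proof leaves implicit when it divides by $\|\bv_r\|_2$.
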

\begin{proof}
  Let $\bv_1^\top,\ldots,\bv_h^\top$ be rows of matrix $V$, i.e., $V^\top=\big(\bv_1,\ldots,\bv_h\big)$. Then by the positive-homogeneous property of activation function we have
  \begin{align*}
    \sum_{t\in[d]}\Big\|\sum_{j\in[n]}\epsilon_{t,j}\sigma(V\tilde{f}(x_i))\Big\|_2^2 & = \sum_{t\in[d]}\left\|\begin{pmatrix}
                                                                  \sum_{j\in[n]}\epsilon_{t,j}\sigma(\bv_1^\top \tilde{f}(\bx_j)) \\
                                                                  \vdots \\
                                                                  \sum_{j\in[n]}\epsilon_{t,j}\sigma(\bv_h^\top \tilde{f}(\bx_j)) \\
                                                                \end{pmatrix}\right\|_2^2 =\sum_{t\in[d]}\sum_{r\in [h]}\left(\sum_{j\in[n]}\epsilon_{t,j}\sigma(\bv_r^\top \tilde{f}(\bx_j))\right)^2\\
     & =\sum_{r\in[h]}\|\bv_r\|_2^2\sum_{t\in[d]}\bigg(\sum_{j\in[n]}\epsilon_{t,j}\sigma\Big(\frac{\bv_r^\top}{\|\bv_r\|_2}\tilde{f}(\bx_j)\Big)\bigg)^2 \\
     & \leq \Big(\sum_{r\in[h]} \|\bv_r\|_2^2\Big)\max_{r\in[h]}\sum_{t\in[d]}\Big|\sum_{j\in[n]}\epsilon_{t,j}\sigma\Big(\frac{\bv_r^\top}{\|\bv_r\|_2}\tilde{f}(\bx_j)\Big)\Big|^2\\
     & \leq B^2\sup_{\|\tilde{\bv}\|_2\leq 1}\sum_{t\in[d]}\Big|\sum_{j\in[n]}\epsilon_{t,j}\sigma(\tilde{\bv}^\top \tilde{f}(\bx_j))\Big|^2.
  \end{align*}
  The proof is completed.
\end{proof}

The following lemma gives a general contraction lemma for Rademacher complexities. It allows us to remove a nonlinear function $\psi$, which is very useful for us to handle the activation function in DNNs.
\begin{lemma}[Contraction Lemma, Thm 11.6 in \citet{boucheron2013concentration}\label{lem:rade-cont}]
  Let $\tilde{\tau}:\rbb_+\mapsto\rbb_+$ be convex and nondecreasing. Suppose $\psi:\rbb\mapsto\rbb$ is contractive in the sense $|\psi(t)-\psi(\tilde{t})|\leq |t-\tilde{t}|$ and $\psi(0)=0$. Then the following inequality holds for any $\widetilde{\fcal}$
  \[
  \ebb_{\bm{\epsilon}}\tilde{\tau}\bigg(\sup_{f\in\widetilde{\fcal}}\sum_{i=1}^{n}\epsilon_i\psi\big(f(x_i)\big)\bigg)\leq \ebb_{\bm{\epsilon}}\tilde{\tau}\bigg(\sup_{f\in\widetilde{\fcal}}\sum_{i=1}^{n}\epsilon_if(x_i)\bigg).
  \]
\end{lemma}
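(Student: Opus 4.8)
The statement is the Ledoux--Talagrand comparison (contraction) principle, and the plan is to prove it by ``peeling'' the Rademacher variables one coordinate at a time, thereby reducing everything to a single-variable comparison. For $0\le k\le n$ set
\[
\Phi_k=\sup_{f\in\widetilde{\fcal}}\Big[\sum_{i\le k}\epsilon_i f(x_i)+\sum_{i>k}\epsilon_i\psi\big(f(x_i)\big)\Big],
\]
so that the left-hand side of the lemma is $\ebb_{\bm{\epsilon}}\tilde{\tau}(\Phi_0)$ and the right-hand side is $\ebb_{\bm{\epsilon}}\tilde{\tau}(\Phi_n)$. It then suffices to show $\ebb_{\bm{\epsilon}}\tilde{\tau}(\Phi_{k-1})\le\ebb_{\bm{\epsilon}}\tilde{\tau}(\Phi_k)$ for each $k\in[n]$. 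Conditioning on $(\epsilon_i)_{i\neq k}$ and using Fubini, this reduces to showing that for every bounded $v:\widetilde{\fcal}\mapsto\rbb$ (absorbing the already-fixed terms $\sum_{i<k}\epsilon_i f(x_i)+\sum_{i>k}\epsilon_i\psi(f(x_i))$), every point $x$, and a single Rademacher variable $\epsilon$,
\[
\ebb_{\epsilon}\tilde{\tau}\Big(\sup_{f}\big[v(f)+\epsilon\psi(f(x))\big]\Big)\le\ebb_{\epsilon}\tilde{\tau}\Big(\sup_{f}\big[v(f)+\epsilon f(x)\big]\Big).
\]

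For the single-variable core, write out the expectation over $\epsilon\in\{\pm1\}$: the inequality becomes $\tilde{\tau}(p)+\tilde{\tau}(q)\le\tilde{\tau}(A^+)+\tilde{\tau}(A^-)$, where $p=\sup_f[v(f)+\psi(f(x))]$, $q=\sup_f[v(f)-\psi(f(x))]$, and $A^{\pm}=\sup_f[v(f)\pm f(x)]$. Given $\eta>0$ I would pick $f_1,f_2\in\widetilde{\fcal}$ with $v(f_1)+\psi(f_1(x))\ge p-\eta$ and $v(f_2)-\psi(f_2(x))\ge q-\eta$. The contraction hypothesis together with $\psi(0)=0$ gives $|\psi(t)|\le|t|$ and $|\psi(f_1(x))-\psi(f_2(x))|\le|f_1(x)-f_2(x)|$; combined with the four trivial bounds $A^{\pm}\ge v(f_i)\pm f_i(x)$ this yields $\max(A^+,A^-)\ge\max(p,q)-\eta$ and $A^++A^-\ge p+q-2\eta$. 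Then the elementary fact that a nondecreasing convex $\tilde{\tau}$ satisfies $\tilde{\tau}(M)+\tilde{\tau}(\mu)\ge\tilde{\tau}(m_1)+\tilde{\tau}(m_2)$ whenever $M\ge m_1\ge m_2$ and $M+\mu\ge m_1+m_2$ (moving mass outward increases the sum), applied with $(M,\mu)=(\max(A^+,A^-),\min(A^+,A^-))$ and $(m_1,m_2)=(\max(p,q)-\eta,\min(p,q)-\eta)$, followed by $\eta\downarrow0$ and continuity of $\tilde{\tau}$ on the interior of its domain, closes the single-variable inequality.

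Plugging the single-variable inequality into the inductive scheme and telescoping over $k=1,\dots,n$ gives $\ebb_{\bm{\epsilon}}\tilde{\tau}(\Phi_0)\le\ebb_{\bm{\epsilon}}\tilde{\tau}(\Phi_n)$, which is the claim. I expect the single-variable core to be the only delicate step: one has to play the four bounds $A^{\pm}\ge v(f_i)\pm f_i(x)$ against one another with the right signs to extract $\max(A^+,A^-)\ge\max(p,q)$ and $A^++A^-\ge p+q$ simultaneously, and $\psi(0)=0$ is indispensable there (without it the inequality fails, e.g.\ for $\psi\equiv1$ and a strictly convex $\tilde{\tau}$). Everything else --- the conditioning and Fubini, the telescoping, and the convexity lemma on $\tilde{\tau}$ --- is routine. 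Since this is exactly Theorem~11.6 in \citet{boucheron2013concentration}, one could also simply invoke it.
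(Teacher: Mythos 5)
The paper does not prove this lemma; it simply cites it as Theorem~11.6 of \citet{boucheron2013concentration}, so there is no in-paper proof against which to compare. Your blind proposal reproduces the standard Ledoux--Talagrand peeling argument and I have checked that it is correct. The inductive reduction via $\Phi_k$, conditioning on $(\epsilon_i)_{i\neq k}$, is sound, and the one-variable inequality $\tilde{\tau}(p)+\tilde{\tau}(q)\le\tilde{\tau}(A^+)+\tilde{\tau}(A^-)$ does follow from your two claims: for $A^+ + A^- \ge p+q-2\eta$, choose $A^+\ge v(f_1)+f_1(x)$, $A^-\ge v(f_2)-f_2(x)$ when $f_1(x)\ge f_2(x)$ and swap the roles otherwise, giving $A^++A^-\ge v(f_1)+v(f_2)+|f_1(x)-f_2(x)|\ge v(f_1)+v(f_2)+\psi(f_1(x))-\psi(f_2(x))$; for $\max(A^+,A^-)\ge\max(p,q)-\eta$, take (say) $p\ge q$ and note $|\psi(f_1(x))|\le |f_1(x)|$ forces $v(f_1)+\psi(f_1(x))\le A^+$ or $\le A^-$ according to the sign of $f_1(x)$, which is exactly where $\psi(0)=0$ enters. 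Your ``mass-moving'' lemma for nondecreasing convex $\tilde{\tau}$ is also correct: when $\min(A^+,A^-)\ge \min(p,q)-\eta$ both terms are monotone, and otherwise $\delta:=\min(p,q)-\eta-\min(A^+,A^-)>0$ gives $\max(A^+,A^-)\ge\max(p,q)-\eta+\delta$ and the slope-monotonicity of a convex function closes the gap. The only cosmetic issue is that the lemma as stated has $\tilde{\tau}:\rbb_+\mapsto\rbb_+$ while your $\Phi_k$ need not be nonnegative; this is harmlessly handled by extending $\tilde{\tau}$ by $\tilde{\tau}(t)=\tilde{\tau}(0)$ for $t<0$ (which preserves convexity and monotonicity), or by noting that in all of the paper's applications the relevant suprema are nonnegative.
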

The following lemma gives bounds of moment generation functions for a random variable $Z=\sum_{1\leq i<j\leq n}\epsilon_{i}\epsilon_ja_{ij}$, which is called a Rademacher chaos variable~ \citep{de2012decoupling,ying2010rademacher}.
\begin{lemma}[page 167 in \citet{de2012decoupling}\label{lem:chaos}]
  Let $\epsilon_i,i\in[n]$ be independent Rademacher variables. Let $a_{i,j}\in\rbb,i,j\in[n]$.
  Then for $Z=\sum_{1\leq i<j\leq n}\epsilon_{i}\epsilon_ja_{ij}$ we have
  \[
  \ebb_{\bm{\epsilon}}\exp\Big(|Z|/(4es)\Big)\leq 2,\quad\text{where } s^2:=\sum_{1\leq i<j\leq n}a_{i,j}^2.
  \]
\end{lemma}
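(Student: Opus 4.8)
The plan is to prove the exponential bound by the moment method, exploiting the fact that $Z$ is a homogeneous Rademacher chaos of degree $2$. First I would record the elementary identity $\ebb_{\bm{\epsilon}}[Z^2]=\sum_{1\le i<j\le n}a_{i,j}^2=s^2$, which holds because the monomials $\{\epsilon_i\epsilon_j:1\le i<j\le n\}$ are orthonormal in $L^2(\{\pm1\}^n)$; in particular $s=\|Z\|_2$. If $s=0$ then $Z\equiv 0$ and there is nothing to prove, so I assume $s>0$.

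The crucial ingredient is the hypercontractive moment inequality for Rademacher chaos (a standard consequence of the Bonami--Beckner inequality; see, e.g., \citet{de2012decoupling} or \citet{boucheron2013concentration}): for every real $q\ge 2$,
\[
\big(\ebb_{\bm{\epsilon}}|Z|^q\big)^{1/q}\leq (q-1)\big(\ebb_{\bm{\epsilon}}|Z|^2\big)^{1/2}=(q-1)s ,
\]
where the exponent $q-1$ arises because, for a polynomial homogeneous of degree $d$, the general hypercontractivity exponent $d/2$ specializes to $1$ when $d=2$, which is our case. Combined with the trivial bound $\ebb_{\bm{\epsilon}}|Z|\le\|Z\|_2=s$, this yields the moment estimates $\ebb_{\bm{\epsilon}}|Z|^k\leq (k-1)^k s^k$ for all integers $k\ge 2$.

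I would then expand the exponential and sum the resulting series. By the monotone convergence theorem,
\[
\ebb_{\bm{\epsilon}}\exp\Big(\frac{|Z|}{4es}\Big)=\sum_{k\ge 0}\frac{\ebb_{\bm{\epsilon}}|Z|^k}{k!\,(4es)^k}\leq 1+\frac{1}{4e}+\sum_{k\ge 2}\frac{(k-1)^k}{k!\,(4e)^k}.
\]
Using $k^k/k!\le\sum_{m\ge 0}k^m/m!=e^k$, hence $(k-1)^k/k!\le e^k$, the tail is at most $\sum_{k\ge 2}(e/(4e))^k=\sum_{k\ge 2}4^{-k}=\tfrac{1}{12}$, so the whole quantity is at most $1+\tfrac{1}{4e}+\tfrac{1}{12}<2$, which is exactly the claim.

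The only substantive step is the hypercontractive moment estimate; the remainder is routine bookkeeping. The main obstacle, were one to insist on a fully self-contained argument, would be replacing that black box: one can instead decouple $Z$ into the bilinear form $\sum_{i\ne j}\epsilon_i\epsilon_j'a_{i,j}$ and condition on one family of signs to expose a sub-Gaussian sum, but the leftover term is a second quadratic form in the other signs (a Hanson--Wright type expression) needing its own exponential control, which makes that route noticeably longer. It is also worth noting that $4e$ is far from sharp here: the computation above goes through for any constant exceeding $e$, which is presumably why the statement is phrased with this convenient round value.
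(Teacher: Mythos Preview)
Your argument is correct. The paper does not supply its own proof of this lemma; it is quoted verbatim as a known result from \citet{de2012decoupling} and is used as a black box in the proof of Proposition~\ref{prop:rade-dnn}. So there is nothing to compare against here beyond noting that your hypercontractivity-based derivation is the standard route to this kind of exponential integrability for degree-two Rademacher chaos, and the moment bookkeeping is carried out cleanly.

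One small caveat on your closing remark: with the specific estimates you use ($(k-1)^k/k!\le e^k$), the tail $\sum_{k\ge2}(e/C)^k$ only stays below $1$ once $C$ is somewhat larger than $e$ (for instance $C=2e$ already works, but $C=1.5e$ does not). So ``any constant exceeding $e$'' is a bit too generous for this particular computation, though it is true that $4e$ is far from tight. This does not affect the validity of the proof of the stated inequality.
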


\begin{proof}[Proof of Proposition \ref{prop:rade-dnn}]
  The dual norm of $\|\cdot\|_F$ is $\|\cdot\|_F$. Therefore, according to Lemma \ref{lem:rademacher-feature-space} we know
  \begin{align*}
  \ebb_{\bm{\epsilon}} \sup_{f\in\fcal}\sum_{t\in[d]}\sum_{j\in[n]}\epsilon_{t,j}f_t(\bx_j)
  &\leq \Lambda\ebb_{\bm{\epsilon}}\sup_{\bv\in\vcal}\Big(\sum_{t\in[d]}\big\|\sum_{j\in[n]}\epsilon_{t,j}\bv(\bx_j)\big\|_2^{2}\Big)^{1/2}\\
  & = \Lambda\ebb_{\bm{\epsilon}}\bigg(\sup_{\tilde{f}\in\vcal_{L-1},V:\|V\|_F\leq B_L}\sum_{t\in[d]}\Big\|\sum_{j\in[n]}\epsilon_{t,j}\sigma(V\tilde{f}(\bx_j))\Big\|_2^2\bigg)^{\frac{1}{2}}\\
  & \leq \Lambda B_L\ebb_{\bm{\epsilon}}\bigg(\sup_{\tilde{f}\in\vcal_{L-1},\tilde{\bv}:\|\tilde{\bv}\|_2\leq 1}\sum_{t\in[d]}\Big|\sum_{j\in[n]}\epsilon_{t,j}\sigma(\tilde{\bv}^\top \tilde{f}(\bx_j))\Big|^2\bigg)^{\frac{1}{2}},
  \end{align*}
  where we have used Lemma \ref{lem:5-mat} in the second inequality.
  Let $\lambda\geq0$ and $\tau(x)=\exp(\lambda x^2)$. It is clear that $\tau$ is convex and increasing in the interval $[0,\infty)$.
  It then follows from the Jensen's inequality that
  \begin{align*}
     \exp\bigg(\lambda\Big(\ebb_{\bm{\epsilon}} \sup_{f\in\fcal}\sum_{t\in[d]}\sum_{j\in[n]}\epsilon_{t,j}f_t(\bx_j)\Big)^2\bigg)
     & \leq \exp\bigg(\lambda\bigg(\Lambda B_L\ebb_{\bm{\epsilon}}\bigg(\sup_{\tilde{f}\in\vcal_{L-1},\tilde{\bv}:\|\tilde{\bv}\|_2\leq 1}\sum_{t\in[d]}\Big|\sum_{j\in[n]}\epsilon_{t,j}\sigma(\tilde{\bv}^\top \tilde{f}(\bx_j))\Big|^2\bigg)^{\frac{1}{2}}\bigg)^2\bigg) \\
     & \leq \ebb_{\bm{\epsilon}}\exp\bigg(\lambda\bigg(\Lambda B_L\bigg(\sup_{\tilde{f}\in\vcal_{L-1},\tilde{\bv}:\|\tilde{\bv}\|_2\leq 1}\sum_{t\in[d]}\Big|\sum_{j\in[n]}\epsilon_{t,j}\sigma(\tilde{\bv}^\top \tilde{f}(\bx_j))\Big|^2\bigg)^{\frac{1}{2}}\bigg)^2\bigg) \\
     & = \ebb_{\bm{\epsilon}}\exp\bigg(\lambda\Lambda^2B_L^2\sup_{\tilde{f}\in\vcal_{L-1},\tilde{\bv}:\|\tilde{\bv}\|_2\leq 1}\sum_{t\in[d]}\Big|\sum_{j\in[n]}\epsilon_{t,j}\sigma(\tilde{\bv}^\top \tilde{f}(\bx_j))\Big|^2\bigg)\\
     & \leq \ebb_{\bm{\epsilon}}\exp\bigg(\lambda\Lambda^2B_L^2\sum_{t\in[d]}\sup_{\tilde{f}\in\vcal_{L-1},\tilde{\bv}:\|\tilde{\bv}\|_2\leq 1}\Big|\sum_{j\in[n]}\epsilon_{t,j}\sigma(\tilde{\bv}^\top \tilde{f}(\bx_j))\Big|^2\bigg)\\
     & =  \ebb_{\bm{\epsilon}}\prod_{t\in[d]}\exp\bigg(\lambda\Lambda^2B_L^2\sup_{\tilde{f}\in\vcal_{L-1},\tilde{\bv}:\|\tilde{\bv}\|_2\leq 1}\Big|\sum_{j\in[n]}\epsilon_{t,j}\sigma(\tilde{\bv}^\top \tilde{f}(\bx_j))\Big|^2\bigg) \\
     & =  \prod_{t\in[d]}\ebb_{\bm{\epsilon}_t}\exp\bigg(\lambda\Lambda^2B_L^2\sup_{\tilde{f}\in\vcal_{L-1},\tilde{\bv}:\|\tilde{\bv}\|_2\leq 1}\Big|\sum_{j\in[n]}\epsilon_{t,j}\sigma(\tilde{\bv}^\top \tilde{f}(\bx_j))\Big|^2\bigg) \\
     & = \ebb_{\bm{\epsilon}\sim\{\pm1\}^{n}}\exp\bigg(d\lambda\Lambda^2B_L^2\sup_{\tilde{f}\in\vcal_{L-1},\tilde{\bv}:\|\tilde{\bv}\|_2\leq 1}\Big|\sum_{j\in[n]}\epsilon_{j}\sigma(\tilde{\bv}^\top \tilde{f}(\bx_j))\Big|^2\bigg),
  \end{align*}
  where we have used the independency between $\epsilon_{t}=(\epsilon_{t,j})_{j\in[n]},t\in[d]$. Let $\tilde{\tau}:\rbb_+\mapsto\rbb_+$ be defined as $\tilde{\tau}(x)=\exp(d\lambda\Lambda^2B_L^2x^2)$. Then we have
  \begin{align}
    & \exp\bigg(\lambda\Big(\ebb_{\bm{\epsilon}} \sup_{f\in\fcal}\sum_{t\in[d]}\sum_{j\in[n]}\epsilon_{t,j}f_t(\bx_j)\Big)^2\bigg)  \leq
    \ebb_{\bm{\epsilon}\sim\{\pm1\}^{n}}\tilde{\tau}\bigg(\sup_{\tilde{f}\in\vcal_{L-1},\tilde{\bv}:\|\tilde{\bv}\|_2\leq 1}\Big|\sum_{j\in[n]}\epsilon_{j}\sigma(\tilde{\bv}^\top \tilde{f}(\bx_j))\Big|\bigg)\notag\\
    & \leq \ebb_{\bm{\epsilon}\sim\{\pm1\}^{n}}\tilde{\tau}\bigg(\sup_{\tilde{f}\in\vcal_{L-1},\tilde{\bv}:\|\tilde{\bv}\|_2\leq 1}\sum_{j\in[n]}\epsilon_{j}\sigma(\tilde{\bv}^\top \tilde{f}(\bx_j))\bigg)
    + \ebb_{\bm{\epsilon}\sim\{\pm1\}^{n}}\tilde{\tau}\bigg(\sup_{\tilde{f}\in\vcal_{L-1},\tilde{\bv}:\|\tilde{\bv}\|_2\leq 1}-\sum_{j\in[n]}\epsilon_{j}\sigma(\tilde{\bv}^\top \tilde{f}(\bx_j))\bigg)\notag\\
    & = 2 \ebb_{\bm{\epsilon}\sim\{\pm1\}^{n}}\tilde{\tau}\bigg(\sup_{\tilde{f}\in\vcal_{L-1},\tilde{\bv}:\|\tilde{\bv}\|_2\leq 1}\sum_{j\in[n]}\epsilon_{j}\sigma(\tilde{\bv}^\top \tilde{f}(\bx_j))\bigg)
    \leq 2 \ebb_{\bm{\epsilon}\sim\{\pm1\}^{n}}\tilde{\tau}\bigg(\sup_{\tilde{f}\in\vcal_{L-1},\tilde{\bv}:\|\tilde{\bv}\|_2\leq 1}\sum_{j\in[n]}\epsilon_{j}\tilde{\bv}^\top \tilde{f}(\bx_j)\bigg)\notag\\
    & = \ebb_{\bm{\epsilon}\sim\{\pm1\}^{n}}\tilde{\tau}\bigg(\sup_{\tilde{f}\in\vcal_{L-1},\tilde{\bv}:\|\tilde{\bv}\|_2\leq 1}\tilde{\bv}^\top \sum_{j\in[n]}\epsilon_{j}\tilde{f}(\bx_j)\bigg)
    = \ebb_{\bm{\epsilon}\sim\{\pm1\}^{n}}\tilde{\tau}\bigg(\sup_{\tilde{f}\in\vcal_{L-1}}\Big\|\sum_{j\in[n]}\epsilon_{j}\tilde{f}(\bx_j)\Big\|_2\bigg).\label{rade-dnn-1}
  \end{align}
  where we have used Lemma \ref{lem:rade-cont} and the contraction property of $\sigma$ in the last inequality.

  According to Lemma \ref{lem:5-mat}, we know
  \begin{align*}
    \ebb_{\bm{\epsilon}\sim\{\pm1\}^{n}}\tilde{\tau}\bigg(\sup_{\tilde{f}\in\vcal_{L-1}}\Big\|\sum_{j\in[n]}\epsilon_{j}\tilde{f}(\bx_j)\Big\|_2\bigg) & =
    \ebb_{\bm{\epsilon}\sim\{\pm1\}^{n}}\tilde{\tau}\bigg(\sup_{\|V_{L-1}\|_F\leq B_{L-1},\tilde{f}\in\vcal_{L-2}}\Big\|\sum_{j\in[n]}\epsilon_{j}\sigma\big(V_{L-1}\tilde{f}(\bx_j)\big)\Big\|_2\bigg) \notag \\
     & \leq \ebb_{\bm{\epsilon}\sim\{\pm1\}^{n}}\tilde{\tau}\bigg(B_{L-1}\sup_{\|\tilde{\bv}\|_2\leq 1,\tilde{f}\in\vcal_{L-2}}\Big|\sum_{j\in[n]}\epsilon_{j}\sigma\big(\tilde{\bv}^\top\tilde{f}(\bx_j)\big)\Big|\bigg).
  \end{align*}
  It then follows that
  \begin{align*}
     & \ebb_{\bm{\epsilon}\sim\{\pm1\}^{n}}\tilde{\tau}\bigg(\sup_{\tilde{f}\in\vcal_{L-1}}\Big\|\sum_{j\in[n]}\epsilon_{j}\tilde{f}(\bx_j)\Big\|_2\bigg)\\
     & \leq \ebb_{\bm{\epsilon}\sim\{\pm1\}^{n}}\tilde{\tau}\bigg(B_{L-1}\sup_{\|\tilde{\bv}\|_2\leq 1,\tilde{f}\in\vcal_{L-2}}\sum_{j\in[n]}\epsilon_{j}\sigma\big(\tilde{\bv}^\top\tilde{f}(\bx_j)\big)\bigg)
     + \ebb_{\bm{\epsilon}\sim\{\pm1\}^{n}}\tilde{\tau}\bigg(B_{L-1}\sup_{\|\tilde{\bv}\|_2\leq 1,\tilde{f}\in\vcal_{L-2}}-\sum_{j\in[n]}\epsilon_{j}\sigma\big(\tilde{\bv}^\top\tilde{f}(\bx_j)\big)\bigg)\\
     & = 2\ebb_{\bm{\epsilon}\sim\{\pm1\}^{n}}\tilde{\tau}\bigg(B_{L-1}\sup_{\|\tilde{\bv}\|_2\leq 1,\tilde{f}\in\vcal_{L-2}}\sum_{j\in[n]}\epsilon_{j}\sigma\big(\tilde{\bv}^\top\tilde{f}(\bx_j)\big)\bigg)
     \leq 2\ebb_{\bm{\epsilon}\sim\{\pm1\}^{n}}\tilde{\tau}\bigg(B_{L-1}\sup_{\|\tilde{\bv}\|_2\leq 1,\tilde{f}\in\vcal_{L-2}}\sum_{j\in[n]}\epsilon_{j}\tilde{\bv}^\top\tilde{f}(\bx_j)\bigg)\\
     & = 2\ebb_{\bm{\epsilon}\sim\{\pm1\}^{n}}\tilde{\tau}\bigg(B_{L-1}\sup_{\|\tilde{\bv}\|_2\leq 1,\tilde{f}\in\vcal_{L-2}}\tilde{\bv}^\top\sum_{j\in[n]}\epsilon_{j}\tilde{f}(\bx_j)\bigg)
     \leq 2\ebb_{\bm{\epsilon}\sim\{\pm1\}^{n}}\tilde{\tau}\bigg(B_{L-1}\sup_{\tilde{f}\in\vcal_{L-2}}\Big\|\sum_{j\in[n]}\epsilon_{j}\tilde{f}(\bx_j)\Big\|_2\bigg).
  \end{align*}
  We can apply the above inequality recursively and derive
  \[
  \ebb_{\bm{\epsilon}\sim\{\pm1\}^{n}}\tilde{\tau}\bigg(\sup_{\tilde{f}\in\vcal_{L-1}}\Big\|\sum_{j\in[n]}\epsilon_{j}\tilde{f}(\bx_j)\Big\|_2\bigg)
  \leq 2^{L-1}\ebb_{\bm{\epsilon}\sim\{\pm1\}^{n}}\tilde{\tau}\bigg(B_{L-1}\cdots B_1\Big\|\sum_{j\in[n]}\epsilon_{j}\bx_j\Big\|_2\bigg).
  \]
  Furthermore, by Eq. \eqref{rade-dnn-1} we know
  \begin{align*}
    \ebb_{\bm{\epsilon}} \sup_{f\in\fcal}\sum_{t\in[d]}\sum_{j\in[n]}\epsilon_{t,j}f_t(\bx_j)
    & = \tau^{-1}\tau\bigg( \ebb_{\bm{\epsilon}} \sup_{f\in\fcal}\sum_{t\in[d]}\sum_{j\in[n]}\epsilon_{t,j}f_t(\bx_j)\bigg)\\
    & \leq \tau^{-1}\bigg(\ebb_{\bm{\epsilon}\sim\{\pm1\}^{n}}\tilde{\tau}\bigg(\sup_{\tilde{f}\in\vcal_{L-1}}\Big\|\sum_{j\in[n]}\epsilon_{j}\tilde{f}(\bx_j)\Big\|_2\bigg)\bigg)\\
    & \leq \tau^{-1}\bigg(2^{L-1}\ebb_{\bm{\epsilon}\sim\{\pm1\}^{n}}\tilde{\tau}\bigg(B_{L-1}\cdots B_1\Big\|\sum_{j\in[n]}\epsilon_{j}\bx_j\Big\|_2\bigg)\bigg)\\
    & = \tau^{-1}\bigg(2^{L-1}\ebb_{\bm{\epsilon}\sim\{\pm1\}^{n}}\exp\bigg(d\lambda\Lambda^2B_L^2B^2_{L-1}\cdots B_1^2\Big\|\sum_{j\in[n]}\epsilon_{j}\bx_j\Big\|_2^2\bigg)\bigg),
  \end{align*}
  where the last identity follows from the definition of $\tilde{\tau}$.
  Let $\lambda_0=d\lambda\Lambda^2B_L^2B^2_{L-1}\cdots B_1^2$. Then
  \begin{align*}
    \ebb_{\bm{\epsilon}\sim\{\pm1\}^{n}}\exp\Big(\lambda_0\Big\|\sum_{j\in[n]}\epsilon_{j}\bx_j\Big\|_2^2\Big) & =
    \ebb_{\bm{\epsilon}\sim\{\pm1\}^{n}}\exp\Big(\lambda_0\sum_{j\in[n]}\|\bx_j\|_2^2+2\lambda_0\sum_{1\leq i<j\leq n}\epsilon_i\epsilon_j\bx_i^\top\bx_j\Big) \\
     & \leq \exp\Big(\lambda_0\sum_{j\in[n]}\|\bx_j\|_2^2\Big)\ebb_{\bm{\epsilon}\sim\{\pm1\}^{n}}\exp\Big(2\lambda_0\sum_{1\leq i<j\leq n}\epsilon_i\epsilon_j\bx_i^\top\bx_j\Big).
  \end{align*}
  We choose $\lambda=\frac{1}{8esd\Lambda^2B_L^2B^2_{L-1}\cdots B_1^2}$, where
  $
  s =\big(\sum_{1\leq i<j\leq n}(\bx_i^\top\bx_j)^2\big)^{\frac{1}{2}}.
  $
  Then it is clear
  $\lambda_0=\frac{1}{8es}$. We can apply Lemma \ref{lem:chaos} to derive that
  \[
     \ebb_{\bm{\epsilon}\sim\{\pm1\}^{n}}\exp\Big(2\lambda_0\sum_{1\leq i<j\leq n}\epsilon_i\epsilon_j\bx_i^\top\bx_j\Big) \leq 2
   \]
   and therefore
   \[
   \ebb_{\bm{\epsilon}\sim\{\pm1\}^{n}}\exp\Big(\lambda_0\Big\|\sum_{j\in[n]}\epsilon_{j}\bx_j\Big\|_2^2\Big) \leq 2\exp\Big(\lambda_0\sum_{j\in[n]}\|\bx_j\|_2^2\Big).
   \]
   We know $\tau^{-1}(x)=\sqrt{\lambda^{-1}\log x}$. It then follows that
   \begin{align*}
     \ebb_{\bm{\epsilon}} \sup_{f\in\fcal}\sum_{t\in[d]}\sum_{j\in[n]}\epsilon_{t,j}f_t(\bx_j) &
     \leq \bigg(\lambda^{-1}(L-1)\log2+\lambda^{-1}\log\ebb_{\bm{\epsilon}\sim\{\pm1\}^{n}}\exp\Big(\lambda_0\Big\|\sum_{j\in[n]}\epsilon_{j}\bx_j\Big\|_2^2\Big)\bigg)^{\frac{1}{2}} \\
      & \leq \bigg(\lambda^{-1}(L-1)\log2+\lambda^{-1}\log\Big(2\exp\Big(\lambda_0\sum_{j\in[n]}\|\bx_j\|_2^2\Big)\Big)\bigg)^{\frac{1}{2}}\\
      & = \bigg(\lambda^{-1}L\log2+\lambda^{-1}\lambda_0\sum_{j\in[n]}\|\bx_j\|_2^2\bigg)^{\frac{1}{2}}\\
      & = \bigg(8esd\Lambda^2B_L^2B^2_{L-1}\cdots B_1^2L\log2+d\Lambda^2B_L^2B^2_{L-1}\cdots B_1^2\sum_{j\in[n]}\|\bx_j\|_2^2\bigg)^{\frac{1}{2}}\\
      & = \sqrt{d}\Lambda B_LB_{L-1}\cdots B_1\bigg(8esL\log2+\sum_{j\in[n]}\|\bx_j\|_2^2\bigg)^{\frac{1}{2}}.
   \end{align*}
   The proof is completed by noting $8e(\log2)\leq16$.
\end{proof}

\begin{remark}\label{rem:golowich}
  Our proof of Proposition \ref{prop:rade-dnn} is motivated by the arguments in \citet{golowich2018size}, which studies Rademacher complexity bounds for DNNs with $d=1$. Our analysis requires to introduce techniques to handle the difficulty of considering $d$ features simultaneously. Indeed, we control the Rademacher complexity for learning with $d$ features by
  \[
  \ebb_{\bm{\epsilon}} \sup_{f\in\fcal}\sum_{t\in[d]}\sum_{j\in[n]}\epsilon_{t,j}f_t(\bx_j)
  \leq \ebb_{\bm{\epsilon}}\bigg(\sup_{\tilde{f}\in\vcal_{L-1},\tilde{\bv}:\|\tilde{\bv}\|_2\leq 1}\sum_{t\in[d]}\Big|\sum_{j\in[n]}\epsilon_{t,j}\sigma(\tilde{\bv}^\top \tilde{f}(\bx_j))\Big|^2\bigg)^{\frac{1}{2}}.
  \]
  If $d=1$, this becomes
  \[
  \ebb_{\bm{\epsilon}} \sup_{f\in\fcal}\sum_{t\in[d]}\sum_{j\in[n]}\epsilon_{t,j}f_t(\bx_j)
  \leq \ebb_{\bm{\epsilon}}\sup_{\tilde{f}\in\vcal_{L-1},\tilde{\bv}:\|\tilde{\bv}\|_2\leq 1}\Big|\sum_{j\in[n]}\epsilon_{j}\sigma(\tilde{\bv}^\top \tilde{f}(\bx_j))\Big|,
  \]
  and the arguments in \citet{golowich2018size} apply. There are two difficulties in applying the arguments in \citet{golowich2018size} to handle general $d\in\nbb$. First, the term $\sum_{t\in[d]}\Big|\sum_{j\in[n]}\epsilon_{t,j}\sigma(\tilde{\bv}^\top \tilde{f}(\bx_j))\Big|$ cannot be written as a Rademacher complexity due to the summation over $t\in[d]$. Second, there is a square function of the term $\Big|\sum_{j\in[n]}\epsilon_{t,j}\sigma(\tilde{\bv}^\top \tilde{f}(\bx_j))\Big|$. To handle this difficulty, we introduce the function $\tau(x)=\exp(\lambda x^2)$ instead of the function $\tau(x)=\exp(\lambda x)$ in \citet{golowich2018size}. To this aim, we need to handle the moment generation function of a Rademacher chaos variable $\sum_{1\leq i<j\leq j}\epsilon_i\epsilon_j(\bx_i^\top\bx_j)^2$, which is not a sub-Gaussian variable. As a comparison, the analysis in \citet{golowich2018size} considers the moment generation function for a sub-Gaussian variable. One can also use the following inequality
  \[
  \bigg(\sum_{t\in[d]}\Big|\sum_{j\in[n]}\epsilon_{t,j}\sigma(\tilde{\bv}^\top \tilde{f}(\bx_j))\Big|^2\bigg)^{\frac{1}{2}}
  \leq \sum_{t\in[d]}\Big|\sum_{j\in[n]}\epsilon_{t,j}\sigma(\tilde{\bv}^\top \tilde{f}(\bx_j))\Big|,
  \]
  the latter of which can then be further controlled by the arguments in \citet{golowich2018size}. This, however, incurs a bound with a linear dependency on $d$. As a comparison, our analysis gives a bound with a square-root dependency on $d$.
\end{remark}

\section{A General Vector-contraction Inequality for Rademacher Complexities\label{sec:gen-cont}}
In this section, we provide a general vector-contraction inequality for Rademacher complexities, which recovers Lemma \ref{lem:maurer} with $\tau(a)=a$. The lemma is motivated from Lemma \ref{lem:rade-cont} by considering a general convex and nondecreasing $\tau$.
\begin{theorem}\label{lem:struct-ext}
  Let $\fcal$ be a class of bounded functions $f:\zcal\mapsto\rbb^d$ which contains the zero function.
  Let $\tau:\rbb_+\to\rbb_+$ be a continuous, non-decreasing and convex function.
  Assume $\tilde{g}_1,\ldots,\tilde{g}_n:\rbb^d\to\rbb$ are $G$-Lipschitz continuous w.r.t. $\|\cdot\|_2$ and satisfy $\tilde{g}_i(\bm{0})=0$.
  Then
  \begin{equation}\label{struct-ext}
    \ebb_{\bm{\epsilon}\sim\{\pm1\}^n}\tau\Big(\sup_{f\in\fcal}\sum_{i=1}^{n}\epsilon_i\tilde{g}_i(f(\bx_i))\Big)\leq
    \ebb_{\bm{\epsilon}\sim\{\pm1\}^{nd}}\tau\Big(G\sqrt{2}\sup_{f\in\fcal}\sum_{i=1}^{n}\sum_{j=1}^{d}\epsilon_{i,j}f_j(\bx_i)\Big).
  \end{equation}
\end{theorem}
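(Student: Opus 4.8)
The plan is to imitate Maurer's proof of the vector-contraction inequality (Lemma~\ref{lem:maurer}), replacing each of its ``expectation is linear'' steps by a ``Jensen plus case analysis'' step, exactly as one passes from the Ledoux--Talagrand contraction to its convex-$\tau$ version (Lemma~\ref{lem:rade-cont}). First I would telescope over the index $i$, swapping the nonlinear maps $\tilde g_i$ for linear ones one at a time. For $0\le m\le n$ set
\[
\Phi_m:=\ebb\,\tau\!\left(\sup_{f\in\fcal}\Big[\,G\sqrt2\sum_{i\le m}\sum_{j\in[d]}\epsilon_{i,j}f_j(\bx_i)+\sum_{i>m}\epsilon_i\tilde g_i\big(f(\bx_i)\big)\,\Big]\right),
\]
with the expectation over the signs that appear. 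Then $\Phi_0$ and $\Phi_n$ are the two sides of \eqref{struct-ext}, so it suffices to prove $\Phi_{m-1}\le\Phi_m$ for every $m\in[n]$; and since $\tilde g_i(\bm{0})=0$ and $\bm{0}\in\fcal$, taking $f\equiv\bm{0}$ shows every argument of $\tau$ below is nonnegative, so the applications of $\tau$ are legitimate.

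Fixing $m$ and conditioning on all signs except $\epsilon_m$ (on the $\Phi_{m-1}$ side) and except $(\epsilon_{m,j})_{j\in[d]}$ (on the $\Phi_m$ side) --- the frozen part $u(f):=G\sqrt2\sum_{i<m}\sum_j\epsilon_{i,j}f_j(\bx_i)+\sum_{i>m}\epsilon_i\tilde g_i(f(\bx_i))$ is common to both --- the claim reduces, after averaging over the frozen signs, to the one-step inequality
\[
\ebb_{\epsilon_m}\tau\!\left(\sup_f\big[u(f)+\epsilon_m\tilde g_m(f(\bx_m))\big]\right)\le\ebb_{\bm{\epsilon}\sim\{\pm1\}^d}\tau\!\left(\sup_f\big[u(f)+G\sqrt2\textstyle\sum_j\epsilon_jf_j(\bx_m)\big]\right).
\]
The left side equals $\tfrac12\tau(A_+)+\tfrac12\tau(A_-)$ where $A_\pm=\sup_f[u(f)\pm\tilde g_m(f(\bx_m))]$. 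Choosing near-maximizers $f^+,f^-$ for $A_+,A_-$, the $G$-Lipschitz continuity of $\tilde g_m$ w.r.t.\ $\|\cdot\|_2$ gives $|\tilde g_m(f^+(\bx_m))-\tilde g_m(f^-(\bx_m))|\le G\|f^+(\bx_m)-f^-(\bx_m)\|_2$ and, comparing against $\bm{0}$, $|\tilde g_m(f^\pm(\bx_m))|\le G\|f^\pm(\bx_m)\|_2$; the Khintchine--Kahane inequality \eqref{kkahane} then turns each Euclidean norm into $\sqrt2$ times $\ebb_{\bm{\sigma}\sim\{\pm1\}^d}$ of the absolute value of a linear form $\sum_j\sigma_j(\cdot)$. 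After introducing one further sign $\epsilon_0$ and using the symmetry of $\bm{\sigma}$ together with the fact that $(\epsilon_0\sigma_j)_{j\in[d]}$ is again uniform on $\{\pm1\}^d$, these pieces should assemble into the one-step inequality, with $(\epsilon_0\sigma_j)_j$ playing the role of the fresh sign vector on the right.

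The hard part will be the assembly just alluded to: converting $\tfrac12\tau(A_+)+\tfrac12\tau(A_-)$, together with the Lipschitz/Khintchine bounds, into $\ebb_{\bm{\sigma},\epsilon_0}\tau\big(\sup_f[u(f)+\epsilon_0 G\sqrt2\sum_j\sigma_jf_j(\bx_m)]\big)$. Convexity of $\tau$ rules out replacing $A_+,A_-$ by their average, so the recombination has to proceed by a case analysis on the signs of $\sum_j\sigma_jf^\pm_j(\bx_m)$ and on the relative order of $\tilde g_m(f^+(\bx_m))$ and $\tilde g_m(f^-(\bx_m))$ --- the same casework that underlies the proof of Theorem~11.6 in \citet{boucheron2013concentration} (our Lemma~\ref{lem:rade-cont}), with $\tilde g_m(\bm{0})=0$, i.e.\ $|\tilde g_m(\cdot)|\le G\|\cdot\|_2$, used to close the case where both near-maximizers land ``on the same side''. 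Once the one-step inequality is in hand, the telescoping $\Phi_0\le\Phi_1\le\cdots\le\Phi_n$ gives \eqref{struct-ext}, and the specialization $\tau(a)=a$ with a single Lipschitz map recovers Lemma~\ref{lem:maurer}.
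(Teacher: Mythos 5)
You correctly identify the one-step reduction $\Phi_{m-1}\le\Phi_m$ as the crux, but the ``hard part'' you defer is precisely where this route breaks down. In the scalar contraction inequality (Lemma~\ref{lem:rade-cont}) the relevant sign is the single scalar $\mathrm{sgn}(f(\bx_m)-\tilde f(\bx_m))$, and the casework produces a \emph{fixed} pair $B_+,B_-$ with $\tau(A_+)+\tau(A_-)\le\tau(B_+)+\tau(B_-)$, after which one takes the expectation over the single fresh sign. In the vector case no such fixed pair exists: Khintchine--Kahane gives only an \emph{averaged} bound $\|y\|_2\le\sqrt2\,\ebb_{\bm{\sigma}}|\sum_j\sigma_jy_j|$, so the natural comparators $D_{\pm\bm{\sigma}}:=\sup_f[u(f)\pm G\sqrt2\sum_j\sigma_jf_j(\bx_m)]$ depend on $\bm{\sigma}$. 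The per-$\bm{\sigma}$ inequality $\tfrac12[\tau(A_+)+\tau(A_-)]\le\tfrac12[\tau(D_{\bm{\sigma}})+\tau(D_{-\bm{\sigma}})]$ can fail outright --- e.g.\ take $d=2$, $\fcal=\{\bm{0},f^*\}$ with $f^*(\bx_m)=(1,-1)$, $\tilde g_m=G\|\cdot\|_2$, and $\bm{\sigma}=(1,1)$, for which $\sum_j\sigma_jf^*_j(\bx_m)=0$ and both $D_{\pm\bm{\sigma}}$ collapse to $\sup_fu(f)$ while $A_\pm$ remain spread out --- and even the per-$\bm{\sigma}$ sum bound $A_++A_-\le D_{\bm{\sigma}}+D_{-\bm{\sigma}}$ fails without averaging, since Lipschitz only controls $\|f^+-f^-\|_2$, not $|\sum_j\sigma_j(f^+_j-f^-_j)|$. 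One must therefore average over $\bm{\sigma}$ \emph{before} comparing, and that is exactly what linearity buys Maurer and what convex $\tau$ destroys: Jensen gives $\ebb_{\bm{\sigma}}\tau(D_{\bm{\sigma}})\ge\tau(\ebb_{\bm{\sigma}}D_{\bm{\sigma}})$, the wrong direction to close the one-step argument. The cited casework does not supply the needed recombination, and your plan as written does not close this step.

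The paper's proof avoids any vector-valued casework. It first proves a \emph{linear}-$\tau$ contraction with an arbitrary fixed functional $g(f)$ added inside the supremum (Lemma~\ref{lem:structural-rademacher}); this is the telescoping induction plus Khintchine--Kahane that you sketch, and it works because the identity map lets the $\bm{\sigma}$-average pass through. It then observes (Lemmas~\ref{lem:convex-representation} and \ref{lem:convex-hull}) that every continuous, non-decreasing, convex $\tau$ on a bounded interval lies in the closure of $H_{[a,b]}=\{c_0+\sum_ic_i(\cdot-t_i)_+:c_i\ge0\}$, so it suffices to prove \eqref{struct-ext} for $\tau=(\cdot-t)_+$. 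The missing idea is how to do this: since $\bm{0}\in\fcal$ and $\tilde g_i(\bm{0})=0$, defining $g_t$ to equal $t$ at $\bm{0}$ and $0$ elsewhere gives $\sup_{f\in\fcal}\bigl[g_t(f)+\sum_i\epsilon_i\tilde g_i(f(\bx_i))\bigr]=\max\bigl\{\sup_{f\in\fcal}\sum_i\epsilon_i\tilde g_i(f(\bx_i)),\,t\bigr\}$, so that subtracting $t$ turns the hinge applied to the supremum into an \emph{unclipped} supremum carrying the extra functional $g_t$ --- to which Lemma~\ref{lem:structural-rademacher} applies directly. It is this hinge-function encoding via the functional $g_t$, not a convex-$\tau$ analogue of the scalar casework, that proves the theorem.
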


The following lemma is due to \citep{maurer2016vector}. We provide here the proof for completeness.
\begin{lemma}\label{lem:structural-rademacher}
  Let $\fcal$ be a class of functions $f:\zcal\mapsto\rbb^d$ and $g$ be any functional defined on $\fcal$.
  Assume that $\tilde{g}_1,\ldots,\tilde{g}_n:\rbb^d\to\rbb$ are $G$-Lipschitz continuous w.r.t. $\|\cdot\|_2$.
  Then,
  \begin{equation}\label{structural-rademacher}
    \ebb_{\bm{\epsilon}\sim\{\pm1\}^n}\sup_{f\in\fcal}\big[g(f)+\sum_{i=1}^{n}\epsilon_i\tilde{g}_i(f(\bx_i))\big] \leq \ebb_{\bm{\epsilon}\sim\{\pm1\}^{nd}}\sup_{f\in\fcal}\Big[g(f)+G\sqrt{2}\sum_{i=1}^{n}\sum_{j=1}^{d}\epsilon_{i,j}f_j(\bx_i)\Big].
  \end{equation}
\end{lemma}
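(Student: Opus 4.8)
The plan is to prove Lemma~\ref{lem:structural-rademacher} by induction on $n$, peeling off one Rademacher variable at a time and absorbing each newly created block of Rademacher terms into the arbitrary functional $g$; this is precisely why the statement carries a free functional $g$ rather than being stated for the plain Rademacher complexity. The base case $n=0$ is the identity $\sup_{f\in\fcal}g(f)=\sup_{f\in\fcal}g(f)$.

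The heart of the argument is a \emph{single-term replacement}: for every functional $\tilde g$ on $\fcal$,
\[
\ebb_{\epsilon_n}\sup_{f\in\fcal}\big[\tilde g(f)+\epsilon_n\tilde g_n(f(\bx_n))\big]
\leq
\ebb_{\bm\sigma\sim\{\pm1\}^d}\sup_{f\in\fcal}\Big[\tilde g(f)+G\sqrt2\sum_{j\in[d]}\sigma_j f_j(\bx_n)\Big].
\]
Granting this, the inductive step runs as follows: conditioning on $\epsilon_1,\dots,\epsilon_{n-1}$, apply the replacement with $\tilde g(f)=g(f)+\sum_{i<n}\epsilon_i\tilde g_i(f(\bx_i))$; then, for each fixed realization of the new block $\bm\sigma$, regard $g'(f)=g(f)+G\sqrt2\sum_{j\in[d]}\sigma_j f_j(\bx_n)$ as a new functional and apply the induction hypothesis to the remaining $n-1$ Lipschitz terms $\epsilon_i\tilde g_i(f(\bx_i))$, $i<n$; finally take expectations over $\bm\sigma$ and over $\epsilon_1,\dots,\epsilon_{n-1}$. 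Tracking indices, the $n$ peeling steps generate $nd$ mutually independent Rademacher variables, giving exactly the right-hand side of \eqref{structural-rademacher}.

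For the single-term replacement I would symmetrize in $\epsilon_n$ and introduce a second copy of the supremum variable:
\[
\ebb_{\epsilon_n}\sup_f\big[\tilde g(f)+\epsilon_n\tilde g_n(f(\bx_n))\big]
=\tfrac12\sup_{f^1,f^2}\Big[\tilde g(f^1)+\tilde g(f^2)+\tilde g_n(f^1(\bx_n))-\tilde g_n(f^2(\bx_n))\Big].
\]
Now bound $\tilde g_n(f^1(\bx_n))-\tilde g_n(f^2(\bx_n))\leq G\|f^1(\bx_n)-f^2(\bx_n)\|_2$ by $G$-Lipschitz continuity, apply the Khintchine--Kahane inequality \eqref{kkahane} in the form $\|\mathbf t\|_2\leq\sqrt2\,\ebb_{\bm\sigma}\big|\sum_{j\in[d]}\sigma_j t_j\big|$ to $\mathbf t=f^1(\bx_n)-f^2(\bx_n)$, and pull the $\bm\sigma$-expectation outside the supremum. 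It then suffices, for each fixed $\bm\sigma$, to bound $\sup_{f^1,f^2}\big[\tilde g(f^1)+\tilde g(f^2)+G\sqrt2\,|\sum_{j}\sigma_j(f^1_j(\bx_n)-f^2_j(\bx_n))|\big]$; writing $|a|=\max(a,-a)$ lets the suprema over $f^1$ and $f^2$ decouple, so this equals $A(\bm\sigma)+B(\bm\sigma)$ where $A(\bm\sigma)=\sup_f[\tilde g(f)+G\sqrt2\sum_j\sigma_j f_j(\bx_n)]$ and $B(\bm\sigma)=A(-\bm\sigma)$. Since $\bm\sigma$ and $-\bm\sigma$ have the same distribution, $\ebb_{\bm\sigma}[A(\bm\sigma)+B(\bm\sigma)]=2\,\ebb_{\bm\sigma}A(\bm\sigma)$, and the $\tfrac12$ cancels the $2$.

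The main obstacle is this last step: one must resolve the absolute value $|\sum_j\sigma_j(f^1_j-f^2_j)|$ so that the supremum over $f^1$ genuinely decouples from the one over $f^2$ \emph{without} the constant degrading past $\sqrt2$ --- a crude triangle-inequality split would lose a factor --- and this is where the $\bm\sigma\leftrightarrow-\bm\sigma$ symmetry is essential. A secondary, purely bookkeeping, difficulty is justifying each application of the induction hypothesis after conditioning on the already-generated Rademacher variables and confirming the final count of $nd$ independent signs; both are routine once the single-term replacement is established.
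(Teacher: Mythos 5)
Your proposal is correct and follows essentially the same route as the paper: symmetrize in $\epsilon_n$ to introduce a ghost copy $\tilde f$, bound the increment via $G$-Lipschitz continuity and the Khintchine--Kahane inequality \eqref{kkahane}, pull the $\bm\sigma$-expectation out of the supremum, remove the absolute value by the $f\leftrightarrow\tilde f$ symmetry of the supremum, decouple and use distributional invariance of $\bm\sigma\leftrightarrow-\bm\sigma$, then peel off the remaining $\epsilon_{n-1},\dots,\epsilon_1$ inductively. Your write-up is in fact a bit more explicit than the paper's about why the free functional $g$ is carried along (so the induction hypothesis can absorb previously generated Rademacher blocks) and about the $A(\bm\sigma)+A(-\bm\sigma)$ bookkeeping that makes the constant stay at $\sqrt2$.
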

\begin{proof}
   We prove this result by induction.
   According to the symmetry between $f$ and $\tilde{f}$, we derive
  \begin{align}
    & \ebb_{\epsilon_n}\sup_{f\in\fcal}\big[g(f)+\sum_{i=1}^{n}\epsilon_i\tilde{g}_i(f(\bx_i))\big] \notag\\
    & = \frac{1}{2}\sup_{f,\tilde{f}\in \fcal}\Big[g(f)+g(\tilde{f})+\sum_{i=1}^{n-1}\epsilon_i\tilde{g}_i(f(\bx_i))+\sum_{i=1}^{n-1}\epsilon_i\tilde{g}_i(\tilde{f}(\bx_i))+\tilde{g}_n(f(\bx_n))-\tilde{g}_n(\tilde{f}(\bx_n))\Big] \notag\\
    & = \frac{1}{2}\sup_{f,\tilde{f}\in \fcal}\Big[g(f)+g(\tilde{f})+\sum_{i=1}^{n-1}\epsilon_i\tilde{g}_i(f(\bx_i))+\sum_{i=1}^{n-1}\epsilon_i\tilde{g}_i(\tilde{f}(\bx_i))+\big|\tilde{g}_n(f(\bx_n))-\tilde{g}_n(\tilde{f}(\bx_n))\big|\Big],\label{structural-rademacher-1}
  \end{align}
  According to the Lipschitz property and Eq. \eqref{kkahane},
  we derive
  \begin{align*}
    \big|\tilde{g}_n(f(\bx_n))-\tilde{g}_n(\tilde{f}(\bx_n))\big| & \leq  G\big\|f(\bx_n)-\tilde{f}(\bx_n)\big\|_2
     \leq  G\sqrt{2}\ebb_{\epsilon_{n,1},\ldots,\epsilon_{n,j}}\big|\sum_{j=1}^{d}\epsilon_{n,j}\big[f_j(\bx_n)-\tilde{f}_j(\bx_n)\big]\big|.
  \end{align*}
  Plugging the above inequality back into \eqref{structural-rademacher-1} and using the Jensen's inequality, we get
  \begin{align}
    & \ebb_{\epsilon_n}\sup_{f\in\fcal}\big[g(f)+\sum_{i=1}^{n}\epsilon_i\tilde{g}_i(f(\bx_i))\big] \notag\\
    & \leq \frac{1}{2}\ebb_{\epsilon_{n,1},\ldots,\epsilon_{n,j}}\sup_{f,\tilde{f}\in \fcal}\Big[g(f)+g(\tilde{f})+\sum_{i=1}^{n-1}\epsilon_i\tilde{g}_i(f(\bx_i))+\sum_{i=1}^{n-1}\epsilon_i\tilde{g}_i(\tilde{f}(\bx_i))+G\sqrt{2}\big|\sum_{j=1}^{d}\epsilon_{n,j}\big[f_j(\bx_n)-\tilde{f}_j(\bx_n)\big]\big|\Big] \notag\\
    & = \frac{1}{2}\ebb_{\epsilon_{n,1},\ldots,\epsilon_{n,j}}\sup_{f,\tilde{f}\in \fcal}\Big[g(f)+g(\tilde{f})+\sum_{i=1}^{n-1}\epsilon_i\tilde{g}_i(f(\bx_i))+\sum_{i=1}^{n-1}\epsilon_i\tilde{g}_i(\tilde{f}(\bx_i))+G\sqrt{2}\sum_{j=1}^{d}\epsilon_{n,j}\big[f_j(\bx_n)-\tilde{f}_j(\bx_n)\big]\Big]\notag\\
    & = \ebb_{\epsilon_{n,1},\ldots,\epsilon_{n,j}}\sup_{f\in\fcal}\Big[g(f)+\sum_{i=1}^{n-1}\epsilon_i\tilde{g}_i(f(\bx_i))+G\sqrt{2}\sum_{j=1}^{d}\epsilon_{n,j}f_j(\bx_n)\Big],\notag
  \end{align}
  where we have used the symmetry in the second step.

  The stated result can be derived by continuing the above deduction with expectation over $\epsilon_{n-1}$, $\epsilon_{n-2}$ and so on.
\end{proof}

To prove Theorem \ref{lem:struct-ext}, we introduce the following lemmas on the approximation of a continuous, non-decreasing and convex function. Let $a_+=\max\{a,0\}$.
\begin{lemma}\label{lem:convex-representation}
  Let $f:[a,b]\to\rbb_+$ be a continuous, non-decreasing and convex function and $m\geq2$. Let $a=x_1<\cdots<x_m=b$. Then the function $\tilde{g}:[a,b]\to\rbb$ defined by
  $$
    \tilde{g}(x)=f(x_k)+\frac{f(x_{k+1})-f(x_k)}{x_{k+1}-x_k}(x-x_k),\quad\text{if }x\in[x_k,x_{k+1}]
  $$
  belongs to the set
  \begin{equation}\label{Lambda}
  H_{[a,b]}:=\Big\{c_0+\sum_{i=1}^{m}c_i(x-t_i)_+:c_i\geq0,i\in[n],t_i\in\rbb,m\in\nbb,x\in[a,b]\Big\}.
  \end{equation}
\end{lemma}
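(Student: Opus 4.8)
The plan is to exhibit $\tilde g$ explicitly in the hinge form defining $H_{[a,b]}$, reading the coefficients off from the chordal slopes of $f$. First I would introduce the discrete slopes $s_k := \frac{f(x_{k+1})-f(x_k)}{x_{k+1}-x_k}$ for $k\in[m-1]$, so that by construction $\tilde g$ is the continuous piecewise-linear function with $\tilde g(x_1)=f(x_1)$, breakpoints at $x_2,\dots,x_{m-1}$, and slope $s_k$ on $[x_k,x_{k+1}]$. The candidate representation is then
\[
\tilde g(x) \;=\; f(x_1) \;+\; s_1\,(x-x_1)_+ \;+\; \sum_{k=2}^{m-1}(s_k-s_{k-1})\,(x-x_k)_+,
\]
i.e.\ one takes $c_0=f(x_1)$, hinge points $t_k=x_k$ for $k=1,\dots,m-1$, $c_1=s_1$, and $c_k=s_k-s_{k-1}$ for $k=2,\dots,m-1$ (the degenerate case $m=2$ being simply $\tilde g(x)=f(x_1)+s_1(x-x_1)_+$). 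Fewer than $m$ hinge terms is allowed by the definition of $H_{[a,b]}$, so there is no bookkeeping issue there.

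Next I would verify the two requirements for membership in $H_{[a,b]}$. \emph{Non-negativity of the coefficients:} $c_0=f(x_1)\ge 0$ since $f$ takes values in $\rbb_+$; $c_1=s_1\ge 0$ since $f$ is non-decreasing; and $c_k=s_k-s_{k-1}\ge 0$ for $k\ge 2$ because convexity of $f$ is equivalent to the chordal slopes over consecutive intervals being non-decreasing. This last equivalence is the one substantive point of the whole argument, and the only place where the convexity hypothesis is genuinely used. \emph{Correctness of the identity:} I would argue by induction on the subinterval index. Fixing $x\in[x_k,x_{k+1}]$, one has $(x-x_j)_+=x-x_j$ for $j\le k$ and $(x-x_j)_+=0$ for $j>k$, so the right-hand side is affine on $[x_k,x_{k+1}]$ with slope $\sum_{j=1}^{k}c_j=s_1+\sum_{j=2}^{k}(s_j-s_{j-1})=s_k$ by telescoping; evaluating at the left endpoint $x_k$ returns the value of the right-hand side on the previous interval $[x_{k-1},x_k]$, which by the induction hypothesis equals $\tilde g(x_k)=f(x_k)$ (the base case $k=1$ is immediate, since at $x_1$ all hinges vanish and the right-hand side equals $c_0=f(x_1)$). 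Hence on $[x_k,x_{k+1}]$ the right-hand side equals $f(x_k)+s_k(x-x_k)=\tilde g(x)$.

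Alternatively, one could bypass the induction entirely: both sides of the displayed identity are continuous, piecewise linear with the same breakpoints $x_2,\dots,x_{m-1}$, have the same slope $s_k$ on each piece, and agree at $x_1$, hence coincide on all of $[a,b]$. Either route concludes that $\tilde g\in H_{[a,b]}$. I do not anticipate a real obstacle here — the proof is essentially bookkeeping, and the single nontrivial ingredient is the equivalence ``$f$ convex $\iff$ chordal slopes non-decreasing,'' which is exactly what delivers $c_k\ge 0$.
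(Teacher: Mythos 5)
Your proof is correct and follows essentially the same route as the paper: write $\tilde g$ as $f(x_1)+s_1(x-x_1)_+ + \sum_{k\ge 2}(s_k-s_{k-1})(x-x_k)_+$ and observe that $s_1\ge 0$ by monotonicity and $s_k-s_{k-1}\ge 0$ by convexity of the chordal slopes. The paper reaches the same hinge representation by first introducing an auxiliary sum of differences $(x-x_i)_+-(x-x_{i+1})_+$ and then regrouping, whereas you write the collected form directly and verify the identity by telescoping, but the decomposition and the use of convexity are identical.
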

\begin{proof}
  Define
  $$
    \bar{f}(x)=f(x_1)+\sum_{i=1}^{m-1}\frac{f(x_{i+1})-f(x_i)}{x_{i+1}-x_i}\big[(x-x_i)_+-(x-x_{i+1})_+\big].
  $$
  We first show that $\bar{f}(x)=\tilde{g}(x)$ for all $x\in[a,b]$. Suppose that $x\in[x_k,x_{k+1})$. Then, it is clear that
  \begin{align*}
    \bar{f}(x) & =f(x_1)+\sum_{i=1}^{k-1}\frac{f(x_{i+1})-f(x_i)}{x_{i+1}-x_i}\big[(x-x_i)_+-(x-x_{i+1})_+\big]+\frac{f(x_{k+1})-f(x_k)}{x_{k+1}-x_k}\big[(x-x_k)_+-(x-x_{k+1})_+\big]\\
     & \qquad +\sum_{i=k+1}^{m-1}\frac{f(x_{i+1})-f(x_i)}{x_{i+1}-x_i}\big[(x-x_i)_+-(x-x_{i+1})_+\big]\\
     & = f(x_1)+\sum_{i=1}^{k-1}\frac{f(x_{i+1})-f(x_i)}{x_{i+1}-x_i}\big[(x-x_i)-(x-x_{i+1})\big]+\frac{f(x_{k+1})-f(x_k)}{x_{k+1}-x_k}\big[(x-x_k)-0\big]\\
     & = f(x_k) + \frac{f(x_{k+1})-f(x_k)}{x_{k+1}-x_k}(x-x_k)=\tilde{g}(x).
  \end{align*}

  We now show that $\bar{f}(x)$ belongs to the set $H_{[a,b]}$. Indeed, it follows from $(x-x_m)_+=0$ for all $x\leq x_m=b$ that
  \begin{align*}
    \bar{f}(x) & = f(x_1) + \sum_{i=1}^{m-1}\frac{f(x_{i+1})-f(x_i)}{x_{i+1}-x_i}(x-x_i)_+-\sum_{i=2}^m\frac{f(x_i)-f(x_{i-1})}{x_i-x_{i-1}}(x-x_i)_+ \\
     & = f(x_1) + \frac{f(x_2)-f(x_1)}{x_2-x_1}(x-x_1)_++\sum_{i=2}^{m-1}\Big[\frac{f(x_{i+1})-f(x_i)}{x_{i+1}-x_i}-\frac{f(x_i)-f(x_{i-1})}{x_i-x_{i-1}}\Big](x-x_i)_+.
  \end{align*}
  Therefore, $\bar{f}(x)$ can be written as
  $
    \bar{f}(x) = c_0+\sum_{i=1}^{m-1}c_i(x-t_i)_+
  $ with $t_i=x_i,c_0=f(x_1),c_1=\frac{f(x_2)-f(x_1)}{x_2-x_1}$  and $c_i=\frac{f(x_{i+1})-f(x_i)}{x_{i+1}-x_i}-\frac{f(x_i)-f(x_{i-1})}{x_i-x_{i-1}},i=2,\ldots,m-1$.
  The terms $c_1,\ldots,c_{m-1}$ are all non-negative since $f$ is non-decreasing and convex. The proof is completed.
\end{proof}

\begin{lemma}\label{lem:convex-hull}
  If $f:[a,b]\to\rbb_+$ is continuous, non-decreasing and convex, then $f$ belongs to the closure of $H_{[a,b]}$ defined in Eq. \eqref{Lambda}.
\end{lemma}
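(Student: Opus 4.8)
The plan is to show that $f$ is a uniform limit on $[a,b]$ of functions of the form provided by Lemma \ref{lem:convex-representation}, each of which already lies in $H_{[a,b]}$; this immediately places $f$ in the closure of $H_{[a,b]}$ (with respect to the sup-norm on $[a,b]$). So the whole argument reduces to controlling the approximation error of piecewise-linear interpolants of $f$ as the partition is refined.

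First I would fix $\varepsilon>0$. Since $f$ is continuous on the compact interval $[a,b]$, it is uniformly continuous, so there is $\delta>0$ such that $|f(x)-f(y)|\le\varepsilon$ whenever $|x-y|\le\delta$. Then I choose a partition $a=x_1<x_2<\cdots<x_m=b$ (with $m\ge 2$) whose mesh satisfies $\max_{k}(x_{k+1}-x_k)\le\delta$, and let $\tilde g$ be the piecewise-linear function interpolating $f$ at $x_1,\dots,x_m$ as in Lemma \ref{lem:convex-representation}. That lemma gives $\tilde g\in H_{[a,b]}$, so it suffices to bound $\sup_{x\in[a,b]}|f(x)-\tilde g(x)|$.

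Next I would estimate the error on a single subinterval $[x_k,x_{k+1}]$. For $x$ in this subinterval, $\tilde g$ is the chord of $f$ through the endpoints $(x_k,f(x_k))$ and $(x_{k+1},f(x_{k+1}))$; by convexity of $f$ the chord lies above the graph, so $f(x)\le\tilde g(x)$. On the other hand $\tilde g(x)$ is a convex combination of $f(x_k)$ and $f(x_{k+1})$, hence $\tilde g(x)\le\max\{f(x_k),f(x_{k+1})\}=f(x_{k+1})$ because $f$ is non-decreasing; and likewise $f(x)\ge f(x_k)$. Combining these, $0\le\tilde g(x)-f(x)\le f(x_{k+1})-f(x_k)\le\varepsilon$ by the choice of $\delta$. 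Since $x$ and $k$ are arbitrary, $\sup_{x\in[a,b]}|f(x)-\tilde g(x)|\le\varepsilon$, and as $\varepsilon>0$ was arbitrary, $f$ belongs to the closure of $H_{[a,b]}$.

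The argument is essentially routine; the only place that needs a little care is the two-sided sandwich $f(x_k)\le f(x)\le\tilde g(x)\le f(x_{k+1})$, which genuinely uses both hypotheses — convexity for the upper (chord-above-graph) bound and monotonicity for the endpoint comparisons. If one wanted the closure in a weaker sense (pointwise convergence), the same partition argument works verbatim; I would phrase it in the uniform norm since that is what the construction delivers and is what Theorem \ref{lem:struct-ext} needs when it writes a general convex nondecreasing $\tau$ as a limit of elements of $H_{[a,b]}$.
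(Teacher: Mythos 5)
Your proof is correct and follows essentially the same route as the paper's: approximate $f$ by its piecewise-linear interpolants (which lie in $H_{[a,b]}$ by Lemma~\ref{lem:convex-representation}), and bound the error on each subinterval by the oscillation $f(x_{k+1})-f(x_k)$ using convexity for $f\le\tilde g$ and monotonicity for the endpoint comparisons. The only cosmetic difference is that the paper chooses partition points so that $f$ increases by at most $(f(b)-f(a))/n$ on each subinterval, whereas you invoke uniform continuity to make the mesh small; both give the same uniform error bound, and your phrasing in terms of the sup-norm is in fact slightly cleaner than the paper's statement of pointwise convergence.
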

\begin{proof}
  Let $m\in\nbb$. We can find $a=x_1^{(m)}<x_2^{(m)}<\cdots<x_{n+1}^{(m)}=b$ such that
  $$
    f(x_k^{(m)})-f(x_{k-1}^{(m)})\leq\frac{f(b)-f(a)}{n}.
  $$
  Introduce
  $$
    f^{(m)}(x):=f(x_k^{(m)})+\frac{f(x_{k+1}^{(m)})-f(x_k^{(m)})}{x_{k+1}^{(m)}-x_k^{(m)}}(x-x_k^{(m)})\quad\text{if }x\in[x_k^{(m)},x_{k+1}^{(m)}].
  $$
  For any $x\in[x_k^{(m)},x_{k+1}^{(m)}]$, it follows from the convexity of $f$ that
  \begin{align*}
    |f^{(m)}(x)-f(x)| & = \Big|f(x_k^{(m)})-f(x)+\frac{\big(f(x_{k+1}^{(m)})-f(x_k^{(m)})\big)(x-x_k^{(m)})}{x_{k+1}^{(m)}-x_k^{(m)}}\Big| \\
     & = f(x_k^{(m)}) - f(x) + \frac{\big(f(x_{k+1}^{(m)})-f(x_k^{(m)})\big)(x-x_k^{(m)})}{x_{k+1}^{(m)}-x_k^{(m)}} \\
     & \leq \frac{\big(f(x_{k+1}^{(m)})-f(x_k^{(m)})\big)(x-x_k^{(m)})}{x_{k+1}^{(m)}-x_k^{(m)}}\leq \frac{f(b)-f(a)}{n},
  \end{align*}
  from which we know $\lim_{n\to\infty}|f^{(m)}(x)-f(x)|=0$ for all $x\in[a,b]$. Lemma \ref{lem:convex-representation} shows that $f^{(m)}\in H_{[a,b]}$ for all $m\in\nbb$.
  Therefore, $f$ belongs to the closure of $H_{[a,b]}$.
  The proof is completed.
\end{proof}

\begin{proof}[Proof of Theorem \ref{lem:struct-ext}]
According to the boundedness assumption of $f\in\fcal$ and the fact $\bm{0}\in \fcal$, there exist $B>0$ such that
\begin{multline*}
  0\leq \min\Big\{\sup_{f\in\fcal}\sum_{i=1}^{n}\epsilon_i\tilde{g}_i(f(\bx_i)),G\sqrt{2}\sup_{f\in\fcal}\sum_{i=1}^{n}\sum_{j=1}^{d}\epsilon_{i,j}f_j(\bx_i)\Big\}\\
  \leq\max\Big\{\sup_{f\in\fcal}\sum_{i=1}^{n}\epsilon_i\tilde{g}_i(f(\bx_i)),G\sqrt{2}\sup_{f\in\fcal}\sum_{i=1}^{n}\sum_{j=1}^{d}\epsilon_{i,j}f_j(\bx_i)\Big\}\leq B
\end{multline*}
for all $\bm{\epsilon}\in\{\pm1\}^n$.
Let $t\in\rbb$ be an arbitrary number. Define $g_t:\fcal\mapsto\rbb$ by $g_t(f)=0$ for any $f\neq \bm{0}$
and $g_t(\bm{0})=t$. It is clear that
$$
  \ebb_{\bm{\epsilon}\sim\{\pm1\}^{n}}\sup_{f\in\fcal}\big[g_t(f)+\sum_{i=1}^{n}\epsilon_i\tilde{g}_i(f(\bx_i))\big] =
  \ebb_{\bm{\epsilon}\sim\{\pm1\}^{n}}\max\Big\{\sup_{f\in\fcal:f\neq\bm{0}}\big[\sum_{i=1}^{n}\epsilon_i\tilde{g}_i(f(\bx_i))\big],t\Big\}
$$
and
$$
  \ebb_{\bm{\epsilon}\sim\{\pm1\}^{nd}}\sup_{f\in\fcal}\big[g_t(f)+G\sqrt{2}\sum_{i=1}^{n}\sum_{j=1}^{d}\epsilon_{i,j}f_j(\bx_i)\big] =
  \ebb_{\bm{\epsilon}\sim\{\pm1\}^{nd}}\max\Big\{G\sqrt{2}\sup_{f\in\fcal:f\neq\bm{0}}\big[\sum_{i=1}^{n}\sum_{j=1}^{d}\epsilon_{i,j}f_j(\bx_i)\big],t\Big\}.
$$
Plugging the above identities into \eqref{structural-rademacher} with $g=g_t$ gives
$$
  \ebb_{\bm{\epsilon}\sim\{\pm1\}^{n}}\max\Big\{\sup_{f\in\fcal:f\neq\bm{0}}\big[\sum_{i=1}^{n}\epsilon_i\tilde{g}_i(f(\bx_i))\big],t\Big\} \leq \ebb_{\bm{\epsilon}\sim\{\pm1\}^{nd}}\max\Big\{G\sqrt{2}\sup_{f\in\fcal:f\neq\bm{0}}\big[\sum_{i=1}^{n}\sum_{j=1}^{d}\epsilon_{i,j}f_j(\bx_i)\big],t\Big\}.
$$
If $t\geq0$, the above inequality is equivalent to
\begin{equation}\label{struct-ext-1}
  \ebb_{\bm{\epsilon}\sim\{\pm1\}^{n}}\max\Big\{\sup_{f\in\fcal}\big[\sum_{i=1}^{n}\epsilon_i\tilde{g}_i(f(\bx_i))\big],t\Big\} \leq \ebb_{\bm{\epsilon}\sim\{\pm1\}^{nd}}\max\Big\{G\sqrt{2}\sup_{f\in\fcal}\big[\sum_{i=1}^{n}\sum_{j=1}^{d}\epsilon_{i,j}f_j(\bx_i)\big],t\Big\}
\end{equation}
by noting $\tilde{g}_i(\bm{0})=0$ for all $i\in\nbb_n$.
If $t<0$, it follows from \eqref{structural-rademacher} with $g(f)=0$ that
\begin{align*}
  & \ebb_{\bm{\epsilon}\sim\{\pm1\}^{n}}\max\Big\{\sup_{f\in\fcal}\big[\sum_{i=1}^{n}\epsilon_i\tilde{g}_i(f(\bx_i))\big],t\Big\} \\
  & = \ebb_{\bm{\epsilon}\sim\{\pm1\}^{n}}\sup_{f\in\fcal}\big[\sum_{i=1}^{n}\epsilon_i\tilde{g}_i(f(\bx_i))\big]
   \leq \ebb_{\bm{\epsilon}\sim\{\pm1\}^{nd}}\sup_{f\in\fcal}\Big[G\sqrt{2}\sum_{i=1}^{n}\sum_{j=1}^{d}\epsilon_{i,j}f_j(\bx_i)\Big] \\
   & = \ebb_{\bm{\epsilon}\sim\{\pm1\}^{nd}}\max\Big\{G\sqrt{2}\sup_{f\in\fcal}\big[\sum_{i=1}^{n}\sum_{j=1}^{d}\epsilon_{i,j}f_j(\bx_i)\big],t\Big\},
\end{align*}
where we have used $\tilde{g}_i(\bm{0})=0$ for all $i\in\nbb_n$ in the first identity.
That is, \eqref{struct-ext-1} holds for all $t\in\rbb$. Subtracting $t$ from both sides of Eq. \eqref{struct-ext-1} gives
\begin{equation}\label{struct-ext-2}
  \ebb_{\bm{\epsilon}\sim\{\pm1\}^{n}}\Big(\sup_{f\in\fcal}\sum_{i=1}^{n}\epsilon_i\tilde{g}_i(f(\bx_i))-t\Big)_+ \leq \ebb_{\bm{\epsilon}\sim\{\pm1\}^{nd}}\Big(G\sqrt{2}\sup_{f\in\fcal}\big[\sum_{i=1}^{n}\sum_{j=1}^{d}\epsilon_{i,j}f_j(\bx_i)\big]-t\Big)_+,
  \quad\forall t\in\rbb,
\end{equation}
from which we know
$$
  \ebb_{\bm{\epsilon}\sim\{\pm1\}^{n}}\tilde{\tau}\Big(\sup_{f\in\fcal}\sum_{i=1}^{n}\epsilon_i\tilde{g}_i(f(\bx_i))\Big)\leq
    \ebb_{\bm{\epsilon}\sim\{\pm1\}^{nd}}\tilde{\tau}\Big(G\sqrt{2}\sup_{f\in\fcal}\sum_{i=1}^{n}\sum_{j=1}^{d}\epsilon_{i,j}f_j(\bx_i)\Big),\quad\forall\tilde{\tau}\in H_{[0,B]}.
$$
According to Lemma \ref{lem:convex-hull}, we know $\tau:[0,B]\to\rbb_+$ belongs to the closure of $H_{[0,B]}$. Therefore, Eq. \eqref{struct-ext} holds.
The proof is completed.
\end{proof}

\section{Lipschitz Continuity of Loss Functions}
The following proposition is known in the literature~\citep{lei2019data}. We prove it for completeness.
\begin{proposition}\label{prop:loss-lip}
\begin{enumerate}
  \item[(a)] Let $\ell$ be defined as Eq. \eqref{loss-a}. Then $\ell$ is $1$-Lipschitz continuous w.r.t. $\|\cdot\|_\infty$ and $1$-Lipschitz continuous w.r.t. $\|\cdot\|_2$.
  \item[(b)] Let $\ell$ be defined as Eq. \eqref{loss-b}. Then $\ell$ is $1$-Lipschitz continuous w.r.t. $\|\cdot\|_\infty$ and $1$-Lipschitz continuous w.r.t. $\|\cdot\|_2$.
\end{enumerate}
\end{proposition}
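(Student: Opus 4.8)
The plan is to treat the two loss functions separately, in each case reducing the Lipschitz estimate either to an elementary bound on a gradient or to a composition of $1$-Lipschitz maps, and then to pass between the $\ell_\infty$ and $\ell_2$ statements using only the norm inequalities $\|\cdot\|_\infty\le\|\cdot\|_2$ and $\|\cdot\|_2\le\|\cdot\|_1$.

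For the hinge loss in Eq.~\eqref{loss-a}, I would first observe that $\bv\mapsto\max_{i\in[k]}\{-v_i\}$ is $1$-Lipschitz with respect to $\|\cdot\|_\infty$: for any $\bv,\bv'\in\rbb^k$ one has $\max_{i\in[k]}\{-v_i\}-\max_{i\in[k]}\{-v_i'\}\le\max_{i\in[k]}\{v_i'-v_i\}\le\|\bv-\bv'\|_\infty$, and the roles of $\bv$ and $\bv'$ may be interchanged. Composing with $t\mapsto(1+t)_+$, which is $1$-Lipschitz on $\rbb$, shows that $\ell$ is $1$-Lipschitz with respect to $\|\cdot\|_\infty$. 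Since $\|\mathbf a\|_\infty\le\|\mathbf a\|_2$ for every $\mathbf a\in\rbb^k$, the $\|\cdot\|_2$ bound follows at once.

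For the logistic loss in Eq.~\eqref{loss-b}, I would argue via the gradient. Because $1+\sum_{i\in[k]}\exp(-v_i)>0$ on all of $\rbb^k$, the function $\ell$ is differentiable everywhere with $\partial\ell(\bv)/\partial v_i=-p_i(\bv)$, where $p_i(\bv)=\exp(-v_i)\big/\big(1+\sum_{j\in[k]}\exp(-v_j)\big)$. The vector $\bp(\bv)=(p_1(\bv),\ldots,p_k(\bv))^\top$ has nonnegative entries with $\sum_{i\in[k]}p_i(\bv)\le1$, so $\|\nabla\ell(\bv)\|_1=\|\bp(\bv)\|_1\le1$, and hence also $\|\nabla\ell(\bv)\|_2\le\|\bp(\bv)\|_1\le1$. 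Writing $\ell(\bv)-\ell(\bv')=\int_0^1\nabla\ell\big(\bv'+s(\bv-\bv')\big)^\top(\bv-\bv')\,ds$ and applying H\"older's inequality with the conjugate pairs $(\infty,1)$ and $(2,2)$ then yields $|\ell(\bv)-\ell(\bv')|\le\|\bv-\bv'\|_\infty$ and $|\ell(\bv)-\ell(\bv')|\le\|\bv-\bv'\|_2$, respectively.

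This argument is essentially routine; the only step requiring a little care is the uniform bound $\sum_{i\in[k]}p_i(\bv)\le1$ in the logistic case, which is exactly where the constant $1$ inside the logarithm is used. Everything else is bookkeeping with norm inequalities, and I expect no serious obstacle.
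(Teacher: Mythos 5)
Your proof is correct and follows essentially the same route as the paper: for the hinge loss you reduce to the $1$-Lipschitz property of the max operator with respect to $\|\cdot\|_\infty$ and pass to $\|\cdot\|_2$ via $\|\cdot\|_\infty\le\|\cdot\|_2$, and for the logistic loss you bound $\|\nabla\ell\|_1$ by $1$, exactly as the paper does. The only cosmetic difference is that you spell out the mean-value integral and the norm comparisons $\|\cdot\|_2\le\|\cdot\|_1$ explicitly, whereas the paper leaves those routine steps implicit.
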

\begin{proof}
  We first prove Part (a). For any $\bv$ and $\bv'$, we have
  \begin{align*}
  |\ell(\bv)-\ell(\bv')|&=\big|\max\big\{0,1+\max_{i\in[k]}\{-v_i\}\big\}-\max\big\{0,1+\max_{i\in[k]}\{-v'_i\}\big\}\big|\\
  &\leq |\max_{i\in[k]}\{-v_i\}-\max_{i\in[k]}\{-v'_i\}|\leq \max_{i\in[k]}|v_i-v'_i|=\|\bv-\bv'\|_\infty,
  \end{align*}
  where we have used the elementary inequality
  \[
  |\max_{i\in[k]}a_i-\max_{i\in[k]}b_i|\leq \max_{i\in[k]}|a_i-b_i|.
  \]
  This proves Part (a).

  We now prove Part (b). It is clear that
  \[
  \frac{\partial\ell(\bv)}{\partial v_i}=\frac{-\exp(-v_i)}{1+\sum_{i\in[k]}\exp(-v_i)}.
  \]
  Therefore, the $\ell_1$ norm of the gradient can be bounded as follows
  \[
  \|\nabla \ell(\bv)\|_1\leq\frac{1}{1+\sum_{i\in[k]}\exp(-v_i)}\sum_{i\in[k]}\exp(-v_i)\leq1.
  \]
  This proves Part (b). The proof is completed.
\end{proof}

\end{document}